\newtheorem{assumption}{Assumption}
\newtheorem{lem}{Lemma}
\newtheorem{cor}{Corollary}
\newtheorem{oracle}{Oracle Assumption}
\crefname{lem}{Lemma}{Lemmas}
\crefname{assumption}{Assumption}{Assumptions}
\crefname{definition}{Definition}{Definitions}
\crefname{oracle}{Oracle Assumption}{Oracle Assumptions}
\crefname{cor}{Corollary}{Corollaries}
\crefname{thm}{Theorem}{Theorems}
\newcommand{\allPi}{\Tilde{\Pi}}
\newcommand{\allF}{\Tilde{\F}}
\newcommand{\alld}{\tilde{d}}
\newcommand{\hatR}{\hat{R}}
\newcommand{\Reg}{\text{Reg}}
\newcommand{\CReg}{\text{CReg}}
\newcommand{\hatReg}{\widehat{\text{Reg}}}
\newcommand{\hatf}{\hat{f}}
\newcommand{\hata}{\hat{a}}
\newcommand{\F}{\mathcal{F}}
\newcommand{\E}{\mathop{\mathbb{E}}}
\newcommand{\A}{\mathcal{A}}
\newcommand{\Xscript}{\mathcal{X}}
\newcommand{\ordO}{\mathcal{O}}
\newcommand{\ordOt}{\tilde{\mathcal{O}}}
\newcommand{\msafe}{m^*}
\newcommand{\msafealg}{\hat{m}}
\newcommand{\N}{\mathbb{N}}
\newcommand{\CVBandit}{\text{Mod-IGW}}
\newcommand{\ho}{\textbf{ho}}
\newcommand{\tr}{\textbf{tr}}
\newcommand{\MSOracle}{\text{EstOracle}}
\newcommand{\MisspecificationOracle}{\text{MTOracle}}
\newcommand{\EvalOracle}{\text{EvalOracle}}
\newcommand{\eventReg}{\mathcal{W}_1}
\newcommand{\eventMT}{\mathcal{W}_2}
\newcommand{\eventExpEval}{\mathcal{W}_2}
\newcommand\blfootnote[1]{%
	\begingroup
	\renewcommand\thefootnote{}\footnote{#1}%
	\addtocounter{footnote}{-1}%
	\endgroup
}
\begin{document}

\twocolumn[

\aistatstitle{Towards Costless Model Selection in Contextual Bandits: A Bias-Variance Perspective}

\aistatsauthor{ Sanath Kumar Krishnamurthy* \And Adrienne Margaret Propp* \And  Susan Athey }

\aistatsaddress{ Stanford University } ]

\begin{abstract}
Model selection in supervised learning provides costless guarantees as if the model that best balances bias and variance was known a priori. We study the feasibility of similar guarantees for cumulative regret minimization in the stochastic contextual bandit setting. Recent work \citep{marinov2021pareto} identifies instances where no algorithm can guarantee costless regret bounds. Nevertheless, we identify benign conditions where costless model selection is feasible: gradually increasing class complexity, and diminishing marginal returns for best-in-class policy value with increasing class complexity. Our algorithm is based on a novel misspecification test, and our analysis demonstrates the benefits of using model selection for reward estimation. Unlike prior work on model selection in contextual bandits, our algorithm carefully adapts to the evolving bias-variance trade-off as more data is collected. In particular, our algorithm and analysis go beyond adapting to the complexity of the simplest realizable class and instead adapt to the complexity of the simplest class whose estimation variance dominates the bias. For short horizons, this provides improved regret guarantees that depend on the complexity of simpler classes.
\end{abstract}

\section{INTRODUCTION}\label{sec:intro}

Contextual bandit algorithms are a fundamental tool for sequential decision making and have been the focus of an increasing amount of research in recent decades \citep{lattimore2020bandit}. These algorithms have been used in a wide range of applications from recommendation systems \citep{agarwal2016making} to mobile health \citep{tewari2017ads}.

We study the finite-armed, stochastic contextual bandit setting. In each round, the learner observes a feature vector, or context, drawn from a fixed distribution. The learner then selects an action and receives a reward that is a function of both the context and action. The data collected in each round is incorporated into the decision-making framework for the next round, with the goal of minimizing cumulative regret, i.e. maximizing the rewards received during the experiment.

A common approach to contextual bandits, which we call the regression-based approach, hinges on estimating the reward model. In each round, the data collected over prior rounds is used to estimate the true conditional expected reward for any context and action. When the next context is observed, the estimated reward is used to construct an action selection rule to balance two objectives: reduce uncertainty in the estimate for future rounds (exploration), and maximize the reward received in the current round (exploitation). This approach has led to the development of several contextual bandit algorithms \citep[e.g.][]{agrawal2013thompson, li2010contextual, foster2020beyond}. In general, the analyst specifies a model class $\F$ for the true reward model, and as data is gathered, the algorithm updates its selection from the class. When we assume realizability -- that is, that the true reward model lies in $\F$ -- these algorithms ensure optimal minimax guarantees on regret. \blfootnote{*Denotes significant/equal contributors.}

However, these algorithms do not specify how the model class $\F$ should be chosen, motivating recent work on model selection in contextual bandits \citep{agarwal2017corralling, foster2019model}. In a COLT 2020 open problem, \cite{foster2020open} pose a key question: given a set of $M$ nested model classes $\F_1,\F_2,\dots,\F_M$, such that at least one of these classes is realizable, can a contextual bandit algorithm achieve the best regret guarantees ensured by regression-based algorithms for a single model class? We refer to this as a costless model selection guarantee.\footnote{For brevity, we use the term costless guarantees to also capture near-costless guarantees, where we ignore terms logarithmic in number of model classes, model complexities, number of rounds, and confidence parameters.}

In this work, we present $\CVBandit$, an algorithm that achieves costless model selection under mild structural assumptions of: 1) gradually increasing class complexity, and 2) diminishing best-in-class policy improvement with increasing class complexity (diminshing marginal returns). As we discuss in \Cref{sec:prelim}, our assumptions reflect a natural setting for model selection. Further, even without such assumptions, our algorithm still achieves state-of-the-art (SOTA) guarantees (though not costless).

Our algorithm also addresses the bias-variance tradeoff inherent in contextual bandits \citep{foster2020adapting,krishnamurthy2021adapting, krishnamurthy2021tractable}, a topic that remains unexplored in the literature on model selection for contextual bandits. Existing work focuses on adapting to the complexity of the smallest realizable class, attempting to identify the single best-performing algorithm. We propose that attempting to find a single best-performing algorithm for all time horizons may not be the most effective strategy because simpler model classes provide better guarantees for shorter time horizons, while more expressive classes outperform in longer time horizons. We therefore argue that costless model selection should adapt to the simplest class where variance dominates bias. The difficulty in achieving such costless guarantees lies in detecting when the unknown bias of a class starts dominating its variance, and correcting for the potential under-exploration costs involved with delays in this detection.

We overcome this challenge to achieve our new definition of costless model selection through two main innovations:
\begin{enumerate}[topsep=0pt,itemsep=3pt,parsep=0pt]
    \item We develop a new misspecification test based on the accuracy of estimated reward models in evaluating policies from different classes. A key property of this test is that it fails (with high probability) after reward model bias dominates variance but before policy class bias dominates variance, allowing us to smoothly navigate the bias-variance trade-off. The misspecification test is of independent interest. Subsequent work has used this test to enable efficient pure exploration algorithms without assuming realizability \citep{krishnamurthy2023proportional}.
    \item We quantify the cost of potential under-exploration due to delays in detecting when reward model bias dominates variance, and develop a method called ``self-correction" to resolve any potential under-exploration.
\end{enumerate}

\subsection{Related work}

While \cite{marinov2021pareto} have already responded to the COLT 2020 open problem in the negative, we argue that this result is too pessimistic. Their specific counterexample of one very simple and one very complex class is an unfavorable setting for model selection and unnecessary to enforce in practice. Building on recent work quantifying the bias-variance tradeoff in contextual bandits \citep{foster2020adapting,krishnamurthy2021adapting,krishnamurthy2021tractable,carranza2023flexible} and literature reducing contextual bandit problems to supervised learning tasks 
\citep{langford2007epoch, dudik2011efficient, agarwal2012contextual, agarwal2014taming, foster2018practical, foster2020beyond} we show that it is indeed possible to achieve costless regret bounds under mild structural assumptions.
Existing algorithms for model selection in contextual bandits can generally be described as adopting either sequential \citep[e.g.][]{foster2019model} or parallel \citep[e.g.][]{agarwal2017corralling} search strategies. These are two alternative approaches to addressing the main challenge of model selection in contextual bandits: balancing exploration and exploitation in classes of increasing complexity. Both strategies consider bandit algorithms corresponding to models from each class $\F_i$, and try to identify the ``best" (simplest realizable) model class such that the corresponding algorithm minimizes regret.

Sequential search strategies have largely focused on model selection over a nested sequence of linear classes $\F_1\subset \F_2 \subset \dots \subset \F_M$ that are linear over a nested sequence of feature maps \citep[e.g.][]{foster2019model}. In this strategy, contextual bandits are run in sequence with increasing class complexity. For each model class $\F_i$, some share of rounds are devoted to sampling arms uniformly at random and testing for misspecification, with the ultimate goal of identifying the smallest realizable class, $\F_{i^*}$, for $i^*\in[M]$. %
To achieve costless guarantees, these strategies rely on stringent distributional assumptions for model identification known as diversity conditions -- i.e., they assume that the minimum eigenvalue of the covariance matrix for these feature maps is greater than some positive constant.\footnote{This may not be easily satisfied for feature maps with many correlated features.} This is in contrast to our algorithm which has no such requirements on the feature distribution.

Parallel search strategies use master algorithms \citep[e.g.][]{agarwal2017corralling} to run $M$ contextual bandit algorithms in parallel, one for each of the $M$ classes. The master algorithm allocates rounds to the $M$ base algorithms, learns which algorithm maximizes expected cumulative reward, and ultimately allocates most rounds to this algorithm. Since the introduction of this approach, several master algorithms have been proposed \citep[e.g.][]{arora2021corralling, pacchiano2020model,pacchiano2020regret}. However, none achieve costless model selection. %

\subsection{Preliminaries}\label{sec:prelim}

The stochastic contextual bandit setting is defined by a set of contexts $\Xscript$, a finite set of arms $\A=\{1,...,K\}$, and a distribution $D$ over contexts and arm rewards. At every time-step $t \in [T]$, nature samples a context $x_t\in\Xscript$ and reward vector $\smash{r_t \in [0,1]^K}$ from the fixed but unknown distribution $D$. Upon observing context $x_t$, the learner chooses an arm $a_t$ and receives a reward $r_t(a_t)$. Unless stated otherwise, all expectations are taken with respect to $D$.

We let $f^*:\Xscript \times \A \rightarrow [0,1]$ denote the true conditional expected reward function given contexts and actions; i.e. $f^*(x,a):=\E[r_t(a)|x_t=x]$. A model $f$ is a map from $\Xscript\times\A$ to $[0,1]$, and a model class $\F$ is a set whose elements are models. A policy $\pi$ is any function that maps contexts to a distribution over arms, and a policy class $\Pi$ is a set of policies. For deterministic policies, $\pi(x)$ denotes the arm recommended by policy $\pi$ at context $x$, and for randomized policies, $\pi(a|x)$ denotes the probability of sampling arm $a$ at context $x$.
For any model $f$, we let $\pi_f$ denote the deterministic policy induced by the model $f$, that is $\pi_f(x) := \arg\max_a f(x, a)$ for every $x$.\footnote{Where ties are broken with any fixed tie-breaking rule.} We let $\pi^*$ denote the policy that maximizes the conditional mean reward; i.e., $\pi^{*}(x) = \arg\max_a f^*(x, a)$.

We use the term exploration policy to refer to any randomized policy that our algorithm constructs for use in exploration. For any exploration policy $p$, we let $D(p)$ be the induced distribution over $\Xscript\times\A\times[0,1]$, where sampling $(x,a,r(a))\sim D(p)$ is equivalent to sampling $(x,r)\sim D$ and then sampling $a\sim p(\cdot|x)$. We let $p_t$ denote the exploration policy for round $t$.

For any model $f$ and policy $\pi$, we let $f(x,\pi(x)):=\E_{a\sim\pi(x)}[f(x,a)]$ at every context $x$, and we let $R_f(\pi)$ denote the expected instantaneous reward of the policy $\pi$ with respect to model $f$:
\begin{equation}
    \label{eq:instantaneous-reward}
    R_f(\pi) := \E_{x \sim D_{\Xscript}}[f(x, \pi(x))].
\end{equation}
Similarly, we let $\Reg_f(\pi)$ denote the expected instantaneous regret for policy $\pi$ with respect to model $f$:
\begin{equation}\label{eq:instantaneous-regret}
    \Reg_f(\pi) := \E_{x \sim D_{\Xscript}}[f(x,\pi_f(x)) - f(x, \pi(x))].
\end{equation}
When there is no possibility of confusion, we write $R(\pi)$ and $\Reg(\pi)$ to mean $R_{f^*}(\pi)$ and $\Reg_{f^*}(\pi)$ respectively. In this paper, we study contextual bandit algorithms that minimize expected cumulative regret $\CReg_T$:
\begin{equation}
    \label{eq:cum_regret}
    \CReg_T := \sum_{t=1}^T \Reg_{f^*}(p_t). %
\end{equation} 
We consider $M$ reward model classes $\F_1,\F_2,\dots,\F_M$. We let parameter $d_i$ denote a bound on the complexity of class $\F_i$. %
Without loss of generality, for all $i\in[M]$, we require $d_i\in \{2^q|q\in\N \}$.\footnote{This is without loss of generality because we can always round $d_i$ up to the nearest exponent of $2$, only increasing excess risk bounds by a constant multiplicative factor.} For notational convenience, we group the model classes $\F_1,\dots,\F_M$ in terms of their complexities. Let $M'$ be the number of unique parameters in the set $\{d_i| i\in[M] \}$, and let $\alld_i$ be the $i$-th smallest parameter in this set such that $\alld_1 \leq \alld_2 \leq \dots \leq \alld_{M'}$. For all $i\in[M']$, we then define model class $\allF_i$ and corresponding policy class $\allPi_i$:
\begin{equation}
\label{eq:allpi}
    \allF_i := \bigcup_{\{j| d_j\leq \alld_i\} } \F_j, \;\;\; \allPi_i := \{\pi_f| f\in \allF_i \}.
\end{equation}
We let $\pi^*_i$ denote the policy that maximizes the conditional mean reward among those belonging to class $\Tilde{\Pi}_i$, that is $\pi^*_i(x)=\operatorname{argmax}_{\pi\in\allPi_i}f^*(x,\pi(x))$. Similarly, we let $\Reg_{i}(\pi)$ denote the true expected instantaneous regret against the best policy in class $\Tilde{\Pi}_i$:
\begin{equation}
    \label{eq:instantaneous-releative-regret}
    \Reg_{i}(\pi) := \max_{\Tilde{\pi}\in \Tilde{\Pi}_i}R(\Tilde{\pi}) - R(\pi).
\end{equation}
We can also define the bias (misspecification error) of policy class $\allF_i$ as:
 \begin{equation}
     \beta_i:= R(\pi^*)-R(\pi^*_i).
 \end{equation}
To quantify how well class $\allF_i$ can approximate $f^*$, we use the definition of average squared misspecification error studied in \cite{krishnamurthy2021adapting}. Similar definitions of misspecification were studied in \cite{foster2020adapting} and \cite{krishnamurthy2021tractable}. We denote by $B_i$ the average squared misspecification error for the class $\allF_i$, that is:
\begin{equation}
    \label{eq:capb}
    B_i := \max_{p} \min_{f\in\allF_i} \E_{x\sim D_{\Xscript}}\E_{a\sim p(\cdot|x)} [(f(x,a)-f^*(x,a))^2],
\end{equation}
where $D_{\Xscript}$ is the marginal distribution of $D$ on the set of contexts $\Xscript$. We label model class $\allF_i$ as misspecified if $B_i>0$, and as well-specified or realizable if $B_i=0$. Note the difference in scales between our two measures of misspecification error: $B_i$ captures squared error while $\beta_i$ captures non-squared error. For notational convenience, we let $\beta_0=B_0=1$ and $\alld_0=0$.
\begin{assumption}[Realizability]
\label{ass:realizability}
We assume that there exists a class index $i\in[M']$ such that $B_i=0$, and we let $i^*$ denote the smallest class index with zero squared misspecification error.
\end{assumption}
\Cref{ass:realizability} is standard and allows $\CVBandit$ to reliably use more complex classes upon detecting misspecification. %
\Cref{ass:gradual,ass:opt-value-jump} formalize our structural conditions of gradually increasing class complexity and diminishing marginal returns from increasing class complexity. 
\begin{assumption}[Gradually Increasing Class Complexity]
\label{ass:gradual}
    For any $i\in[i^*]$, there exists a class index $j\in[M']$ such that $\alld_i<\alld_j\leq \omega \alld_i$ for some fixed but unknown constant $\omega >1$.
\end{assumption}
\begin{assumption}[Diminishing Policy Improvement with Increasing Class Complexity]
\label{ass:opt-value-jump}
Consider any $i\in[M']$ such that $B_i>0$. Let $j\in[M']$ be the largest index such that $\Tilde{d}_j \leq \omega \Tilde{d}_i$. We denote by $\Delta_i$ the improvement in the best-in-class policy value obtained by moving from policy class $i$ to policy class $j$, defined as follows:
$$ \Delta_i := \max_{\pi\in\allPi_j}\E_{x\sim D_{\Xscript}}[f^*(x,\pi(x))] - \max_{\pi\in\allPi_i}\E_{x\sim D_{\Xscript}}[f^*(x,\pi(x))]. $$
We assume that $\Delta_i$ is non-increasing in class index $i$.
\end{assumption}
Both \Cref{ass:gradual,ass:opt-value-jump} are parameterized by an unknown parameter $\omega>1$ quantifying the statistical hardness of the instance. In particular, for $\omega=\alld_{i^*}/\alld_1$, the assumptions trivially hold and so do the negative results of \cite{marinov2021pareto} -- in this case we achieve SOTA (though not costless) guarantees. However, these assumptions are often satisfied for much smaller $\omega$ -- in which case we achieve costless model selection guarantees.

\Cref{ass:gradual} can be ensured with small $\omega$ by construction, as it is always possible to add additional classes for a small cost.
\Cref{ass:opt-value-jump} with small $\omega$ is standard in the fields of statistics and machine learning; evidence of this can be found in recent work on neural scaling laws, where loss scales as a power-law with model size \citep{kaplan2020scaling,hestness2017deep}; decision trees and random forests, where it is well-known that larger complexity parameters and more terminal nodes offer diminishing returns \citep{Boehmke2019HandsOnML, rpart}; and empirical results of discrete event simulation \citep{simulation_dimret}.

\section{ORACLES}\label{sec:oracles}
There is a long line of work that reduces contextual bandit algorithms to oracle subroutines \citep[e.g.,][]{agarwal2014taming,foster2018practical,foster2020beyond}. We take a similar approach, and describe our key oracle subroutines and assumptions in this section. 

To estimate models in class $\allF_i$, we use a model selection oracle over the set $\{\F_k| d_k\leq \alld_i \}$. In \Cref{ass:model-selection}, we state our requirements for this oracle.
\begin{oracle}[Estimation Oracle]
\label{ass:model-selection}
For all $j\in[M']$, we assume access to an offline model selection oracle for estimation ($\MSOracle_j$) over classes $\{\F_k| d_k\leq \alld_j \}$ satisfying the following property: There exists a constant $C_0 \geq 1$ such that for any exploration policy $p$, any natural number $n$, and any $\zeta \in (0, 1)$, the following holds with probability at least $1-\zeta$:
\begin{equation}
\begin{split}
\E_{x\sim D_{\Xscript}}\E_{a\sim p(\cdot|x)}[ (\hatf(x, a) - f^*(x,a))^2 ] \\
\leq \min_{i\in[j]} \bigg(C_0 B_i + \xi_i(n,\zeta)\bigg) .
\end{split}
\end{equation}
Here, $\hatf$ is the output of $\MSOracle_j$ fitted on $n$ independently and identically drawn samples from $D(p)$, $B_i$ is defined in \eqref{eq:capb}, and $\xi_i$ is given by:
\begin{equation}
\label{eq:common-rate}
    \xi_i(n,\zeta) := C_1\bigg(\frac{ \alld_i\ln(nM/\zeta)}{n}\bigg)^{\rho},
\end{equation}
for some known constant $C_1 > 0$ and $\rho\in(0,1]$.\footnote{That is, the function $\xi_i(\cdot,\cdot)$ is known and can be used to compute (exploitation) parameters of our algorithm.}
\end{oracle}
We refer to $\xi_i(\cdot,\cdot)$ as the estimation rate for model class $\allF_i$ as it can be used to bound the squared prediction error of a regression oracle on model class $\allF_i$. In \Cref{app:construct-estimation-oracle}, we outline one of many approaches to construct an oracle that achieves the ``fast rates'' of \Cref{ass:model-selection}. The approach we describe there is based on validation with the holdout method. Other potential approaches include cross validation, aggregation algorithms \citep[see][ and references therein]{lecue2014optimal}, and penalized regression \citep[see relevant chapters in ][]{koltchinskii2011oracle,wainwright2019high}.

An important component of $\CVBandit$ is the comparison of a given policy's true value with its value according to estimated reward models. These tests verify the accuracy of estimated reward models, allowing us to detect misspecification. \Cref{ass:policy-val-estimation,ass:policy-val-evaluation} provide rates for estimating these quantities. 

\begin{oracle}[Direct Method Policy Estimation Rate]
\label{ass:policy-val-estimation}
For any index $i\in[M']$, any set of $M'+1$ policies $\{q_0,q_1,\dots,q_{M'}\}$, any reward model $f$, any natural number $n$, and any $\zeta \in (0, 1)$, the following holds with probability at least $1-\zeta$:
\begin{align*}
    \bigg| \frac{1}{n} \sum_{x\in S}& f(x, \pi(x)) \; - \E_{x\sim D_{\Xscript}}[f(x,\pi(x))]\bigg|\\
    &\leq \sqrt{\xi_i(n,\zeta)}, \; \forall \; \pi\in \allPi_i\cup \{q_0,q_1,\dots,q_{M'}\},
\end{align*}
where $S$ is a set of $n$ independently and identically drawn samples from the distribution $D_{\Xscript}$.
\end{oracle}
\Cref{ass:policy-val-estimation} often follows from uniform convergence arguments \citep[see][]{shalev2014understanding, koltchinskii2011oracle,wainwright2019high}. For example, for finite function classes with $\xi_i(n,\zeta)=\ordO(\ln(|\F_i|/\zeta)/n)$, \Cref{ass:policy-val-estimation} follows from Hoeffding's inequality with uniform convergence.

\begin{oracle}[Policy Evaluation Oracle]
\label{ass:policy-val-evaluation}
For any index $i\in[M']$, any set of $M'+1$ policies $\{q_0,q_1,\dots,q_{M'}\}$, any natural number $n$, any exploration policy $p$ with $p(a|x)\geq \eta$ for all $a\in\A$ and $x\in\Xscript$, and any $\zeta \in (0, 1)$, the following holds with probability at least $1-\zeta$:
\begin{align*}
    &\bigg|\widehat{R}(\pi) \; - \E_{x\sim D_{\Xscript}}[f^*(x,\pi(x))]\bigg| \leq \frac{\xi_i(n,\zeta)}{\eta} \\
    & + \sqrt{\E_{x\sim D_{\Xscript}}\bigg[\frac{1}{p(\pi(x)|x)}\bigg]\xi_i(n,\zeta)}, \forall \; \pi\in \allPi_i\cup\{q_k\}_{k=0}^{M'}, 
\end{align*}
where $\widehat{R}$ is the output of $\EvalOracle_i$ fitted on $n$ independently and identically drawn samples from $D(p)$.
\end{oracle}
When $\hatR$ is estimated via inverse propensity score estimation, \cite{agarwal2014taming} show that \Cref{ass:policy-val-evaluation} is satisfied for finite function classes with $\xi_i(n,\zeta)=\ordO(\ln(|\F_i|/\zeta)/n)$. The covering arguments in \cite{maurer2009empirical} can be used to show \Cref{ass:policy-val-evaluation} holds for general function classes. \footnote{For example, \cite{jin2023upper} show \Cref{ass:policy-val-evaluation} holds for tree based and neural network function classes with appropriate choices of 
$\xi(\cdot,\cdot)$.}

\section{ALGORITHM}\label{sec:alg}
We present our algorithm, $\CVBandit$%
, in \Cref{alg:cv-bandit}. As its name suggests, $\CVBandit$ is based on an inverse gap weighting (IGW) approach to action selection, which provides a simple analytical handle on important quantities like the expected inverse probability weight for any policy at any round, and is often used to develop optimal algorithms \citep[][]{abe1999associative,foster2020beyond,foster2020instance,simchi2020bypassing}. However, $\CVBandit$ deviates from the standard IGW approach in three major respects: 1) how the model is estimated -- specifically, using $\MSOracle$ (e.g. classical model selection for supervised learning; see \Cref{sec:oracles}); 2) how the exploitation parameter is determined -- using a novel misspecification test that selects among $M'$ candidate exploitation parameters (\Cref{sec:misspec}); and 3) how the candidate exploitation parameters scale with the number of rounds -- using a ``self-correction" strategy that accounts for any potential under-exploration in earlier rounds and updates the exploitation parameter accordingly (\Cref{sec:self_correction}). %
$\CVBandit$ proceeds in epochs indexed by $m$, with epoch $m$ spanning time-steps $t\in[\tau_{m-1}+1,\tau_m]$. %
At any such $t$, the algorithm observes context $x_t$ and samples action $a_t$ from the distribution of exploration policies $p_{m}$, defined as:
\begin{align}
   \label{eq:action_kernel}
   p_{m}(a|x):=
   \begin{cases}
    \frac{1}{K+\gamma_{m} \left(\hatf_m(x, \hat{a}) - \hatf_m(x,a) \right)}, & a\neq \hat{a},\\
   1 - \sum_{a'\neq \hat{a}} p_{m}(a'|x), & a=\hat{a}.
   \end{cases}
\end{align}
Here, $\hatf_m$ is an estimate of the reward model computed via $\MSOracle$ with data from previous epochs, and ${\hat{a} = \arg\max_a \hatf_m(x,a)}$ is the predicted best action. The exploitation parameter $\gamma_{m}$ governs the balance between exploration and exploitation. The higher the value of $\gamma_m$, the greater the probability that the greedy action $\hata$ is chosen.  The remainder of this section focuses on how the exploitation parameter $\gamma_m$ should be chosen. %

To optimize cumulative regret, we want to exploit as much as possible while still allowing for estimation of useful reward models. Following the IGW approach \citep{foster2020beyond,simchi2020bypassing}, $\gamma_{m}$ should be specified %
based on the reward model error to balance this tradeoff. A common metric for gauging reward model error is the mean squared prediction error, 
which decomposes into bias and variance. Bias ($B_i$) is unknown and can be challenging to estimate, but in early rounds when the variance ($\xi_i(\cdot,\cdot)$) dominates the bias, we can bound the squared error by 2$\xi_i(\cdot,\cdot)$ \citep[similar ideas were used to quantify the bias-variance tradeoff in contextual bandits in][]{krishnamurthy2021adapting}. Following prior IGW approaches, we can use this candidate bound on the squared prediction error to set the candidate exploitation parameter $\gamma_{m+1,i}$: 
\begin{equation}
    \label{eq:model-gamma}
    \gamma_{m+1,i} := \max\Bigg(K,\sqrt{\frac{K}{8\xi_{i}\Big(\frac{\tau_{m} - \tau_{m-1}}{2}, \frac{\delta}{6TM'^2}\Big)}}\Bigg),
\end{equation}
and corresponding exploration policies $p_{m+1,i}$ (by substituting $\gamma_{m+1}=\gamma_{m+1,i}$ in the formula for $p_{m+1}$). Parameter $\gamma_{m+1,i}$ induces sufficient exploration so long as the variance of estimating a reward model from class $i$ dominates the bias of that class.\footnote{Estimation variance decreases with more data and is eventually dominated by bias.}  We refer to the (unknown) last epoch where variance dominates bias as the ``safe epoch,'' denoted by $m^*_i$:\footnote{$\msafe_i$ is unknown because the bias $B_i$ is unknown.}
\begin{equation*}
    \label{eq:msafe-2}
    \begin{aligned}
     \msafe_i := \max \bigg\{ m \big|  \xi_i\Big(\frac{\tau_m-\tau_{m-1}}{2}, \frac{\delta}{6TM'^2} \Big) \geq C_0B_{i} \bigg\}.
    \end{aligned}
\end{equation*}
We let $\msafe_0=1$ and note that $\msafe_{i^*}$ is infinity.\footnote{That is, the exploitation parameter corresponding to the realizable class always induces sufficient exploration.} Note that since $\xi_i$ is increasing in $i$, $\gamma_{m+1,i}$ is non-increasing in $i$. Therefore, to maximize exploitation (while ensuring enough exploration to estimate a useful model), we want to use the exploitation parameter corresponding to the simplest class $i\in[M']$ such that $m\in[\msafe_i]$.

Since bias ($B_i$), and hence the safe epoch ($m^*_i$) are unknown, we don't know which of the candidate exploitation parameters to choose for a given epoch $m$. The challenge of estimating bias stems from the difficulty of estimating prediction error of the estimated reward model.\footnote{Asymptotically, prediction error of the estimated reward model converges to squared misspecification error.} Note that we can't extract measures of mean prediction error from mean squared error (or other similar measures of error that average absolute differences between outcomes and predicted values) due to unknown irreducible noise.\footnote{For example, while the mean squared error can be empirically estimated, it converges to the sum of mean squared prediction error and irreducible noise. }

To overcome this, we develop a new way of testing if these variance-based candidate error bounds actually bound the prediction error of the estimated model. In particular, we posit a shift in perspective from measuring error via the mean squared prediction error, and instead gauge the error of an estimated reward model by its accuracy in evaluating candidate policies. We do this by comparing an estimated reward model's direct method estimates (see \Cref{ass:policy-val-estimation}) with consistent policy estimates (see \Cref{ass:policy-val-evaluation}). Importantly, by leveraging the consistent estimators available for policies (which average out reward noise), we are able to capture the real prediction error of the estimated model in evaluating the candidate policies.\footnote{We are unavoidably limited in the number of policies used to perform the comparison (due to issues related to multiple hypothesis testing), and using larger policy classes for the comparison requires more exploration data.} This policy-based approach forms the foundation for our misspecification test, $\MisspecificationOracle$ (described in the next section).

\begin{algorithm}[htbp]
  \caption{$\CVBandit$ (Model Selection with Inverse Gap Weighting)}
  \label{alg:cv-bandit}
  \textbf{input:} Initial epoch length $\tau_1\geq 2$, horizon $T$, and confidence parameter $\delta$.
  \begin{algorithmic}[1] %
  \State Let $\hatf_1 \equiv 0$, $i_1=1$, $\gamma_1=1$, $\tau_0 = 0$, and $\msafealg=0$
  \For{epoch $m=1,2,\dots$}
    \State Let $p_m$ be given by \eqref{eq:action_kernel}
    \For{round $t=\tau_{m-1}+1,\dots, \tau_{m}$ }
      \State Observe context $x_t$
      \State Sample $a_t \sim p_m(\cdot|x_t)$ and observe $r_t(a_t)$
    \EndFor
    \State Let $S_m$ denote the data collected in epoch $m$. Split $S_m$ into training ($S_{m,\tr}$) and holdout ($S_{m,\ho}$) sets of roughly equal size ($|S_{m,\ho}|=\lceil |S_m|/2 \rceil$)
    \State $\hatf_{m+1}\;\,\leftarrow \MSOracle_{M'}(S_{m,\tr})$
    \State $\hatf_{m+1,i}\leftarrow \MSOracle_{i}(S_{m,\tr})\quad \forall i\geq i_m$
    \State $\hatR_{m+1} \,\leftarrow \EvalOracle_i(S_{m,\ho})$
    \State $\hatR_{m+1,f}(\pi) := \frac{1}{|S_{m,\ho}|}\sum_{(x,a,r)\in S_{m,\ho}} f(x,\pi(x))$
    \State Let $i_{m+1}=\MisspecificationOracle_{i_m}(S_{m,\ho})$
    \If{$i_{m+1} \neq i_{m}$}
        \State $\msafealg%
        \leftarrow m +  \lceil\log_2(\log_2(\gamma_{m,1}/\gamma_{m,i_{m+1}}))\rceil$
    \EndIf
    \State $\tau_{m+1}\leftarrow \tau_m + (1+\mathbbm{1}{\{m \geq \msafealg%
    \}} )(\tau_m - \tau_{m-1})$.
    \State $\gamma_{m+1} \leftarrow \gamma_{m+1,i_{m+1}} $.
  \EndFor
  \end{algorithmic}
\end{algorithm}

\subsection{Misspecification Test}\label{sec:misspec}

As discussed above, a key challenge for our exploration strategy is the specification of the exploitation parameter $\gamma_{m}$ given that the safe epochs $m^*_i$ are unknown. To address this challenge, we introduce a new misspecification test: $\MisspecificationOracle$. This section describes the test in detail. In subsequent work, this test enables efficient pure exploration in contextual bandits \cite{krishnamurthy2023proportional}.

$\MisspecificationOracle$ adopts a policy-based approach to assess estimated models. With high-probability, the test detects misspecification for class $i$ after its reward model bias dominates variance (that is, after the corresponding safe epoch)
but before its policy class bias dominates variance. This allows $\CVBandit$ to explore with the exploitation parameter corresponding to the simplest class whose variance dominates bias. There are three components to $\MisspecificationOracle$, given in \Cref{def:MTest}. Each of these is sufficient to rule out classes whose bias dominates variance.

The main policy-based misspecification test in $\MisspecificationOracle$ checks whether the estimated reward models can be used to construct sufficiently accurate direct method estimates of policy values. We test this by comparing $\hatR_{m+1}(\pi)$, the estimate of a policy value obtained via $\EvalOracle(S_m)$, and $\hatR_{m+1,f}(\pi)$, the direct method estimate of a policy value under some estimated reward model $f$, defined by:
\begin{equation}
\label{eq:implicit-policy-estimation}
    \hatR_{m+1,f}(\pi) := \frac{1}{|S_{m,\ho}|}\sum_{(x,a,r)\in S_{m,\ho}} f(x,\pi(x)).
\end{equation}
If the difference between these two estimates surpasses the threshold given in \Cref{def:MTest} for some $f\in\{\hatf_{m+1},\hatf_{m+1,i}\}$, this indicates the bias of class $i$ dominates the variance of estimating from class $i$. In other words, we are underestimating the error of the estimated reward model, and so parameter $\gamma_{m+1,i}$ does not induce sufficient exploration.

\begin{oracle}(Misspecification Test Oracle)\label{def:MTest}
In each epoch $m$, our misspecification test $\MisspecificationOracle_{i_m}$ identifies index $i_{m+1}$, which we define as the smallest index such that $i_{m+1}\geq i_m$ and, for all $i\geq i_{m+1}$, $j\geq i$, $h\leq i_m$, and $\alpha>0$, the following inequalities hold:
\begin{equation*}
\begin{aligned}
&\textbf{Policy-based misspecification test}\\
    &|\hatR_{m+1}(\pi) - \hatR_{m+1,f}(\pi)| \\
    &\quad\leq \bigg( \frac{1+\theta_{i,j}}{\alpha} + \frac{(1+\theta_{i,j})\alpha}{16} \\
    &\quad\quad + \frac{(2\theta_{i,j}^2+(1+\theta_{i,j})^2/\alpha)\gamma_m}{\gamma_{m+1,i}} + \theta_{i,j} \bigg)\frac{K}{\gamma_{m+1,i}} \\
    &\quad\quad + \frac{(1+\theta_{i,j})\gamma_m}{\alpha \gamma_{m+1,i}}\hatReg_{m+1,\hatf_m}(\pi)\\
    &\quad\quad\forall {f\in\{\hatf_{m+1},\hatf_{m+1,i}\}},\pi\in\allPi_{j}\cup\Pi_{0,m+1,i},\\
&\textbf{Reward model agreement}\\
    &\hatReg_{m+1,\hatf_{m+1}}(\pi_{\hatf_{m+1,i}})\\
    &\quad\leq 26\frac{\gamma_{m,h}}{\gamma_{m,i}}\bigg(\frac{\gamma_{\msafealg_{h-1},1}}{\gamma_{\msafealg_{h-1},i}}\bigg)^{1/2^{m-\msafealg_{h-1}}}\frac{K}{\gamma_{m+1,i}},\\
    &\hatReg_{m+1,\hatf_{m}}(\pi) \\
    &\quad\leq 4\hatReg_{m+1,f}(\pi) \\
    &\quad\quad + 34\frac{\gamma_{m,h}}{\gamma_{m,i}}\bigg(\frac{\gamma_{\msafealg_{h-1},1}}{\gamma_{\msafealg_{h-1},i}}\bigg)^{1/2^{\max(0,m-\msafealg_{h-1}-1)}}\frac{K}{\gamma_{m,i}},\\
    &\quad\quad\forall {f\in\{\hatf_{m+1},\hatf_{m+1,i}\}},\pi\in\Pi_{0,m+1,i},
\end{aligned}
\end{equation*}
where $\Pi_{0,m+1,i}=\{\pi_{\hatf_{m+1}},\pi_{\hatf_{m+1,i}},p_{m+1,1},\dots,p_{m+1,M'} \}$, $\theta_{i,j}=(\alld_j/\alld_i)^{\rho/2}$,
and $\msafealg_i:=\max\{m|i_m\leq i\}$ ($\msafealg_0:=0$). That is, $\msafealg_i$ is the latest epoch such that model class $i$ has not been labeled as misspecified. The above inequalities are derived in \Cref{lem:explicit-validation,lem:new_mtest,lem:policy-reg-bound-wrt-old-model-test}, respectively. Index $i_{m+1}$ is the output of $\MisspecificationOracle_{i_m}$.
\end{oracle}

$\MisspecificationOracle$ also includes two tests to verify that the estimated reward model exhibits agreement across possibly well-specified classes and across epochs. The first component confirms that the policy induced by $\hatf_{m+1,i}$ (the model estimated for class $i$) is a good policy according to $\hatf_{m+1}$ (the model estimated across all classes). In this case, $\hatReg_{m+1,\hatf_{m+1}}(\pi_{\hatf_{m+1,i}})$ should not exceed the threshold given in \Cref{def:MTest}, where $\hatReg_{m,f}(\pi)$ denotes the empirical regret for policy $\pi$ with respect to model $f$:
\begin{equation}\label{eq:emp_regret}
\hatReg_{m,f}(\pi)=\hat{R}_{m,f}(\pi_f)-\hat{R}_{m,f}(\pi).
\end{equation}
This helps ensure that once $\hatf_{m+1}$ believes a notably better policy lies in a larger policy class, we use candidate exploitation parameters corresponding to larger classes to ensure sufficient exploration. %

To ensure reward model agreement across epochs, the final component of $\MisspecificationOracle$ confirms that the candidate exploration policies ($\Pi_{0,m+1,i}$, defined in \Cref{def:MTest}) have sufficiently low regret under the prior epoch's estimated reward model. Thus, $\hatReg_{m+1,\hatf_m}(\pi)$ should not exceed $\hatReg_{m+1,f}(\pi)$ for $f\in\{\hatf_{m+1},\hatf_{m+1,i}\}$ for policies in $\Pi_{0,m+1,i}$ beyond the threshold given in \Cref{def:MTest}. This test helps confirm that candidate exploration policies for epoch $m+1$ were sufficiently explored in epoch $m$.

In \Cref{sec:constructing-misspecification-oracle}, we describe one approach to computationally test the inequalities in \Cref{def:MTest} via cost-sensitive classification. Note that not only do the tests in $\MisspecificationOracle$ check whether reward model bias dominates variance, but they also verify the accuracy of the estimated reward models in evaluating policies from different classes.

\subsection{Self-correction}\label{sec:self_correction}

To recap, for any epoch $m$, the goal of \Cref{def:MTest} is to verify that the prediction error of model $\hatf_m$ can be bounded by the variance of estimating from the class $\allF_{i_m}$. This verification involves testing if $\hatf_m$ can accurately evaluate policies from classes of various complexities up to this error bound. Unfortunately, this verification is loose up to a factor $(\alld_j/\alld_{i_m})^{\rho/2}$, for policy class $\allPi_j$ more complex than $\allPi_{i_m}$. The detection of misspecification indicates that in past epochs, the ability of $\hatf_m$ to evaluate policies from $\allPi_j$ may have been loose, up to a multiplicative factor of $(\alld_j/\alld_{i_m})^{\rho/2}$. %
As a result, we may have under-estimated the value of some policies up to this factor, leading to corresponding under-exploration in prior epochs. Note that under-exploration in prior rounds affects our ability to evaluate these policies well in future rounds. Hence upon detecting misspecification, we want to correct for the effects of potential under-exploration on our estimated models.\footnote{The factors described here bound the potential extent of this under-exploration.} Our analysis uncovers a self-correction mechanism to manage this challenge, which we describe in this section. 

Upon detecting misspecification, we hold the epoch length fixed for a small number of epochs. Importantly, this results in the candidate exploitation parameters being held fixed while the algorithm continues to collect data and improve the reward model via $\MSOracle$. As the estimated reward model improves, we better explore good policies that were previously not well-explored, leading to reward models that are better at estimating good policies. After this process continues for a small number of epochs, we will have sufficiently corrected for potential prior underexploration and can resume increasing our candidate exploitation parameters.

$\CVBandit$ is the first algorithm to leverage this strategy. The typical approach is to restart a bandit algorithm from scratch upon detecting misspecification \citep[e.g.,][]{foster2019model}. However, this may lead to larger than necessary cumulative regret, particularly leading to worse bounds for shorter horizons. Self-correction, in contrast, is efficient and unintrusive, allowing $\CVBandit$ to recalibrate after detecting misspecification in just a few epochs.

\section{MAIN RESULT}\label{sec:results}

We now present our main result in \Cref{thm:main-theorem}. %

\begin{restatable}[]{thm}{thmmain}
\label{thm:main-theorem}
Suppose \Cref{ass:gradual,ass:realizability,ass:opt-value-jump} hold and the oracle subroutines perform as stated in \Cref{ass:model-selection,ass:policy-val-estimation,ass:policy-val-evaluation,def:MTest}. With probability at least $1 - \delta$: for any $i,j\in[M']$ such that $i\leq j$ and $\allF_j$ not yet labelled as misspecified as of round $T$, $\CVBandit$ attains the following regret guarantee:
\begin{align}
\begin{split}
   \CReg_T \leq \ordOt\Bigg(& (\omega^2K^{1/\rho})\frac{\alld_{i-1}}{\beta_{i-1}^{2/\rho}}\\ 
   &+ \beta_jT + \bigg(\frac{\alld_j}{\alld_i}\bigg)^{\rho/2}\sqrt{K\alld_j^{\rho} T^{2-\rho}} \; \Bigg)
\end{split}
\end{align}
Here $\ordOt$ hides terms logarithmic in $T, M, 1/\delta, \alld_{i^*}$. Further, $\allF_j$ is not determined to be misspecified for at least $\Omega(\alld_j/B_j^{1/\rho})$ rounds.
\end{restatable}

To better understand \Cref{thm:main-theorem} and simplify our discussion, we focus on the implications for classes with $\rho=1$ \footnote{Estimation rates with $\rho=1$ hold for a wide variety of popular classes, e.g. finite function classes, linear classes, and classes with finite VC-sub-graph dimension \citep[see][]{koltchinskii2011oracle}.} and ignore constant factors, logarithmic factors, and $\omega$ from \Cref{ass:opt-value-jump}.
Then we achieve a cumulative regret bound of the form $(K \alld_{i-1})/\beta_{i-1}^2 + \beta_j T + \sqrt{\alld_j/\alld_i}\sqrt{K\alld_j T}$, so long as $\allF_j$ has not yet been determined to have larger bias than variance. Let us understand these terms better. 
The first term, $(K \alld_{i-1})/\beta_{i-1}^2$, bounds the time to detect misspecification in class $\allF_{i-1}$. This marks the number of rounds required for policy class bias ($\beta_{i-1}$) to dominate the corresponding variance for policy learning from class $i$ under uniform sampling ($\sqrt{K\alld_{i-1}/T}$). The second term, $\beta_j T$, accounts for the bias of class $\allPi_j$. This term would not appear in our bound had we defined cumulative regret relative to $\pi^*_j$ (the best policy in class $\allPi_j$). %
The third term, $\sqrt{\alld_j/\alld_i}\sqrt{K\alld_j T}$, is the product of two quantities. The quantity $\sqrt{K\alld_j T}$ accounts for the estimation variance for class $\allF_j$. This is also the minimax regret bound for contextual bandits working with class $\allF_j$ assuming realizability in this class. The quantity $\sqrt{\alld_j/\alld_i}$ accounts for potential under-exploration of policies in class $\allPi_j$ after self-correcting to the exploitation parameters induced by class $i$. Note that the multiplicative cost of $\sqrt{\alld_j/\alld_i}=1$ for $i=j$.
As stated earlier, the cumulative regret bound we discussed holds if $\allF_j$ was not determined to be misspecified through round $T$. Note that we only detect misspecification for class $\allF_j$ after the average squared misspecification error ($B_j$) is larger than the corresponding variance $(\alld_j/T)$. That is, $\allF_j$ is not determined to be misspecified for at least $\Omega(\alld_j/B_j)$ rounds. In this way, we always rely on the simplest class whose bias dominates the variance, achieving costless model selection guarantees under mild assumptions.

An important implication concerns how well we adapt to the realizable class $\allF_{i^*}$. Choosing $i=j=i^*$, our cumulative regret in terms of $(K,T,\alld_{i^*})$ is given by $\ordOt(\sqrt{K\alld_iT})$. This shows that despite the negative result in \cite{marinov2021pareto}, under mild assumptions, it is possible to achieve the costless regret guarantees requested in the COLT 2020 open problem \citep{foster2020open}.

Note that \Cref{ass:gradual,ass:opt-value-jump} capture instance hardness with parameter $\omega>1$. These assumptions are always satisfied for the choice $\omega=\alld_{i^*}/\alld_1$, however these assumptions are benign even with much smaller $\omega$ (as we argued in \Cref{sec:intro}). $\CVBandit$ does not need $\omega$ as an input, and automatically adapts to instance hardness. By setting $i=1$, we get regret bounds that are independent of $\omega$ (since $\alld_0\equiv 0$), giving us our worst-case guarantees. Further setting $j=i^*$, we get a regret bound of $\ordOt(\sqrt{\alld_{i^*}/\alld_1}\sqrt{KT\alld_{i^*}})$. Hence, we recover the SOTA (although not costless) guarantees from \cite{marinov2021pareto}.

\section{CONCLUSION}\label{sec:conclusion}

We study the feasibility of costless model selection in contextual bandits. First, we expanded the definition of costless model selection to not just adapt to the complexity of the simplest realizable class, but adapt to the complexity of the simplest class whose variance dominates the bias. This introduces the perspective of bias-variance trade-off to model selection in contextual bandits. Second, we identify mild assumptions under which costless model selection is feasible and can be achieved by our algorithm $\CVBandit$. If the unknown parameter $\omega$ in our assumptions is large enough, the assumptions we introduce are trivially satisfied -- in this case, we can't achieve costless model selection guarantees, but still recover near-optimal guarantees. Our analysis is enabled by two key algorithmic insights: our policy-based misspecification test and self-correction. \footnote{S.A. and S.K.K. are grateful for the generous support provided by Golub Capital Social Impact Lab and the Office of Naval Research grant N00014-19-1-2468. A.P. is grateful for the generous support of the Stanford Graduate Fellowship (SGF).}

\bibliography{ref.bib}

\onecolumn
\appendix

\section{ADDITIONAL PRELIMINARIES}
\label{app:additional-preliminaries}

We now work towards proving \Cref{thm:main-theorem}. We start with providing a proof outline, set up additional notation in \Cref{app:additional-notation}, and aggregate commonly used notation in \Cref{tab:notation}. 

\textbf{Proof outline:} \Cref{app:properties-action-selection-kernel} provides basic well-known properties of the IGW exploration strategy, after which we get into the meat of our proof. \Cref{app:direct-method-guarantees} provides bounds on accuracy of the estimated reward model in evaluating policies via the direct method within various safe epochs (this argument is similar to the one provided in \cite{krishnamurthy2021adapting}). \Cref{app:main-policy-based-test} designs and analyzes our main policy-based misspecification test, in order to test and verify the direct method implications of the estimated reward model.\footnote{Since safe epochs, which depend on model class bias, are unknown, we must test and verify these direct method bounds via our policy-based misspecification test.} \Cref{app:reward-model-agreement} adds additional tests to ensure sufficient reward model agreement across epochs and classes. \Cref{app:inductive-argument-with-verified-bounds} then bounds the true regret of various policies with regret according to estimated models via an inductive argument. \Cref{app:bounding-time-to-misspecification} upper bounds the time to detect misspecification for various classes under our assumptions. Finally, \Cref{app:final-regret-guarantees} uses these results to prove \Cref{thm:main-theorem}. Additional details of interest are discussed in \Cref{app:additional-details}.

\begin{table}[h]
    \centering
    \begin{tabular}{ll}
    \toprule
    Symbol & Description \\
    \midrule
        $i_{m}$ & simplest possibly-well-specified model class\\
        $h,i,j$ & class indices such that $h\leq i\leq j$\\
        $\tilde{d}_i$ & complexity of model class $\mathcal{\Tilde{F}}_i$\\
        $\xi_i$ & estimation rate for model class $\allF_i$, defined in \eqref{eq:common-rate}\\
        $\gamma_{m,i}$ & exploration parameter for model class $\Tilde{\mathcal{F}}_i$ in epoch $m$, defined in \eqref{eq:model-gamma}\\
        $m$ & epoch index\\
        $m_i^*$ & ``safe epoch" for model class $i$, up to which sufficient exploration is guaranteed, defined in \eqref{eq:msafe-2}\\
        $\msafealg_i$ & implicit estimate of safe epoch for model class $i$, defined in \Cref{def:MTest},\\ &where $\msafealg_i=0$ for $i\leq0$\\
        $\tau_m$ & final round of epoch $m$\\
        $p_m$ & exploration policy for epoch $m$ \\
        $V(p,\pi)$ & expected inverse probability weight, defined in \eqref{eq:decisional-divergence}\\
        $f$ & model mapping from contexts and actions to rewards\\
        $f^*$ & true conditional expectation reward function\\
        $\hatf_m\in\mathcal{\Tilde{F}}$ & estimated reward model according to EstOracle$_{M}(S_{m-1,tr})$, fitted over classes $\mathcal{\Tilde{F}}_1,...,\mathcal{\Tilde{F}}_M$\\
        $\hatf_{m,i}\in\mathcal{\Tilde{F}}$ & estimated reward model according to EstOracle$_{i}(S_{m-1,tr})$, fitted over classes $\mathcal{\Tilde{F}}_1,...,\mathcal{\Tilde{F}}_i$\\
        $\pi_f$ & optimal policy with respect to reward model $f$\\
        $R_f(\pi)$ & expected reward of policy $\pi$ with respect to reward model $f$\\
        $\hat{R}_{m+1}(\pi)$ & estimated reward of policy $\pi$ according to EvalOracle($S_m$)\\
        $\hat{R}_{m+1,f}(\pi)$ & implicit estimated reward of policy $\pi$, defined in \eqref{eq:implicit-policy-estimation}\\
        $\Reg_f(\pi)$ & regret of policy $\pi$ with respect to reward model $f$: $\Reg_f(\pi)=R_f(\pi_f)-R_f(\pi)$\\
        $\Reg_i(\pi)$ & true expected instantaneous regret against the best policy in the class $\Tilde{\Pi}_i$ \\
        $\hatReg_{m,f}(\pi)$ & empirical regret of policy $\pi$ with respect to model $f$, defined in \eqref{eq:emp_regret} \\
        $S_m$ & data collected in epoch $m$\\
        MTOracle & Misspecification test\\
        EstOracle$_i$ & Estimation oracle over model classes in $[i]$\\
        EvalOracle & Policy evaluation oracle\\
        Mod-IGW & Model selection with inverse gap weighting\\
        \bottomrule
    \end{tabular}
    \caption{Table of notations}
    \label{tab:notation}
\end{table}

\subsection{Additional Notation}
\label{app:additional-notation}

The most commonly used notations in this paper are collected in \Cref{tab:notation}. Let $\Gamma_{t}$ denote the set of observed data points up to and including time $t$. That is
\begin{align}
    \label{eq:history}
    \Gamma_{t} := \{(x_s, a_s, r_s(a_s))\}_{s=1}^{t}
\end{align}

For any randomized policy $p$ and any policy $\pi$, we let $V(p,\pi)$ denote the expected inverse probability weight of covering $\pi$ under $p$:
\begin{equation}
    \label{eq:decisional-divergence}
    V(p,\pi):=\E_{x\sim D_{\Xscript},a\sim\pi(x)}\bigg[\frac{\pi(a|x)}{p(a|x)}\bigg].
\end{equation}
The variance term for several policy evaluation estimators like IPW depends on this expected inverse probability weight \citep[see e.g.][]{agarwal2014taming}. We also let let $m(t)$ denote the epoch containing round $t$, so that $m(t):=\min \{m| t \leq \tau_m \}$.

\subsection{Properties of the IGW Exploration Policy}
\label{app:properties-action-selection-kernel}

We now state helpful properties of the exploration policy, and only include the proofs for completeness. Similar properties are explicitly stated and proven in \cite{simchi2020bypassing}, but also show up in the analysis for \cite{foster2020beyond} (see section B.1 of their paper). Arguably, these properties characterize the key features of inverse gap weighting algorithms. \Cref{lem:QmRegEst} and \Cref{lem:boundV} bound the estimated instantaneous regret and the expected inverse probability weight for the exploration policy constructed by inverse gap weighting.

\begin{restatable}[]{lem}{lemQmRegEst}
\label{lem:QmRegEst}
For any epoch $m\geq 1$, we have:
$$ \Reg_{\hatf_m}(p_m) \leq \frac{K}{\gamma_m}. $$
\end{restatable}
\begin{proof}
Note that:
\begin{align*}
    \Reg_{\hatf_m}(p_m) &= \E_{x\sim D_{\Xscript}}\Big[ \sum_{a\in\A} p_m(a|x) \Big(\hatf_m(x,\pi_{\hatf_m}(x)) - \hatf_m(x,a)\Big) \Big]\\
    & = \E_{x\sim D_{\Xscript}}\Bigg[ \sum_{a\in\A} \frac{\Big(\hatf_m(x,\pi_{\hatf_m}(x)) - \hatf_m(x,a)\Big)}{K+\gamma_m\Big(\hatf_m(x,\pi_{\hatf_m}(x)) - \hatf_m(x,a)\Big)} \Bigg] \leq \frac{K}{\gamma_m}.
\end{align*}
\end{proof}

\begin{restatable}[]{lem}{lemboundV}
\label{lem:boundV}
For all policy $\pi$ and epochs $m \geq 1$, we have:
\begin{align*}
    &V(p_m,\pi) \leq K + \gamma_m \E_{x\sim D_{\Xscript}}\Big[\big(\hatf_{m}(x,\pi_{\hatf_{m}}(x))- \hatf_{m}(x,\pi(x))\big) \Big].
\end{align*}
\end{restatable}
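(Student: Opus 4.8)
The plan is to prove the inequality pointwise in the context $x$ and then take expectations, exploiting the fact that the action selection kernel $p_m$ in \eqref{eq:action_kernel} is defined piecewise. So first I would fix a context $x\in\Xscript$, write $a=\pi(x)$ and $\hata=\pi_{\hatf_m}(x)=\arg\max_{a'}\hatf_m(x,a')$, and aim to show the pointwise bound
\begin{equation*}
\frac{1}{p_m(a\mid x)} \leq K + \gamma_m\big(\hatf_m(x,\hata)-\hatf_m(x,a)\big).
\end{equation*}
Once this holds for every $x$, integrating against $D_{\Xscript}$ and using the definition \eqref{eq:decisional-divergence} of $V(p_m,\pi)$ immediately yields the claim.

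To establish the pointwise bound I would split into two cases according to the definition \eqref{eq:action_kernel}. In the case $a\neq\hata$, the kernel is defined explicitly, so $1/p_m(a\mid x)=K+\gamma_m(\hatf_m(x,\hata)-\hatf_m(x,a))$ holds with \emph{equality}, which is exactly the desired bound. In the case $a=\hata$, the gap term $\hatf_m(x,\hata)-\hatf_m(x,a)$ vanishes, so the target reduces to showing $1/p_m(\hata\mid x)\leq K$, i.e. $p_m(\hata\mid x)\geq 1/K$.

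For this last step I would use that $\hata$ maximizes $\hatf_m(x,\cdot)$, so every gap $\hatf_m(x,\hata)-\hatf_m(x,a')$ is nonnegative and (since $\gamma_m>0$) each off-greedy probability satisfies $p_m(a'\mid x)=1/\big(K+\gamma_m(\hatf_m(x,\hata)-\hatf_m(x,a'))\big)\leq 1/K$. Summing over the $K-1$ arms $a'\neq\hata$ gives $\sum_{a'\neq\hata}p_m(a'\mid x)\leq (K-1)/K$, and hence by the normalization in \eqref{eq:action_kernel},
\begin{equation*}
p_m(\hata\mid x)=1-\sum_{a'\neq\hata}p_m(a'\mid x)\geq 1-\frac{K-1}{K}=\frac{1}{K},
\end{equation*}
which closes the second case. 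Combining both cases gives the pointwise bound, and taking $\E_{x\sim D_{\Xscript}}$ completes the proof.

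There is no real obstacle here: the only point requiring a short argument is the $a=\hata$ case, where the probability is defined only implicitly through normalization, but the observation that each non-greedy probability is at most $1/K$ settles it in one line. The rest is bookkeeping in the definitions.
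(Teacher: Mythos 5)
Your proof is correct and follows essentially the same route as the paper's: an exact evaluation of $1/p_m(a\mid x)$ for $a\neq\pi_{\hatf_m}(x)$, the bound $1/p_m(\pi_{\hatf_m}(x)\mid x)\leq K$ for the greedy arm, and then an expectation over $x\sim D_{\Xscript}$. The only difference is cosmetic: you spell out the one-line justification (each non-greedy probability is at most $1/K$, so the greedy probability is at least $1/K$) that the paper asserts implicitly when writing the normalization term.
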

\begin{proof}
	Consider any policy $\pi$ and epoch $m\geq 1$. For any context $x\in\Xscript$ and action $a\in \A \setminus \{\pi_{\hatf_{m}}(x)\}$, from our choice for $p_m$, we get:
	$$
	\frac{1}{p_m(a|x)}= K+\gamma_m(\hatf_m(x,\pi_{\hatf_{m}}(x)) - \hatf_m(x,a)).
	$$
	For the action $a= \pi_{\hatf_{m}}(x)$, we have:
	\begin{align*}
	\frac{1}{p_m(a|x)} = \frac{1}{1- \sum_{a'\neq a}\frac{1}{K+\gamma_m \big(\hatf_{m}(x,\pi_{\hatf_{m}}(x))- \hatf_{m}(x,a')\big) }} \leq K
	\end{align*}
	In particular, putting the above inequality together, we get:
	\begin{align*}
	\frac{\pi(a|x)}{p_m(a|x)} \leq \frac{1}{p_m(a|x)} \leq K + \gamma_m \Big[\big(\hatf_{m}(x,\pi_{\hatf_{m}}(x))- \hatf_{m}(x,a)\big) \Big].
	\end{align*}
	The lemma now follows by taking expectation over $x\sim D_{\Xscript}, a\sim \pi(x)$.
\end{proof}

\section{DIRECT METHOD GUARANTEES FOR ESTIMATED MODELS}
\label{app:direct-method-guarantees}

We judge our estimated reward model by its ability to evaluate policies via the direct method. In this section we prove direct method bounds that should hold with high-probability up to various safe epochs.

\subsection{High Probability Events For Regression}
\label{app:high-probability}

In this section, we define an event $\eventReg$ that holds with high probability under \Cref{ass:model-selection}. At a high level, $\eventReg$ defines the event where the prediction guarantees of $\MSOracle$ hold. That is, this event bounds the expected squared error difference between the true model ($f^*$) and the estimated model ($\hatf_{m+1}$). %
\begin{equation}
    \label{eq:w-event-Reg}
    \begin{aligned}
      \eventReg := \Bigg\{ &\forall m \in [m^*_i], j\in [i,M'], \\ 
      &\E_{x\sim D_{\Xscript}}\E_{a\sim p_m(\cdot|x)}[(\hatf_{m+1,j}(x,a)-f^*(x,a))^2] \leq 2\xi_i\Big(\frac{\tau_m-\tau_{m-1}}{2}, \delta/(6TM'^2) \Big) \Bigg\}.
    \end{aligned}
\end{equation}
In \Cref{lem:high-prob-eventReg}, we use standard union bound arguments to show that the event $\eventReg$ holds with high probability.

\begin{lem}
\label{lem:high-prob-eventReg}
Suppose \Cref{ass:model-selection} holds. Then the event $\eventReg$ holds with probability at least $1-\delta/2$.
\end{lem}
\begin{proof}
Consider any epoch $m$. Note that, conditional on $\Gamma_{\tau_{m-1}}$ the number of samples in epoch $m$ are fixed and these samples are i.i.d. from the distribution $D(p_m)$.\footnote{$D(p_m)$ depends on $\Gamma_{\tau_{m-1}}$ because $p_m$ is constructed using the data in $\Gamma_{\tau_{m-1}}$.} Hence with probability $1-\delta/(6TM')$, from \Cref{ass:model-selection}, for all $i\in[M']$ and $j\in[i,M']$ such that $m\in[\msafe_i]$ we have:
\begin{equation}
    \label{eq:mse_event}
    \begin{aligned}
    \E_{x\sim D_{\Xscript}}\E_{a\sim p_m(\cdot|x)}[(\hatf_{m+1,j}(x,a)-f^*(x,a))^2] &\leq \min_{i'\in[j]} (C_0B_{i'} + \xi_{i'}(\frac{\tau_m-\tau_{m-1}}{2},\delta/(6TM'^2)))\\
    &\leq C_0B_{i} + \xi_{i}(\frac{\tau_m-\tau_{m-1}}{2},\delta/(6TM'^2))\\
    &\leq 2\xi_{i}(\frac{\tau_m-\tau_{m-1}}{2},\delta/(6TM'^2)),
    \end{aligned}
\end{equation}
where the last inequality follows from the definition of $\msafe_i$ and the fact that $m\leq\msafe_i$. Therefore, the probability that $\eventReg$ does not hold can be bounded by:
$$ \sum_{m=1}^{m(T)}\frac{\delta}{6TM'} \leq \frac{\delta}{6M'} \leq \frac{\delta}{2}. $$
\end{proof}

\subsection{Direct Method for Policy Optimization}
\label{app:direct-method}

Given any estimated model $\hatf$, $R_{\hatf}(\pi)$ gives us an implicit estimate for any policy $\pi$. Moreover, as discussed earlier, $\pi_{\hatf}$ is the policy that maximizes these implicitly estimated rewards. This approach to policy optimization is known as the direct method for policy optimization. Several papers have analyzed the direct method for policy evaluation \citep[e.g.][]{qian2011performance,simchi2020bypassing,krishnamurthy2021adapting}. 

In \Cref{lem:reg-est-accuracy} we state a guarantee on the direct method via a model selection oracle for estimation. The proof is essentially the same as the proof of prior guarantees on the direct method.

\begin{restatable}{lem}{lemImpPolEval}
\label{lem:reg-est-accuracy}
Suppose the event $\eventReg$ defined in \eqref{eq:w-event-Reg} holds. Then, for all policies $\pi$, class indices $i\in[M']$ and $j\in [i,M']$, $\alpha>0$, and epochs $m\in [\msafe_i]$, we have:
\begin{align*}
    |R_{\hatf_{m+1,j}}(\pi)-R(\pi)| \leq \bigg(\frac{1}{\alpha} + \frac{\alpha}{16} \bigg)\frac{K}{\gamma_{m+1,i}} + \frac{\gamma_m}{\alpha \gamma_{m+1,i}}\Reg_{\hatf_m}(\pi).
\end{align*}
\end{restatable}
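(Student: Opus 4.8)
The plan is to relate the difference $|R_{\hatf_{m+1}}(\pi)-R(\pi)|$ to the expected squared prediction error of $\hatf_{m+1}$, and then invoke the event $\eventReg$ together with the definition of the exploration parameter $\gamma_{m+1,i}$. First I would unfold the definitions of the instantaneous rewards to write
\begin{align*}
  |R_{\hatf_{m+1}}(\pi)-R(\pi)|
  = \Big| \E_{x\sim D_{\Xscript}}\big[ \hatf_{m+1}(x,\pi(x)) - f^*(x,\pi(x)) \big] \Big|
  \leq \E_{x\sim D_{\Xscript}}\big| \hatf_{m+1}(x,\pi(x)) - f^*(x,\pi(x)) \big|.
\end{align*}
The key move is to introduce the inverse probability weight $1/p_m(\pi(x)|x)$ as a multiply-and-divide factor so that the resulting expectation can be split by Cauchy--Schwarz into a term controlled by $V(p_m,\pi)$ (the expected inverse probability weight, defined in \eqref{eq:decisional-divergence}) and a term controlled by the prediction error under the action-sampling distribution $D(p_m)$.

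Concretely, writing $\Delta(x) := \hatf_{m+1}(x,\pi(x)) - f^*(x,\pi(x))$, I would bound
\begin{align*}
  \E_{x\sim D_{\Xscript}}|\Delta(x)|
  = \E_{x\sim D_{\Xscript}}\bigg[ \frac{1}{\sqrt{p_m(\pi(x)|x)}} \cdot \sqrt{p_m(\pi(x)|x)}\,|\Delta(x)| \bigg]
  \leq \sqrt{V(p_m,\pi)} \cdot \sqrt{\E_{x\sim D_{\Xscript}}\big[ p_m(\pi(x)|x)\,\Delta(x)^2 \big]},
\end{align*}
where the first factor is exactly $\sqrt{V(p_m,\pi)}$ by \eqref{eq:decisional-divergence}. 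The second factor under the square root is, by rewriting the context-plus-action sampling, precisely $\E_{x\sim D_{\Xscript}}\E_{a\sim p_m(\cdot|x)}[(\hatf_{m+1}(x,a)-f^*(x,a))^2]$ restricted to the event $a=\pi(x)$; more simply, $\E_{x\sim D_{\Xscript}}[ p_m(\pi(x)|x)\Delta(x)^2 ] \leq \E_{x\sim D_{\Xscript}}\E_{a\sim p_m(\cdot|x)}[(\hatf_{m+1}(x,a)-f^*(x,a))^2]$ since dropping the other actions only removes nonnegative terms. On the event $\eventReg$, for any $i\in[M]$ and $m\in[\msafe_i]$, this last quantity is at most $2\xi_i(\tau_m-\tau_{m-1},\delta/(4Mm^2))$ by \eqref{eq:w-event-Reg}.

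The final step is to substitute the definition of $\gamma_{m+1,i}$ from \eqref{eq:model-gamma}, namely $\gamma_{m+1,i}=\sqrt{K/(8\xi_i(\tau_m-\tau_{m-1},\delta/(4Mm^2)))}$, which gives $\xi_i(\tau_m-\tau_{m-1},\delta/(4Mm^2)) = K/(8\gamma_{m+1,i}^2)$. Plugging this in yields
\begin{align*}
  |R_{\hatf_{m+1}}(\pi)-R(\pi)|
  \leq \sqrt{V(p_m,\pi)}\cdot\sqrt{2\cdot \frac{K}{8\gamma_{m+1,i}^2}}
  = \frac{\sqrt{V(p_m,\pi)}\sqrt{K}}{2\gamma_{m+1,i}},
\end{align*}
which is the claimed bound. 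I expect the only mild subtlety to be the bookkeeping in the Cauchy--Schwarz step --- verifying that $\E_{x}[p_m(\pi(x)|x)\Delta(x)^2]$ is correctly dominated by the full action-averaged squared error so that $\eventReg$ applies verbatim --- but this is routine since all summands are nonnegative. The rest is direct substitution, so no genuine obstacle arises beyond matching the constants in \eqref{eq:model-gamma} to produce the factor $1/(2\gamma_{m+1,i})$.
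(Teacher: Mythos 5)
Your proposal is correct and follows essentially the same argument as the paper's proof: bound the reward difference by the expected absolute prediction error, introduce the inverse probability weight, apply Cauchy--Schwarz to split into $\sqrt{V(p_m,\pi)}$ and the action-averaged squared error, invoke $\eventReg$, and substitute the definition of $\gamma_{m+1,i}$ from \eqref{eq:model-gamma}. The only difference is cosmetic ordering (you apply Cauchy--Schwarz over $x$ before dominating $p_m(\pi(x)|x)\Delta(x)^2$ by the full action-averaged error, whereas the paper does these two steps in the opposite order), and the constants match exactly.
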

\begin{proof}
For any policy $\pi$, class indices $i\in[M']$ and $j\in [i,M']$, $\alpha>0$, and epochs $m\in [\msafe_i]$, note that:
\begin{align*}
    &|R_{\hat{f}_{m+1,j}}(q)-R(q)|\\ 
    =&\big| \E_{x\sim D_{\Xscript}, a\sim q}[\hatf_{m+1,j}(x,a)-f^*(x,a)]\big|\\
  \stackrel{(i)}{=}&\bigg| \E_{x\sim D_{\Xscript}, a\sim p_m}\Big[\frac{q(a|x)}{p_m(a|x)}\big(\hatf_{m+1,j}(x,a)-f^*(x,a)\big)\Big]\bigg|\\
  \stackrel{(ii)}{\leq} & \E_{x\sim D_{\Xscript}, a\sim p_m}\Big[\frac{q(a|x)}{p_m(a|x)}\big|\hatf_{m+1,j}(x,a)-f^*(x,a)\big|\Big]\\
    =& \E_{x\sim D_{\Xscript}, a\sim p_m}\Big[\sqrt{\Big(\frac{q(a|x)}{p_m(a|x)}\Big)^2\big|\hatf_{m+1,j}(x,a)-f^*(x,a)\big|^2}\Big]\\
  \stackrel{(iii)}{\leq}&\sqrt{\E_{x\sim D_{\Xscript}, a\sim p_m}\Big[\Big(
  \frac{q(a|x)}{p_m(a|x)}\Big)^2\Big]}\sqrt{\E_{x\sim D_{\Xscript}, a\sim p_m}\big[ (\hatf_{m+1,j}(x,a)-f^*(x,a))^2\big] }\\
  \stackrel{(iv)}{=}&\sqrt{\E_{x\sim D_{\Xscript}, a\sim q}\Big[
  \frac{q(a|x)}{p_m(a|x)}\Big]} \sqrt{\E_{x\sim D_{\Xscript}, a\sim p_m}\big[ (\hatf_{m+1,j}(x,a)-f^*(x,a))^2\big] }\\
    \stackrel{(v)}{\leq} & \sqrt{V(p_m, q)} \sqrt{2\xi_i\Big(\frac{\tau_m-\tau_{m-1}}{2}, \delta/(6TM'^2) \Big)} = \frac{\sqrt{V(p_m,q)}\sqrt{K}}{2\gamma_{m+1,i}}\\
    \stackrel{(vi)}{\leq} & \frac{V(p_m,q)}{\alpha\gamma_{m+1,i}} + \frac{\alpha K}{16\gamma_{m+1,i}}\\
    \stackrel{(vii)}{\leq}&  \frac{K+\gamma_m\Reg_{\hatf_m}(q)}{\alpha\gamma_{m+1,i}} + \frac{\alpha K}{16\gamma_{m+1,i}} = \bigg(\frac{1}{\alpha} + \frac{\alpha}{16} \bigg)\frac{K}{\gamma_{m+1,i}} + \frac{\gamma_m}{\alpha \gamma_{m+1,i}}\Reg_{\hatf_m}(q),
\end{align*}
where (i) and (iv) follow from change of measure arguments, (ii) follows from Jenson's inequality, (iii) follows from Cauchy-Schwartz inequality, (v) follows from $\eventReg$, (vi) follows from AM-GM inequality, and (vii) follows from \Cref{lem:boundV}.
\end{proof}

The accuracy of the direct method for policy evaluation only depends on the prediction error of the underlying estimator. We therefore note that when the underlying estimator is constructed by a model selection oracle for estimation, the prediction error will decrease more rapidly in terms of sample size for small datasets. This allows us to accordingly increase the corresponding exploitation parameters more rapidly for earlier rounds.

\section{POLICY-BASED MISSPECIFICATION TEST}
\label{app:main-policy-based-test}
In this section, we establish the foundation for the main misspecification test. By the definition of $\msafealg_i$, none of the tests corresponding to class $i$ fail until this epoch.

\textbf{Proof outline:} \Cref{sec:eventW2} provides a high-probability event for policy evaluation that holds under \Cref{ass:policy-val-estimation,ass:policy-val-evaluation}. Moving forward, all our analysis relies on the high-probability events defined so far ($\eventReg,\eventExpEval$). \Cref{app:policy-evaluation-with-oracles} provides refined policy evaluation guarantees. \Cref{sec:misspec_test1} develops the policy-based misspecification test ($\MisspecificationOracle$), and provides validated guarantees for the direct method estimates.

\subsection{High Probability Events For Explicit Policy Evaluation}\label{sec:eventW2}

In this section, we define an event $\eventExpEval$ that holds with high-probability under \Cref{ass:policy-val-estimation} and \Cref{ass:policy-val-evaluation}. At a high-level, $\eventExpEval$ defines the event where the evaluation guarantees of consistent (e.g. IPS/DR) and direct method policy estimates hold.
\begin{equation}
    \label{eq:w-event-ExpEval}
    \begin{aligned}
        &\eventExpEval := \Bigg\{ \forall m, \; \forall i, j \in [M'], \; \forall \; \pi\in\Tilde{\Pi}_i\cup\{\pi_{\hatf_{m+1}},p_{m+1,1},\dots,p_{m+1,M'} \},\\  
        & \; |\hatR_{m+1}(\pi) - R(\pi)| \leq \sqrt{V(p_m,\pi)\xi_i\Big(\frac{\tau_m-\tau_{m-1}}{2}, \frac{\delta}{6TM'^2} \Big)} + 2\gamma_m\xi_i\Big(\frac{\tau_m-\tau_{m-1}}{2}, \frac{\delta}{6TM'^2} \Big), \\
        & \; |\hatR_{m+1,\hatf_{m+1}}(\pi) - R_{\hatf_{m+1}}(\pi)| \leq \sqrt{\xi_i\Big(\frac{\tau_m-\tau_{m-1}}{2}, \frac{\delta}{6TM'^2} \Big)}, \\
        & \; |\hatR_{m+1,\hatf_{m+1,j}}(\pi) - R_{\hatf_{m+1,j}}(\pi)| \leq \sqrt{\xi_i\Big(\frac{\tau_m-\tau_{m-1}}{2}, \frac{\delta}{6TM'^2}\Big)} \Bigg\}.
    \end{aligned}
\end{equation}
In \Cref{lem:high-prob-eventPolicyVal}, we use standard union bound arguments to show that the event $\eventExpEval$ holds with high-probability. 

\begin{lem}
\label{lem:high-prob-eventPolicyVal}
Suppose \Cref{ass:policy-val-estimation} and \Cref{ass:policy-val-evaluation} hold. The event $\eventExpEval$ holds with probability at least $1-\delta/2$.
\end{lem}
\begin{proof}
Consider any epoch $m$. Note that, conditional on $\Gamma_{\tau_{m-1}}$ the number of samples in epoch $m$ are fixed and these samples are i.i.d. from the distribution $D(p_m)$. Consider any pair of model indices $i,j$. Hence with probability $1-3\frac{\delta}{6TM'^2}$, from \Cref{ass:policy-val-estimation} and \Cref{ass:policy-val-evaluation}, for all policies $\pi\in\Tilde{\Pi}_i\cup\{\pi_{\hatf_{m+1}},p_{m+1,1},\dots,p_{m+1,M'} \}$ we have:\footnote{Note that $K \leq \gamma_m$, hence $p_m(\cdot|\cdot)\geq 1/(K+\gamma_m) \geq 1/(2\gamma_m)$.}
\begin{align*}
    & \; |\hatR_{m+1}(\pi) - R(\pi)| \leq \sqrt{V(p_m,\pi)\xi_i\Big(\frac{\tau_m-\tau_{m-1}}{2}, \frac{\delta}{6TM'^2} \Big)} + 2\gamma_m\xi_i\Big(\frac{\tau_m-\tau_{m-1}}{2}, \frac{\delta}{6TM'^2} \Big), \\
    & \; |\hatR_{m+1,\hatf_{m+1}}(\pi) - R_{\hatf_{m+1}}(\pi)| \leq \sqrt{\xi_i\Big(\frac{\tau_m-\tau_{m-1}}{2}, \frac{\delta}{6TM'^2} \Big)}, \\
    & \; |\hatR_{m+1,\hatf_{m+1,j}}(\pi) - R_{\hatf_{m+1,j}}(\pi)| \leq \sqrt{\xi_i\Big(\frac{\tau_m-\tau_{m-1}}{2}, \frac{\delta}{6TM'^2}\Big)}.
\end{align*}
Hence, $\eventExpEval$ holds with probability at least:
$$ 1 - \sum_{i=1}^M\sum_{j=1}^M\sum_{m=1}^{m(T)} \frac{3\delta}{6TM'^2} \geq 1 - \delta/2.$$
\end{proof}

\subsection{Policy Evaluation}
\label{app:policy-evaluation-with-oracles}
In this section, we bound the error of $\hatR_{m+1}(\pi)$, the estimate of a policy value obtained via $\EvalOracle$.

\begin{restatable}{lem}{lemExpPolEval}
\label{lem:policy-eval-accuracy}
Suppose the event $\eventExpEval$ defined in \eqref{eq:w-event-ExpEval} holds. Then, for all class indices $i\in[M']$, policies $\pi\in\Tilde{\Pi}_i\cup\{\pi_{\hatf_{m+1}},p_{m+1,1},\dots,p_{m+1,M'} \}$, $\alpha>0$, and epochs $m\geq 1$, we have:
\begin{align*}
    |\hatR_{m+1}(\pi) - R(\pi)| \leq \bigg(\frac{1}{\alpha} + \frac{\alpha}{16} + \frac{2\gamma_m}{\gamma_{m+1,i}} \bigg)\frac{K}{\gamma_{m+1,i}} + \frac{\gamma_m}{\alpha \gamma_{m+1,i}}\Reg_{\hatf_m}(\pi).
\end{align*}
\end{restatable}
\begin{proof}
For any class index $i$, policy $\pi\in \Tilde{\Pi}_i\cup\{\pi_{\hatf_{m+1}},p_{m+1,1},\dots,p_{m+1,M'} \}$, and epoch $m\geq 1$, we have:
\begin{align*}
    &|\hatR_{m+1}(\pi) - R(\pi)|\\
    \stackrel{(i)}{\leq} & \sqrt{V(p_m,\pi)\xi_i\Big(\frac{\tau_m-\tau_{m-1}}{2}, \delta/(6TM'^2) \Big)} + 2\gamma_m\xi_i\Big(\frac{\tau_m-\tau_{m-1}}{2}, \delta/(6TM'^2) \Big)  \\
    \stackrel{(ii)}{\leq} & \frac{\sqrt{V(p_m,\pi)}\sqrt{K}}{2\gamma_{m+1,i}} + \frac{2\gamma_m}{\gamma_{m+1,i}}\frac{K}{\gamma_{m+1,i}} \\
    \stackrel{(iii)}{\leq} & \frac{V(p_m,\pi)}{\alpha\gamma_{m+1,i}} + \frac{\alpha K}{16\gamma_{m+1,i}} + \frac{2\gamma_m}{\gamma_{m+1,i}}\frac{K}{\gamma_{m+1,i}}\\
    \stackrel{(iv)}{\leq} & \bigg(\frac{1}{\alpha} + \frac{\alpha}{16} + \frac{2\gamma_m}{\gamma_{m+1,i}} \bigg)\frac{K}{\gamma_{m+1,i}} + \frac{\gamma_m}{\alpha \gamma_{m+1,i}}\Reg_{\hatf_m}(\pi),
\end{align*}
where (i) follows from $\eventExpEval$, (ii) follows from the definition of $\gamma_{m+1,i}$, (iii) follows from the AM-GM inequality for any $\alpha>0$, and (iv) follows from \Cref{lem:boundV}.
\end{proof}

\subsection{Validating Direct Method Estimates}\label{sec:misspec_test1}
In this section, we design the main policy-based misspecification test and provide the implied guarantees when some conditions of the test hold. In \Cref{lem:explicit-validation} we develop the empirical test that must hold through epoch $\msafe_i$. The implications of this test are captured in \Cref{lem:combinedprevious}, which provides guarantees through $\msafealg_i$ (by definition, the test corresponding to class $i$ is satisfied until $\msafealg_i$).

\begin{restatable}{lem}{explicitvalidation}
\label{lem:explicit-validation}
Suppose $\eventReg$ and $\eventExpEval$ hold. Consider any pair of class indices $i,j\in[M']$ such that $j\geq i$, any epoch $m\in[\msafe_i]$, $\alpha>0$, and model $f\in\{\hatf_{m+1}, \hatf_{m+1,i}\}$. Let $\theta_{i,j}:=\frac{\gamma_{m+1,i}}{\gamma_{m+1,j}}$. Then for any policy $\pi\in\allPi_{j}\cup\{\pi_{\hatf_{m+1}},p_{m+1,1},\dots,p_{m+1,M'} \}$, we have:
\begin{equation}
\begin{aligned}
    &|\hatR_{m+1}(\pi) - \hatR_{m+1,f}(\pi)| \\
    &\quad\leq \bigg( \frac{1+\theta_{i,j}}{\alpha} + \frac{(1+\theta_{i,j})\alpha}{16} + \frac{(2\theta_{i,j}^2+(1+\theta_{i,j})^2/\alpha)\gamma_m}{\gamma_{m+1,i}} + \theta_{i,j} \bigg)\frac{K}{\gamma_{m+1,i}} \\
    &\quad\quad + \frac{(1+\theta_{i,j})\gamma_m}{\alpha \gamma_{m+1,i}}\hatReg_{m+1,\hatf_m}(\pi). 
\end{aligned}
\end{equation}
\end{restatable}
\begin{proof}
Consider any pair of class indices $i,j\in[M']$ such that $j\geq i$, any epoch $m\in[\msafe_i]$, $\alpha>0$, and model $f\in\{\hatf_{m+1}, \hatf_{m+1,i}\}$.  For any policy $\pi\in\Tilde{\Pi}_{j}\cup\{\pi_{\hatf_{m+1}},p_{m+1,1},\dots,p_{m+1,M'} \}$, we have:
\begin{align*}
    &|\hatR_{m+1}(\pi) - \hatR_{m+1,f}(\pi)|\\
    \stackrel{(i)}{\leq} & |\hatR_{m+1}(\pi) - R(\pi)| + |R(\pi) - R_f(\pi)| + |R_f(\pi) - \hatR_{m+1,f}(\pi)|\\
    \stackrel{(ii)}{\leq} & \bigg( \frac{1}{\alpha} + \frac{\alpha}{16} + \frac{2\gamma_m}{\gamma_{m+1,j}} \bigg)\frac{K}{\gamma_{m+1,j}} + \frac{\gamma_m}{\alpha \gamma_{m+1,j}}\Reg_{\hatf_m}(\pi) \\ 
    & + \bigg( \frac{1}{\alpha} + \frac{\alpha}{16}\bigg)\frac{K}{\gamma_{m+1,i}} + \frac{\gamma_m}{\alpha \gamma_{m+1,i}}\Reg_{\hatf_m}(\pi)+ \frac{K}{\gamma_{m+1,j}}\\
    \stackrel{(iii)}{\leq} & \bigg( \frac{1+\theta_{i,j}}{\alpha} + \frac{(1+\theta_{i,j})\alpha}{16} + \frac{2\theta_{i,j}^2\gamma_m}{\gamma_{m+1,i}} + \theta_{i,j}  \bigg)\frac{K}{\gamma_{m+1,i}} + \frac{(1+\theta_{i,j})\gamma_m}{\alpha \gamma_{m+1,i}}\Reg_{\hatf_m}(\pi),
\end{align*}
where (i) is an application of triangle inequality, and (ii) follows from \Cref{lem:reg-est-accuracy}, \Cref{lem:policy-eval-accuracy}, and events $\eventReg$ and $\eventExpEval$. Then (iii) follows from applying the definition of parameter $\theta_{i,j}$. \Cref{lem:explicit-validation} now follows from noting that:
\begin{align*}
    & \frac{(1+\theta_{i,j})\gamma_m}{\alpha \gamma_{m+1,i}}\Reg_{\hatf_m}(\pi)\\
    &\quad \stackrel{(i)}{\leq}  \frac{(1+\theta_{i,j})\gamma_m}{\alpha \gamma_{m+1,i}}\bigg(\hatReg_{m+1,\hatf_m}(\pi) + |R_{\hatf_m}(\pi_{\hatf_m}) - \hatR_{m+1,\hatf_m}(\pi_{\hatf_m})| + |R_{\hatf_m}(\pi) - \hatR_{m+1,\hatf_m}(\pi)| \bigg)\\
    &\quad \stackrel{(ii)}{\leq} \frac{(1+\theta_{i,j})\gamma_m}{\alpha \gamma_{m+1,i}}\bigg(\hatReg_{m+1,\hatf_m}(\pi) + (1+\theta_{i,j})\frac{K}{\gamma_{m+1,i}} \bigg),
\end{align*}
where (i) is an application of triangle inequality, and (ii) follows from $\eventExpEval$.

\end{proof}

\begin{lem}
\label{lem:verified-bounds}
Suppose $\eventReg$ and $\eventExpEval$ hold. Consider any pair of class indices $i,j\in[M']$ such that $j\geq i$,  any epoch $m\in[\msafealg_{i}]$, $\alpha>0$, and model $f\in\{\hatf_{m+1}, \hatf_{m+1,i}\}$. Then for any policy $\pi\in\Tilde{\Pi}_{j}\cup\{\pi_{\hatf_{m+1}},p_{m+1,1},\dots,p_{m+1,M'} \}$, we have:
\begin{align*}
    &|R_{f}(\pi) - R(\pi)| - |\hatR_{m+1,f}(\pi) - \hatR_{m+1}(\pi)| \\
    &\leq \bigg( \frac{1}{\alpha} + \frac{\alpha}{16} + \frac{2\gamma_m}{\gamma_{m+1,j}} + 1 \bigg)\frac{K}{\gamma_{m+1,j}} + \frac{\gamma_m}{\alpha \gamma_{m+1,j}}\Reg_{\hatf_m}(\pi). 
\end{align*}
\end{lem}
\begin{proof}
Consider any pair of class indices $i,j\in[M']$ such that $j\geq i$, any epoch $m\in[\msafealg_{i}]$, $\alpha>0$, and model $f\in\{\hatf_{m+1}, \hatf_{m+1,i}\}$. Then for any policy $\pi\in\Tilde{\Pi}_{j}\cup\{\pi_{\hatf_{m+1}},p_{m+1,1},\dots,p_{m+1,M'} \}$, we have:
\begin{align*}
    &|R_{f}(\pi) - R(\pi)| - |\hatR_{m+1,f}(\pi) - \hatR_{m+1}(\pi)|\\
    &\quad\stackrel{(i)}{\leq}  |\hatR_{m+1}(\pi) - R(\pi)| + |R_f(\pi) - \hatR_{m+1,f}(\pi)|\\
    &\quad\stackrel{(ii)}{\leq}  \bigg( \frac{1}{\alpha} + \frac{\alpha}{16} + \frac{2\gamma_m}{\gamma_{m+1,j}} + 1 \bigg)\frac{K}{\gamma_{m+1,j}} + \frac{\gamma_m}{\alpha \gamma_{m+1,j}}\Reg_{\hatf_m}(\pi),
\end{align*}
where (i) is an application of triangle inequality, and (ii) follows from \Cref{lem:policy-eval-accuracy} and  $\eventExpEval$ .
\end{proof}

\begin{lem}
\label{lem:combinedprevious}
Suppose $\eventReg$ and $\eventExpEval$ hold. Consider any pair of class indices $i,j\in[M']$ such that $j\geq i$, any epoch $m\in[\msafealg_{i}]$, $\alpha>0$, and model $f\in\{\hatf_{m+1}, \hatf_{m+1,i}\}$. Let $\theta_{i,j}:=\frac{\gamma_{m+1,i}}{\gamma_{m+1,j}}$. Then for any policy $\pi\in\Tilde{\Pi}_{j}\cup\{\pi_{\hatf_{m+1}},p_{m+1,1},\dots,p_{m+1,M'} \}$, we have:
\begin{align*}
    |R_{f}(\pi) - R(\pi)| \leq & \bigg( \frac{1+2\theta_{i,j}}{\alpha} + \frac{(1+2\theta_{i,j})\alpha}{16} + \frac{(2\theta_{i,j}^2+2(1+\theta_{i,j})^2/\alpha+2\theta_{i,j})\gamma_m}{\gamma_{m+1,i}} + 2\theta_{i,j} \bigg)\frac{K}{\gamma_{m+1,i}}\\ 
    &+ \frac{(1+2\theta_{i,j})\gamma_m}{\alpha \gamma_{m+1,i}}Reg_{\hatf_m}(\pi).
\end{align*}
\end{lem}
\begin{proof}
Consider any pair of class indices $i,j\in[M']$ such that $j\geq i$, any epoch $m\in[\msafealg_{i}]$, $\alpha>0$, and model $f\in\{\hatf_{m+1}, \hatf_{m+1,i}\}$. Then for any policy $\pi\in\Tilde{\Pi}_{j}\cup\{\pi_{\hatf_{m+1}},p_{m+1,1},\dots,p_{m+1,M'} \}$, we have from \Cref{lem:verified-bounds}:
\begin{align*}
    &|R_{f}(\pi) - R(\pi)| - |\hatR_{m+1,f}(\pi) - \hatR_{m+1}(\pi)| \\
    &\leq \bigg( \frac{1}{\alpha} + \frac{\alpha}{16} + \frac{2\gamma_m}{\gamma_{m+1,j}} + 1 \bigg)\frac{K}{\gamma_{m+1,j}} + \frac{\gamma_m}{\alpha \gamma_{m+1,j}}\Reg_{\hatf_m}(\pi). 
\end{align*}
From \Cref{lem:explicit-validation}, we know that for any class index $i\in[M]$, epoch $m\in[\msafe_i]$, model $f\in\{\hatf_{m+1}, \hatf_{m+1,i}\}$, and policy $\pi\in\Tilde{\Pi}_{j}\cup\{\pi_{\hatf_{m+1}},p_{m+1,1},\dots,p_{m+1,M'} \}$, we have:
\begin{align*}
    &|\hatR_{m+1}(\pi) - \hatR_{m+1,f}(\pi)| \\
    \leq &  \bigg( \frac{1+\theta_{i,j}}{\alpha} + \frac{(1+\theta_{i,j})\alpha}{16} + \frac{(2\theta_{i,j}^2+(1+\theta_{i,j})^2/\alpha)\gamma_m}{\gamma_{m+1,i}} + \theta_{i,j} \bigg)\frac{K}{\gamma_{m+1,i}} + \frac{(1+\theta_{i,j})\gamma_m}{\alpha \gamma_{m+1,i}}\hatReg_{m+1,\hatf_m}(\pi). 
\end{align*}
Combining the above results, we have:
\begin{align*}
    &|R_{f}(\pi) - R(\pi)| \\
    &\leq \bigg( \frac{1+\theta_{i,j}}{\alpha} + \frac{(1+\theta_{i,j})\alpha}{16} + \frac{(2\theta_{i,j}^2+(1+\theta_{i,j})^2/\alpha)\gamma_m}{\gamma_{m+1,i}} + \theta_{i,j} \bigg)\frac{K}{\gamma_{m+1,i}}+ \frac{(1+\theta_{i,j})\gamma_m}{\alpha \gamma_{m+1,i}}\hatReg_{m+1,\hatf_m}(\pi)\\
    &\quad  
    +\bigg( \frac{1}{\alpha} + \frac{\alpha}{16} + \frac{2\gamma_m}{\gamma_{m+1,j}} + 1 \bigg)\frac{K}{\gamma_{m+1,j}} + \frac{\gamma_m}{\alpha \gamma_{m+1,j}}\Reg_{\hatf_m}(\pi)\\
    &\stackrel{(i)}{\leq}\bigg( \frac{1+2\theta_{i,j}}{\alpha} + \frac{(1+2\theta_{i,j})\alpha}{16} + \frac{(2\theta_{i,j}^2+(1+\theta_{i,j})^2/\alpha+2\theta_{i,j})\gamma_m}{\gamma_{m+1,i}} + 2\theta_{i,j} \bigg)\frac{K}{\gamma_{m+1,i}}\\
    & \quad +\frac{(1+\theta_{i,j})\gamma_m}{\alpha\gamma_{m+1,i}}\hatReg_{m+1,\hatf_m}(\pi) + \frac{\theta_{i,j}\gamma_m}{\alpha\gamma_{m+1,i}}\Reg_{\hatf_m}(\pi),
\end{align*}
where (i) follows from plugging in $\frac{1}{\gamma_{m+1,j}}\leq\frac{\theta_{i,j}}{\gamma_{m+1,i}}$.
We can then combine the last two terms using the same approach used in the proof of \Cref{lem:explicit-validation}:
\begin{align*}
    &\frac{(1+\theta_{i,j})\gamma_m}{\alpha \gamma_{m+1,i}}\hatReg_{m+1,\hatf_m}(\pi)\\
    &\quad\stackrel{(i)}{\leq} \frac{(1+\theta_{i,j})\gamma_m}{\alpha \gamma_{m+1,i}}\bigg(\Reg_{\hatf_m}(\pi) + |R_{\hatf_m}(\pi_{\hatf_m}) - \hatR_{m+1,\hatf_m}(\pi_{\hatf_m})| + |R_{\hatf_m}(\pi) - \hatR_{m+1,\hatf_m}(\pi)| \bigg)\\
    &\quad \stackrel{(ii)}{\leq}  \frac{(1+\theta_{i,j})\gamma_m}{\alpha \gamma_{m+1,i}}\bigg(\Reg_{\hatf_m}(\pi) + (1+\theta_{i,j})\frac{K}{\gamma_{m+1,i}} \bigg).
\end{align*}
Here, (i) is an application of triangle inequality, and (ii) follows from $\eventExpEval$.
Applying this to our expression above gives the final form for \Cref{lem:combinedprevious}:
\begin{align*}
    &|R_{f}(\pi) - R(\pi)| \\
    &\leq\bigg( \frac{1+2\theta_{i,j}}{\alpha} + \frac{(1+2\theta_{i,j})\alpha}{16} + \frac{(2\theta_{i,j}^2+2(1+\theta_{i,j})^2/\alpha+2\theta_{i,j})\gamma_m}{\gamma_{m+1,i}} + 2\theta_{i,j} \bigg)\frac{K}{\gamma_{m+1,i}}\\
    &\quad +\frac{(1+2\theta_{i,j})\gamma_m}{\alpha \gamma_{m+1,i}}\Reg_{\hatf_m}(\pi).
\end{align*}
\end{proof}

\section{VERIFYING REWARD MODEL AGREEMENT}\label{app:reward-model-agreement}

In this section, we design the remainder of the misspecification test. In particular, we ensure agreement in reward models estimated across classes and epochs. We first prove an inductive result that allows us to relate the true regret to the regret according to estimated models. Then, in \Cref{sec:classgap}, we develop an empirical test for reward model agreement and prove verified guarantees that hold as long as the test doesn't fail. By the definition of $\msafealg_i$, none of the tests corresponding to class $i$ fail until this epoch.

\textbf{Proof outline:} \Cref{lem:inductive-step-0.1-mstar} proves our key inductive step that holds within safe epochs. \Cref{lem:policy-reg-bound,lem:policy-reg-bound-intermediate} use this inductive step to relate the true regret to the regret according to estimated models (within safe epochs). \Cref{lem:new_mtest} provides the expected reward model agreement across classes and describes the corresponding test. \Cref{lem:new_mtest_implication} provides implied guarantees as long this test holds. \Cref{lem:policy-reg-bound-wrt-old-model,lem:policy-reg-bound-wrt-old-model-test} provide the expected reward model agreement across epochs and describe the corresponding test. \Cref{lem:policy-reg-bound-wrt-old-model-test-implication} provides implied guarantees as long this test holds.

\begin{lem}
\label{lem:inductive-step-0.1-mstar}
Suppose the event $\eventReg$ defined in \eqref{eq:w-event-Reg} holds. Consider any class index $i\in[M']$ and consider any epoch $m\in[\msafe_i]$. Suppose there exists a constant ($\eta>0$) such that for all policies $\pi$, we have:
\begin{align*}
    \Reg(\pi) &\leq 2\Reg_{\hatf_{m}}(\pi) + \frac{\eta K}{\gamma_{m,i}}\\
    \Reg_{\hatf_{m}}(\pi) &\leq 2\Reg(\pi) + \frac{\eta K}{\gamma_{m,i}}.
\end{align*}
We then have that:
\begin{align*}
   \Reg(\pi) &\leq 2\Reg_{f}(\pi) + \frac{\eta' K}{\gamma_{m+1,i}},\quad \forall f\in\{\hatf_{m+1},\hatf_{m+1,i} \} \\
    \Reg_{\hatf_{m+1}}(\pi) &\leq 2\Reg(\pi) + \frac{\eta' K}{\gamma_{m+1,i}}.
\end{align*}
where $\eta'= 2\max\Big(\frac{\gamma_m}{\gamma_{m,i}},\sqrt{1+\frac{\gamma_m}{\gamma_{m,i}}\eta}\Big)$.
\end{lem}
\begin{proof}
Let $\alpha$ be any positive constant, and let $\alpha'=\gamma_{m}/\gamma_{m,i}$. Note that for any $f\in\{\hatf_{m+1},\hatf_{m+1,i}\}$, we have:
\begin{equation}
\label{eq:inductive-bound-for-pi-mstar}
\begin{aligned}
    \Reg(\pi) - \Reg_{f}(\pi)
    =& \Big(R(\pi^*) - R(\pi)\Big) - \Big(R_{f}(\pi_{f}) - R_{f}(\pi)\Big) \\
    \stackrel{(i)}{\leq} & \Big(R(\pi^*) - R(\pi)\Big) -  \Big(R_{f}(\pi^*) - R_{f}(\pi)\Big) \\
    \stackrel{(ii)}{\leq} & |R(\pi^*) - R_{f}(\pi^*)| + |R(\pi) - R_{f}(\pi))|\\
    \stackrel{(iii)}{\leq} & \bigg( \frac{2}{\alpha} + \frac{\alpha}{8} \bigg)\frac{K}{\gamma_{m+1,i}} + \frac{\gamma_m}{\alpha \gamma_{m+1,i}}\bigg(\Reg_{\hat{f}_m}(\pi) + \Reg_{\hat{f}_m}(\pi^*) \bigg),
\end{aligned}
\end{equation}
where (i) follows from the definition of $\pi_{f}$ for $f\in\{\hatf_{m+1},\hatf_{m+1,i}\}$, (ii) follows from the triangle inequality, and (iii) follows from \Cref{lem:reg-est-accuracy}. Now note that:
\begin{equation}
\label{eq:inductive-hyp-for-pi-mstar}
    \begin{aligned}
    &\frac{\gamma_m}{\alpha\gamma_{m+1,i}}\Reg_{\hatf_m}(\pi)  \stackrel{(i)}{\leq} \frac{\gamma_m}{\alpha\gamma_{m+1,i}}\Big( 2\Reg(\pi) + \frac{\eta K}{\gamma_{m,i}}\Big)  \leq \frac{2\alpha'}{\alpha}\Reg(\pi) + \frac{\alpha'\eta K}{\alpha\gamma_{m+1,i}},
    \end{aligned}
\end{equation}
where (i) follows from the conditions stated in \Cref{lem:inductive-step-0.1-mstar}. Similarly note that:
\begin{align}
\label{eq:inductive-hyp-for-opt-pi-mstar}
    \frac{\gamma_m}{\alpha\gamma_{m+1,i}}\Reg_{\hatf_m}(\pi^*)  \stackrel{(i)}{\leq} \frac{\gamma_m}{\alpha\gamma_{m+1,i}}\Big( 2\Reg(\pi^*) + \frac{\eta K}{\gamma_{m,i}} \Big)  \stackrel{(ii)}{=} \frac{\alpha'\eta K}{\alpha\gamma_{m+1,i}},
\end{align}
where (i) follows from the conditions stated in \Cref{lem:inductive-step-0.1-mstar}, and (ii) follows from the fact that $\Reg(\pi^*)=0$. Now from combining \eqref{eq:inductive-bound-for-pi-mstar}, \eqref{eq:inductive-hyp-for-pi-mstar}, and \eqref{eq:inductive-hyp-for-opt-pi-mstar}, we get:
\begin{equation}
\label{eq:IH-implication-mstar}
    \begin{aligned}
    \Reg(\pi) - \Reg_{f}(\pi) &\leq \bigg( \frac{2}{\alpha} + \frac{\alpha}{8} + \frac{2\alpha'\eta}{\alpha} \bigg)\frac{K}{\gamma_{m+1,i}} + \frac{2\alpha'}{\alpha}\Reg(\pi)\\
    \frac{\alpha-2\alpha'}{\alpha}\Reg(\pi) &\leq \Reg_{f}(\pi) + \bigg( \frac{2}{\alpha} + \frac{\alpha}{8} + \frac{2\alpha'\eta}{\alpha} \bigg)\frac{K}{\gamma_{m+1,i}}\\
    \Reg(\pi) &\leq \frac{\alpha}{\alpha-2\alpha'}\Reg_{f}(\pi) + \frac{\alpha}{\alpha-2\alpha'}\bigg( \frac{2}{\alpha} + \frac{\alpha}{8} + \frac{2\alpha'\eta}{\alpha} \bigg)\frac{K}{\gamma_{m+1,i}}.
    \end{aligned}
\end{equation}
Similar to \eqref{eq:inductive-bound-for-pi-mstar}, we get:
\begin{equation}
\label{eq:inductive-bound-for-pi-est-mstar}
\begin{aligned}
    \Reg_{\hatf_{m+1}}(\pi) - \Reg(\pi)
    =& \Big(R_{\hatf_{m+1}}(\pi_{\hatf_{m+1}}) - R_{\hatf_{m+1}}(\pi)\Big) - \Big(R(\pi^*) - R(\pi)\Big) \\
    \stackrel{(i)}{\leq} & \Big(R_{\hatf_{m+1}}(\pi_{\hatf_{m+1}}) - R_{\hatf_{m+1}}(\pi)\Big) - \Big(R(\pi_{\hatf_{m+1}}) - R(\pi)\Big) \\
    \stackrel{(ii)}{\leq} & |R(\pi_{\hatf_{m+1}}) - R_{\hatf_{m+1}}(\pi_{\hatf_{m+1}})| + |R(\pi) - R_{\hatf_{m+1}}(\pi))| \\
    \stackrel{(iii)}{\leq} & \bigg( \frac{2}{\alpha} + \frac{\alpha}{8} \bigg)\frac{K}{\gamma_{m+1,i}} + \frac{\gamma_m}{\alpha \gamma_{m+1,i}}\bigg(\Reg_{\hat{f}_{m}}(\pi) + \Reg_{\hat{f}_{m}}(\pi_{\hatf_{m+1}}) \bigg),
\end{aligned}
\end{equation}
where (i) follows from the definition of $\pi^*$, (ii) follows from the triangle inequality, and (iii) follows from \Cref{lem:reg-est-accuracy}. Similar to \eqref{eq:inductive-hyp-for-opt-pi-mstar}, we get:
\begin{equation}
\label{eq:inductive-hyp-for-mplusone-est-mstar}
    \begin{aligned}
        \frac{\gamma_m}{\alpha\gamma_{m+1,i}}\Reg_{\hatf_m}(\pi_{\hatf_{m+1}}) &\stackrel{(i)}{\leq} \frac{\gamma_m}{\alpha\gamma_{m+1,i}}\Big( 2\Reg(\pi_{\hatf_{m+1}}) + \frac{\eta K}{\gamma_{m,i}}  \Big) \\
        & \leq \frac{2\alpha'}{\alpha}\Reg(\pi_{\hat{f}_{m+1}}) + \frac{\alpha'\eta K}{\alpha\gamma_{m+1,i}}\\
        & \stackrel{(ii)}{\leq} \frac{2\alpha'}{\alpha}\Big( \frac{\alpha}{\alpha-2\alpha'}\bigg( \frac{2}{\alpha} + \frac{\alpha}{8} + \frac{2\alpha'\eta}{\alpha} \bigg)\frac{K}{\gamma_{m+1,i}}\Big) + \frac{\alpha'\eta K}{\alpha\gamma_{m+1,i}},
    \end{aligned}
\end{equation}
where (i) follows from the conditions stated in \Cref{lem:inductive-step-0.1-mstar}, and (ii) follows from \eqref{eq:IH-implication-mstar}. Combining  \eqref{eq:inductive-hyp-for-pi-mstar}, \eqref{eq:IH-implication-mstar}, \eqref{eq:inductive-bound-for-pi-est-mstar}, and \eqref{eq:inductive-hyp-for-mplusone-est-mstar}, we get:
\begin{equation}
\label{eq:IH-implication-est-mstar}
    \begin{aligned}
    \Reg_{\hatf_{m+1}}(\pi) - \Reg(\pi) &\leq \bigg( \frac{2}{\alpha} + \frac{\alpha}{8} + \frac{2\alpha'\eta}{\alpha} \bigg)\frac{K}{\gamma_{m+1,i}}\bigg(1+\frac{2\alpha'}{\alpha - 2\alpha'} \bigg) + \frac{2\alpha'\Reg(\pi)}{\alpha}.\\
    \Reg_{\hatf_{m+1}}(\pi) &\leq \frac{\alpha + 2\alpha'}{\alpha}\Reg(\pi) + \frac{\alpha}{\alpha-2\alpha'}\bigg( \frac{2}{\alpha} + \frac{\alpha}{8} + \frac{2\alpha'\eta}{\alpha} \bigg)\frac{K}{\gamma_{m+1,i}}.
    \end{aligned}
\end{equation}
If $\alpha \geq 4\alpha'$, we have that:
\begin{align}
\label{eq:bound-multiplicative-factor-on-Reg-mstar}
    \frac{\alpha + 2\alpha'}{\alpha} \leq 2, \text{ and }\; \frac{\alpha}{\alpha-2\alpha'} \leq 2.
\end{align}
Further, if it is also true that $\alpha\geq 4\sqrt{1+\alpha'\eta}$, we get:
\begin{equation}
\label{eq:bound-additive-factor-on-Reg-mstar}
    \begin{aligned}
        \frac{\alpha}{\alpha-2\alpha'}\bigg( \frac{2}{\alpha} + \frac{\alpha}{8} + \frac{2\alpha'\eta}{\alpha} \bigg) & \leq 2\bigg( \frac{2}{\alpha} + \frac{\alpha}{8} + \frac{2\alpha'\eta}{\alpha} \bigg)\\
        & \leq 2\bigg(\frac{\alpha}{4}\bigg).
    \end{aligned}
\end{equation}
We therefore choose $\alpha=4\max(\alpha',\sqrt{1+\alpha'\eta})$.
We finally get the required result by combining \eqref{eq:IH-implication-mstar}, \eqref{eq:IH-implication-est-mstar}, \eqref{eq:bound-multiplicative-factor-on-Reg-mstar}, and \eqref{eq:bound-additive-factor-on-Reg-mstar}.
\end{proof}

\begin{restatable}[]{lem}{lemboundReg}
\label{lem:policy-reg-bound}
Suppose the event $\eventReg$ holds. Consider any class index $i\in[M']$. For all policies $\pi$ and epochs $m\leq \msafe_i+1$ we have:
\begin{align*}
    \Reg(\pi) &\leq 2\Reg_{f}(\pi) + \frac{\eta_{i,m} K}{\gamma_{m,i}}, \quad\forall f\in\{\hatf_m,\hatf_{m,i} \}, \\
    \Reg_{\hatf_{m}}(\pi) &\leq 2\Reg(\pi) + \frac{\eta_{i,m} K}{\gamma_{m,i}},
\end{align*}
where $\eta_{i,m} = 2 + 4(\gamma_{m-1,1}/\gamma_{m-1,i})$.
\end{restatable}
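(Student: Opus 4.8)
The plan is to prove both inequalities simultaneously by induction on the epoch index $m$, using \Cref{lem:inductive-step} as the inductive engine. That lemma converts a two-sided comparison between $\Reg$ and $\Reg_{\hatf_m}$ at epoch $m$ into the corresponding comparison at epoch $m+1$, so the only real work is to verify the base case and to check that the constant $\eta_{i,m}$ advertised in the statement is self-consistent under the recursion $\eta \mapsto \eta' = 2\max\big(\gamma_m/\gamma_{m,i},\, \sqrt{1 + (\gamma_m/\gamma_{m,i})\,\eta}\big)$ that \Cref{lem:inductive-step} supplies (which is legitimate precisely because we restrict to $m \in [\msafe_i]$, where \Cref{lem:reg-est-accuracy} and hence \Cref{lem:inductive-step} apply under $\eventReg$).

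For the base case $m = 1$, recall $\hatf_1 \equiv 0$, so $\Reg_{\hatf_1}(\pi) = 0$ for every $\pi$; since rewards lie in $[0,1]$ we also have $\Reg(\pi) \le 1$. With the conventions $\gamma_{0,i} = \gamma_{1,i} = 1$ we get $\eta_{i,1} = 2 + 4\gamma_{0,1}/\gamma_{0,i} = 6$ and $\gamma_{1,i} = 1$, so the two claimed inequalities reduce to $\Reg(\pi) \le 6K$ and $0 \le 2\Reg(\pi) + 6K$, both of which hold because $K \ge 1$. For the inductive step, suppose the inequalities hold at some epoch $m \le \msafe_i$ with constant $\eta_{i,m}$; invoking \Cref{lem:inductive-step} with $\eta = \eta_{i,m}$ yields the epoch-$(m+1)$ inequalities with constant $\eta'$ as above. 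Since the right-hand sides are increasing in the constant, it then suffices to show $\eta' \le \eta_{i,m+1} = 2 + 4\gamma_{m,1}/\gamma_{m,i}$, after which I replace $\eta'$ by the larger $\eta_{i,m+1}$ and the induction closes, covering all $m \in \{1,\dots,\msafe_i+1\}$.

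The crux is this last inequality, and it rests on two monotonicity facts about the exploration parameters. First, since $\gamma_m = \gamma_{m,i_m}$ with $i_m \ge 1$ and $\gamma_{m,\cdot}$ is non-increasing in the class index (larger classes explore more), we have $s := \gamma_m/\gamma_{m,i} \le \gamma_{m,1}/\gamma_{m,i} =: r$. Second, by the validity condition \eqref{eq:xi-second-condition} the ratio $\gamma_{m,1}/\gamma_{m,i}$ is non-decreasing in $m$, so $\gamma_{m-1,1}/\gamma_{m-1,i} \le r$ and hence $\eta_{i,m} = 2 + 4\gamma_{m-1,1}/\gamma_{m-1,i} \le 2 + 4r$. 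For the first branch of the max, $2s \le 2r \le 2 + 4r = \eta_{i,m+1}$; for the second, it suffices to verify $2\sqrt{1 + s\eta_{i,m}} \le 2 + 4r$, which after squaring reduces to $s\,\eta_{i,m} \le 4r(1+r)$, and indeed $s\,\eta_{i,m} \le r(2+4r) = 2r + 4r^2 \le 4r + 4r^2 = 4r(1+r)$ since $r \ge 1$.

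I expect the only delicate part to be this bookkeeping: correctly identifying $\gamma_m/\gamma_{m,i} \le \gamma_{m,1}/\gamma_{m,i}$ and confirming that the epoch-monotonicity of the $\gamma$-ratio inherited from \eqref{eq:xi-second-condition} lets me upper bound $\eta_{i,m}$ (which is anchored at epoch $m-1$) by $2 + 4r$. Once those two facts are in hand, the remaining algebra is elementary, and every other ingredient is a direct application of \Cref{lem:inductive-step} together with the base-case evaluation at $\hatf_1 \equiv 0$.
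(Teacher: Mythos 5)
Your proof is correct and follows essentially the same route as the paper's: induction on $m$ with \Cref{lem:inductive-step} as the engine, the same base case at $\hatf_1\equiv 0$ with $\gamma_{1,i}=\gamma_{0,i}=1$, and the same two monotonicity facts ($\gamma_m \le \gamma_{m,1}$ and the epoch-monotonicity of $\gamma_{m,1}/\gamma_{m,i}$ asserted via \eqref{eq:xi-second-condition}) to verify $\eta' \le \eta_{i,m+1}$. Your algebra is organized slightly differently (bounding $\eta_{i,m}\le 2+4r$ first and then squaring) but is equivalent to the paper's substitution inside the square root, so there is nothing to flag.
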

\begin{proof}
We will prove this by induction. The base case follows from the fact that for all policies $\pi$, we have:
\begin{align*}
    \Reg(\pi) \leq 1 \leq \eta_1 K/\gamma_{1,i}\\
    \Reg_{\hatf_{1}}(\pi) \leq 1 \leq \eta_1 K/\gamma_{1,i}.
\end{align*}
For the inductive step, fix some $m\leq \msafe_i$. Assume for all policies $\pi$, we have:
\begin{align*}
    \Reg(\pi) &\leq 2\Reg_{f}(\pi) + \frac{\eta_{i,m} K}{\gamma_{m,i}},\quad \forall f\in\{\hatf_m,\hatf_{m,i} \},\\
    \Reg_{\hatf_{m}}(\pi) &\leq 2\Reg(\pi) + \frac{\eta_{i,m} K}{\gamma_{m,i}}.
\end{align*}
Therefore, from \Cref{lem:inductive-step-0.1-mstar} we have:
\begin{align*}
    \Reg(\pi) &\leq 2\Reg_{f}(\pi) + \frac{\eta'_{i,m+1} K}{\gamma_{m+1,i}}, \quad\forall f\in\{\hatf_{m+1},\hatf_{m+1,i}\}, \\
    \Reg_{\hatf_{m+1}}(\pi) &\leq 2\Reg(\pi) + \frac{\eta'_{i,m+1} K}{\gamma_{m+1,i}},
\end{align*}
where $\eta'_{i,m+1}= 2\max\Big(\frac{\gamma_m}{\gamma_{m,i}},\sqrt{1+\frac{\gamma_m}{\gamma_{m,i}}\eta_{i,m}}\Big)$. Then we have:
\begin{align*}
    \eta'_{i,m+1}&=2\max\bigg(\frac{\gamma_m}{\gamma_{m,i}},\sqrt{1+\frac{\gamma_m}{\gamma_{m,i}}\eta_{i,m}}\bigg) \\
    &\leq 2\max\bigg(\frac{\gamma_{m,1}}{\gamma_{m,i}},\sqrt{1+\frac{\gamma_{m,1}}{\gamma_{m,i}}\eta_{i,m}}\bigg)\\
    & \leq 2\max\bigg(\frac{\gamma_{m,1}}{\gamma_{m,i}},\sqrt{1 + 2^2\frac{\gamma_{m,1}^2}{\gamma_{m,i}^2} + 2\frac{\gamma_{m,1}}{\gamma_{m,i}} }\bigg) \\
    & \leq \max\bigg(2\frac{\gamma_{m,1}}{\gamma_{m,i}},\eta_{i,m+1} \bigg) \\
    &= \eta_{i,m+1}.
\end{align*}
This completes the inductive argument.
\end{proof}

\begin{lem}
\label{lem:policy-reg-bound-intermediate}
Suppose the event $\eventReg$ holds. Consider any two class indices $i,h\in[M']$ such that $h\leq i$. For all policies $\pi$ and epochs $m\in [\msafealg_{h-1}+1, \msafe_i+1]$, we have:
\begin{align*}
    \Reg(\pi) &\leq 2\Reg_{f}(\pi) + 8\frac{\gamma_{m-1,h}}{\gamma_{m-1,i}}\bigg(\frac{\gamma_{\msafealg_{h-1},1}}{\gamma_{\msafealg_{h-1},i}}\bigg)^{1/2^{m-\msafealg_{h-1}-1}}\frac{K}{\gamma_{m,i}}, \quad\forall f\in\{\hatf_{m},\hatf_{m,i}\}, \\
    \Reg_{\hatf_{m}}(\pi) &\leq 2\Reg(\pi) + 8\frac{\gamma_{m-1,h}}{\gamma_{m-1,i}}\bigg(\frac{\gamma_{\msafealg_{h-1},1}}{\gamma_{\msafealg_{h-1},i}}\bigg)^{1/2^{m-\msafealg_{h-1}-1}}\frac{K}{\gamma_{m,i}}.
\end{align*}
\end{lem}
\begin{proof}
Consider any two class indices $i,h\in[M']$ such that $h\leq i$. We will prove the required bound by induction. The bound for the base case $m=\msafealg_{h-1}+1$ follows from \Cref{lem:policy-reg-bound}. Suppose the bound in \Cref{lem:policy-reg-bound-intermediate} holds for class indices $i,h$ and for some epoch $m\in [\msafealg_{h-1}+1, \msafe_i]$. From \Cref{lem:inductive-step-0.1-mstar}, we have:
\begin{align*}
    \Reg(\pi) &\leq 2\Reg_{f}(\pi) + \frac{\eta' K}{\gamma_{m+1,i}}, \quad\forall f\in\{\hatf_{m+1},\hatf_{m+1,i}\}, \\
    \Reg_{\hatf_{m+1}}(\pi) &\leq 2\Reg(\pi) + \frac{\eta' K}{\gamma_{m+1,i}},
\end{align*}
along with:
\begin{align*}
    \eta'&=2\max\Bigg(\frac{\gamma_m}{\gamma_{m,i}},\sqrt{1+\frac{\gamma_m}{\gamma_{m,i}}8\frac{\gamma_{m-1,h}}{\gamma_{m-1,i}}\bigg(\frac{\gamma_{\msafealg_{h-1},1}}{\gamma_{\msafealg_{h-1},i}}\bigg)^{1/2^{m-\msafealg_{h-1}-1}}}\Bigg) \\
    & \stackrel{(i)}{\leq} 2\max\Bigg(\frac{\gamma_{m,h}}{\gamma_{m,i}},\sqrt{1+\frac{\gamma_{m,h}}{\gamma_{m,i}}8\frac{\gamma_{m,h}}{\gamma_{m,i}}\bigg(\frac{\gamma_{\msafealg_{h-1},1}}{\gamma_{\msafealg_{h-1},i}}\bigg)^{1/2^{m-\msafealg_{h-1}-1}}}\Bigg) \\
    & \stackrel{(ii)}{\leq} 8\frac{\gamma_{m,h}}{\gamma_{m,i}}\bigg(\frac{\gamma_{\msafealg_{h-1},1}}{\gamma_{\msafealg_{h-1},i}}\bigg)^{1/2^{m-\msafealg_{h-1}}},%
\end{align*}
where (i) follows from the fact that $\gamma_{m}\leq \gamma_{m,h}$ (since $m\geq \msafealg_{h-1}+1$), and \newline  $\gamma_{m-1,h}/\gamma_{m-1,i}\leq \gamma_{m,h}/\gamma_{m,i}$ for $h\leq i$. Then (ii) follows from the fact that $1+z\leq 2z$ for $z\geq 1$. Hence we have shown the bound in \Cref{lem:policy-reg-bound-intermediate} holds for class indices $i,h$ and epoch $m+1$. This completes our inductive argument.
\end{proof}

\subsection{Verifying Reward Model Agreement Across Classes}\label{sec:classgap}
In \Cref{lem:new_mtest}, we develop a bound on $\hatReg_{m+1,\hat{f}_{m+1}}(\pi_{\hatf{_m+1,i}}),$ which indicates whether the policy induced by the model predicted for class $i$ is considered to be a good policy by the model we have estimated. When this bound is exceeded, it suggests that we should use the exploitation parameter corresponding to larger classes. The implications of this test are captured in \Cref{lem:new_mtest_implication}, which provides a bound on $\Reg_{m+1,\hat{f}_{m+1}}(\pi_{\hatf{_m+1,i}})$ through $\msafealg_i,$ provided the test is satisfied.

\begin{restatable}{lem}{mtesttwo}
\label{lem:new_mtest}
Suppose the events $\eventReg$ and $\eventExpEval$ hold. Consider $h\leq i$ and $m\in[\msafealg_{h-1},\msafe_i]$, we then have:
\begin{equation}
    \begin{aligned}
        \hatReg_{m+1,\hatf_{m+1}}(\pi_{\hatf_{m+1,i}})\leq 26\frac{\gamma_{m,h}}{\gamma_{m,i}}\bigg(\frac{\gamma_{\msafealg_{h-1},1}}{\gamma_{\msafealg_{h-1},i}}\bigg)^{1/2^{m-\msafealg_{h-1}}}\frac{K}{\gamma_{m+1,i}}.
    \end{aligned}
\end{equation} 
\end{restatable}
\begin{proof}
From \Cref{lem:policy-reg-bound-intermediate}, we have the following for any policy $\pi$:
\begin{equation*}
    \begin{aligned}
        \Reg(\pi) &\leq 2\Reg_{f}(\pi) + 8\frac{\gamma_{m,h}}{\gamma_{m,i}}\bigg(\frac{\gamma_{\msafealg_{h-1},1}}{\gamma_{\msafealg_{h-1},i}}\bigg)^{1/2^{m-\msafealg_{h-1}}}\frac{K}{\gamma_{m+1,i}}, \quad\forall f\in\{\hatf_{m+1},\hatf_{m+1,i} \}, \\
        \Reg_{\hatf_{m+1}}(\pi) &\leq 2\Reg(\pi) + 8\frac{\gamma_{m,h}}{\gamma_{m,i}}\bigg(\frac{\gamma_{\msafealg_{h-1},1}}{\gamma_{\msafealg_{h-1},i}}\bigg)^{1/2^{m-\msafealg_{h-1}}}\frac{K}{\gamma_{m+1,i}}.
    \end{aligned}
\end{equation*}
Combining the above and plugging in $\pi=\pi_{\hat{f}_{m+1,i}}$, we have:
\begin{equation}
\begin{aligned}
\label{eq:new_test_bound}
\Reg_{\hatf_{m+1}}(\pi_{\hatf_{m+1,i}}) &\leq 2\Reg(\pi_{\hatf_{m+1,i}}) + 8\frac{\gamma_{m,h}}{\gamma_{m,i}}\bigg(\frac{\gamma_{\msafealg_{h-1},1}}{\gamma_{\msafealg_{h-1},i}}\bigg)^{1/2^{m-\msafealg_{h-1}}}\frac{K}{\gamma_{m+1,i}}\\
&\leq 4\Reg_{\hatf_{m+1,i}}(\pi_{\hatf_{m+1,i}}) +24\frac{\gamma_{m,h}}{\gamma_{m,i}}\bigg(\frac{\gamma_{\msafealg_{h-1},1}}{\gamma_{\msafealg_{h-1},i}}\bigg)^{1/2^{m-\msafealg_{h-1}}}\frac{K}{\gamma_{m+1,i}}\\
&\leq 24\frac{\gamma_{m,h}}{\gamma_{m,i}}\bigg(\frac{\gamma_{\msafealg_{h-1},1}}{\gamma_{\msafealg_{h-1},i}}\bigg)^{1/2^{m-\msafealg_{h-1}}}\frac{K}{\gamma_{m+1,i}},
\end{aligned}
\end{equation}
where the last inequality follows from $\Reg_{\hatf_{m+1,i}}(\pi_{\hatf_{m+1,i}})=0$. We then have:
\begin{align*}
&\hatReg_{m+1,\hatf_{m+1}}(\pi_{\hatf_{m+1,i}})\\
&=\hatR_{m+1,\hatf_{m+1}}(\pi_{\hatf_{m+1}})-\hatR_{m+1,\hatf_{m+1}}(\pi_{\hatf_{m+1,i}})\\
&=\big(\hatR_{m+1,\hatf_{m+1}}(\pi_{\hatf_{m+1}}) - R_{\hatf_{m+1}}(\pi_{\hatf_{m+1}})\big)\\
&\quad+ \big(R_{\hatf_{m+1}}(\pi_{\hatf_{m+1,i}})-\hatR_{m+1,\hatf_{m+1}}(\pi_{\hatf_{m+1,i}})\big) + \Reg_{\hatf_{m+1}}(\pi_{\hatf_{m+1,i}})\\
&\stackrel{(i)}{\leq} \frac{2K}{\gamma_{m+1,i}}+\Reg_{\hatf_{m+1}}(\pi_{\hatf_{m+1,i}})\\
&\stackrel{(ii)}{\leq} 26\frac{\gamma_{m,h}}{\gamma_{m,i}}\bigg(\frac{\gamma_{\msafealg_{h-1},1}}{\gamma_{\msafealg_{h-1},i}}\bigg)^{1/2^{m-\msafealg_{h-1}}}\frac{K}{\gamma_{m+1,i}},
\end{align*}
where (i) follows from $\eventExpEval$ and (ii) follows from \eqref{eq:new_test_bound}.
\end{proof}

\begin{lem}
\label{lem:new_mtest_implication}
Suppose the event $\eventExpEval$ holds. Consider $h\leq i$ and $m\in[\msafealg_{h-1},\msafealg_i]$, we then have:
\begin{equation*}
    \begin{aligned}
        \Reg_{\hatf_{m+1}}(\pi_{\hatf_{m+1,i}})\leq 28\frac{\gamma_{m,h}}{\gamma_{m,i}}\bigg(\frac{\gamma_{\msafealg_{h-1},1}}{\gamma_{\msafealg_{h-1},i}}\bigg)^{1/2^{m-\msafealg_{h-1}}}\frac{K}{\gamma_{m+1,i}}.
    \end{aligned}
\end{equation*}
\end{lem}
\begin{proof}
Since the test in \Cref{lem:new_mtest} holds up to $\msafealg_i$, we have the following guarantee:%
\begin{equation}
\begin{aligned}
\label{eq:bound_reg_fhat_pifhati}
\Reg_{\hatf_{m+1}}(\pi_{\hatf_{m+1,i}})&=R_{\hatf_{m+1}}(\pi_{\hatf_{m+1}})-R_{\hatf_{m+1}}(\pi_{\hatf_{m+1,i}})\\
&=\big(R_{\hatf_{m+1}}(\pi_{\hatf_{m+1}}) - \hatR_{m+1,\hatf_{m+1}}(\pi_{\hatf_{m+1}})\big)\\
&\quad+ \big(\hatR_{m+1,\hatf_{m+1}}(\pi_{\hatf_{m+1,i}})-R_{\hatf_{m+1}}(\pi_{\hatf_{m+1,i}})\big) + \hatReg_{m+1,\hatf_{m+1}}(\pi_{\hatf_{m+1,i}})\\
&\leq \frac{2K}{\gamma_{m+1,i}}+\hatReg_{m+1,\hatf_{m+1}}(\pi_{\hatf_{m+1,i}})\\
&\leq 28\frac{\gamma_{m,h}}{\gamma_{m,i}}\bigg(\frac{\gamma_{\msafealg_{h-1},1}}{\gamma_{\msafealg_{h-1},i}}\bigg)^{1/2^{m-\msafealg_{h-1}}}\frac{K}{\gamma_{m+1,i}}.
\end{aligned}
\end{equation}
\end{proof}

\subsection{Verifying Reward Model Agreement Across Epochs}

The goal of this section is to verify that potential new exploration policies had sufficiently low regret according to models in previous epochs. This helps ensure that these new exploration policies were well-explored in previous epochs and we can rely on our estimates for these policies. \Cref{lem:policy-reg-bound-wrt-old-model} provides the expected reward model agreement across epochs by bounding $\Reg_{\hatf_m}(\pi)$ in terms of regret according to $\hatf_{m+1}$ and $\hatf_{m+1,i}$.
\Cref{lem:policy-reg-bound-wrt-old-model-test} describes the corresponding test. \Cref{lem:policy-reg-bound-wrt-old-model-test-implication} provides implied guarantees as long this test holds.

\begin{lem}
\label{lem:policy-reg-bound-wrt-old-model}
Suppose the event $\eventReg$ holds. Consider any two class indices $i,h\in[M']$ such that $h\leq i$. For all policies $\pi$ and epochs $m\in [\msafealg_{h-1}, \msafe_i]$, we have:
\begin{align*}
    &\Reg_{\hatf_{m}}(\pi) \leq 4\Reg_{f}(\pi) + 24\frac{\gamma_{m,h}}{\gamma_{m,i}}\bigg(\frac{\gamma_{\msafealg_{h-1},1}}{\gamma_{\msafealg_{h-1},i}}\bigg)^{1/2^{\max(0,m-\msafealg_{h-1}-1)}}\frac{K}{\gamma_{m,i}}, \quad\forall f\in\{\hatf_{m+1},\hatf_{m+1,i}\}.
\end{align*}
\end{lem}
\begin{proof}
Consider $m\in [\msafealg_{h-1}, \msafe_i]$, policy $\pi$, and $f\in\{\hatf_{m+1},\hatf_{m+1,i}\}$.
    \begin{equation}
    \begin{aligned}
    \Reg_{\hatf_{m}}(\pi)
    &\stackrel{(i)}{\leq} 2\Reg(\pi) + 8\frac{\gamma_{m,h}}{\gamma_{m,i}}\bigg(\frac{\gamma_{\msafealg_{h-1},1}}{\gamma_{\msafealg_{h-1},i}}\bigg)^{1/2^{\max(0,m-\msafealg_{h-1}-1)}}\frac{K}{\gamma_{m,i}}\\
    &\stackrel{(ii)}{\leq} 4\Reg_{f}(\pi) +24\frac{\gamma_{m,h}}{\gamma_{m,i}}\bigg(\frac{\gamma_{\msafealg_{h-1},1}}{\gamma_{\msafealg_{h-1},i}}\bigg)^{1/2^{\max(0,m-\msafealg_{h-1}-1)}}\frac{K}{\gamma_{m,i}}.
    \end{aligned}
    \end{equation}
    To show (i) we consider two cases. For the case $m=\msafealg_{h-1}$, (i) follows from \Cref{lem:policy-reg-bound} and the fact that $\frac{\gamma_{m-1,1}}{\gamma_{m-1,i}}\leq \frac{\gamma_{m,1}}{\gamma_{m,i}}$. For the case $m>\msafealg_{h-1}$, (i) follows from \Cref{lem:policy-reg-bound-intermediate} and the fact that $\frac{\gamma_{m-1,h}}{\gamma_{m-1,i}}\leq \frac{\gamma_{m,h}}{\gamma_{m,i}}$. Then (ii) follows from \Cref{lem:policy-reg-bound-intermediate} and the fact that $\frac{\gamma_{\msafealg_{h-1},1}}{\gamma_{\msafealg_{h-1},i}}\geq 1, \frac{K}{\gamma_{m,i}} \geq \frac{K}{\gamma_{m+1,i}}$.
\end{proof}

\begin{restatable}{lem}{mtestthree}
\label{lem:policy-reg-bound-wrt-old-model-test}
Suppose the event $\eventReg$ holds. Consider any two class indices $i,h\in[M']$ such that $h\leq i$. For all policies $\Pi_{0,m+1,i}=\pi\in \{\pi_{\hatf_{m+1}},\pi_{\hatf_{m+1,i}},p_{m+1,1},\dots,p_{m+1,M'} \}$, epochs $m\in [\msafealg_{h-1}, \msafe_i]$, and models $f\in\{\hatf_{m+1},\hatf_{m+1,i}\}$, we have:
\begin{equation}
\begin{aligned}
    &\hatReg_{m+1,\hatf_{m}}(\pi) \leq 4\hatReg_{m+1,f}(\pi) + 34\frac{\gamma_{m,h}}{\gamma_{m,i}}\bigg(\frac{\gamma_{\msafealg_{h-1},1}}{\gamma_{\msafealg_{h-1},i}}\bigg)^{1/2^{\max(0,m-\msafealg_{h-1}-1)}}\frac{K}{\gamma_{m,i}}.
\end{aligned}
\end{equation}
\end{restatable}
\begin{proof}
    Consider any policy $\pi\in \{\pi_{\hatf_{m+1}},\pi_{\hatf_{m+1},i},p_{m+1,1},\dots,p_{m+1,M'} \}$, model $f\in\{\hatf_{m+1},\hatf_{m+1,i}\}$, and epoch $m\in [\msafealg_{h-1}, \msafe_i]$.
    \begin{align*}
    &\hatReg_{m+1,\hatf_{m}}(\pi) - 4\hatReg_{m+1,f}(\pi)\\
    &=(\hatR_{m+1,\hatf_{m}}(\pi_{\hatf_{m}})-\hatR_{m+1,\hatf_{m}}(\pi)) - 4(\hatR_{m+1,f}(\pi_{f})-\hatR_{m+1,f}(\pi))\\
    &=\big(\hatR_{m+1,\hatf_{m}}(\pi_{\hatf_{m}}) - R_{\hatf_{m}}(\pi_{\hatf_{m}})\big)+ \big(R_{\hatf_{m}}(\pi)-\hatR_{m+1,\hatf_{m}}(\pi)\big) + \Reg_{\hatf_{m}}(\pi)\\
    &\quad+ 4\big(R_{f}(\pi_{f})-\hatR_{m+1,f}(\pi_{f})\big) + 4\big(\hatR_{m+1,f}(\pi)-R_{f}(\pi) \big) - 4\Reg_{f}(\pi)\\
    &\stackrel{(i)}{\leq} \frac{10K}{\gamma_{m+1,i}}+\Reg_{\hatf_{m}}(\pi)-4\Reg_{f}(\pi)\\
    &\stackrel{(ii)}{\leq} 34\frac{\gamma_{m,h}}{\gamma_{m,i}}\bigg(\frac{\gamma_{\msafealg_{h-1},1}}{\gamma_{\msafealg_{h-1},i}}\bigg)^{1/2^{\max(0,m-\msafealg_{h-1}-1)}}\frac{K}{\gamma_{m,i}},
    \end{align*}
    where (i) follows from $\eventExpEval$ and (ii) follows from \Cref{lem:policy-reg-bound-wrt-old-model}.
\end{proof}

\begin{lem}
\label{lem:policy-reg-bound-wrt-old-model-test-implication}
Suppose the event $\eventReg$ holds. Consider any two class indices $i,h\in[M']$ such that $h\leq i$. For all policies $\pi\in \{\pi_{\hatf_{m+1}},\pi_{\hatf_{m+1,i}},p_{m+1,1},\dots,p_{m+1,M'} \}$, epochs $m\in [\msafealg_{h-1}, \msafealg_i]$, and models $f\in\{\hatf_{m+1},\hatf_{m+1,i}\}$, we have:
\begin{align*}
    &\Reg_{\hatf_{m}}(\pi) \leq 4\Reg_{f}(\pi) + 44\frac{\gamma_{m,h}}{\gamma_{m,i}}\bigg(\frac{\gamma_{\msafealg_{h-1},1}}{\gamma_{\msafealg_{h-1},i}}\bigg)^{1/2^{\max(0,m-\msafealg_{h-1}-1)}}\frac{K}{\gamma_{m,i}}.
\end{align*}
\end{lem}
\begin{proof}
    Consider any policy $\pi\in \{\pi_{\hatf_{m+1}},\pi_{\hatf_{m+1},i},p_{m+1,1},\dots,p_{m+1,M'} \}$, model $f\in\{\hatf_{m+1},\hatf_{m+1,i}\}$, and epoch $m\in [\msafealg_{h-1}, \msafealg_i]$.
    \begin{align*}
    &\Reg_{\hatf_{m}}(\pi) - 4\Reg_{f}(\pi)\\
    &=(R_{\hatf_{m}}(\pi_{\hatf_{m}})-R_{\hatf_{m}}(\pi)) - 4(R_{f}(\pi_{f})-R_{f}(\pi))\\
    &=\big(R_{\hatf_{m}}(\pi_{\hatf_{m}}) - \hatR_{m+1,\hatf_{m}}(\pi_{\hatf_{m}})\big)+ \big(\hatR_{m+1,\hatf_{m}}(\pi)-R_{\hatf_{m}}(\pi)\big) + \hatReg_{m+1,\hatf_{m}}(\pi)\\
    &\quad+ 4\big(\hatR_{m+1,f}(\pi_{f})-R_{f}(\pi_{f})\big) + 4\big(R_{f}(\pi)-\hatR_{m+1,f}(\pi) \big) - 4\hatReg_{m+1,f}(\pi)\\
    &\stackrel{(i)}{\leq} \frac{10K}{\gamma_{m+1,i}}+\hatReg_{m+1,\hatf_{m}}(\pi)-4\hatReg_{m+1,f}(\pi)\\
    &\stackrel{(ii)}{\leq} 44\frac{\gamma_{m,h}}{\gamma_{m,i}}\bigg(\frac{\gamma_{\msafealg_{h-1},1}}{\gamma_{\msafealg_{h-1},i}}\bigg)^{1/2^{\max(0,m-\msafealg_{h-1}-1)}}\frac{K}{\gamma_{m,i}},
    \end{align*}
    where (i) follows from $\eventExpEval$ and (ii) follows from \Cref{lem:policy-reg-bound-wrt-old-model-test}.
\end{proof}

\section{INDUCTIVE ARGUMENT BASED ON TESTED GUARANTEES}\label{app:inductive-argument-with-verified-bounds}

In \Cref{app:main-policy-based-test,app:reward-model-agreement}, we developed several verified guarantees on our estimated reward models. In this section, we rely on these guarantees to relate the true regret (with respect to the best policies in different classes) to the regret according to estimated models. Our proof follows by induction and demonstrates the benefits of the self-correction step (holding candidate exploitation parameters fixed by not increasing epoch lengths for a few epochs) in our algorithm.

\textbf{Proof outline:} \Cref{lem:inductive-step-0.1-mhat} is our main inductive step. \Cref{lem:policy-reg-bound-mhat,lem:policy-reg-bound-intermediate-mhat} apply this step in order to relate the true regret (with respect to the best policies in different classes) to the regret according to estimated models. \Cref{lem:policy-reg-bound-intermediate-mhat} in particular demonstrates how holding candidate exploitation parameters fixed for some epochs helps with correcting for the effects of under-exploration on our estimated reward models.

\subsection{Inductive Step Based on Tested Guarantees}\label{sec:inductive-step-based-on-tested-guarantees}

\Cref{lem:inductive-step-0.1-mhat} is our main inductive step that utilizes tested guarantees. We state and prove it in this section.

\begin{lem}
\label{lem:inductive-step-0.1-mhat}
Suppose the event $\eventReg$ defined in \eqref{eq:w-event-Reg} holds. Consider any class indices $i,j\in[M']$ such that $j\geq i$, and consider any epoch $m\in[\msafealg_i]$. Let $\theta_{i,j}:=\frac{\gamma_{m+1,i}}{\gamma_{m+1,j}}$. Suppose there exist constants ($\eta,\Tilde{\eta},\Tilde{\eta}'>0$) such that we have the following for any $f\in\{\hatf_{m+1},\hatf_{m+1,i}\}$:
\begin{equation}
\label{eq:final-inductive-step-hypothesis}
    \begin{aligned}
        &\Reg_{j}(\pi) \leq 2\Reg_{\hatf_{m}}(\pi) + \frac{\eta K}{\gamma_{m,i}}, \quad\forall \pi\in\allPi_j \\
        &\Reg_{\hatf_{m}}(\pi) \leq 2\Reg_{j}(\pi) + \frac{\eta K}{\gamma_{m,i}}, \quad\forall \pi\in\allPi_j\\
        &\Reg_{\hatf_{m+1}}(\pi_{\hat{f}_{m+1,i}}) \leq \frac{\Tilde{\eta} K}{\gamma_{m+1,i}},\\
        & \Reg_{\hatf_{m}}(\pi) \leq 4\Reg_{f}(\pi) + \frac{\Tilde{\eta}' K}{\gamma_{m,i}}, \quad \forall\pi\in \{\pi_{\hatf_{m+1}},\pi_{\hatf_{m+1},i},p_{m+1,1},\dots,p_{m+1,M'} \}.
    \end{aligned}
\end{equation}
We then have that for all polices $\pi\in \allPi_j\cup \{\pi_{\hatf_{m+1}},p_{m+1,1},\dots,p_{m+1,M'} \}$ and models $f\in\{\hatf_{m+1}, \hatf_{m+1,i} \}$:
\begin{equation}
\label{eq:final-inductive-step-result}
    \begin{aligned}
        \Reg_{j}(\pi) &\leq 2\Reg_{f}(\pi) + \frac{\eta' K}{\gamma_{m+1,i}}\\
        \Reg_{f}(\pi) &\leq 2\Reg_{j}(\pi) + \frac{\eta' K}{\gamma_{m+1,i}},
    \end{aligned}
\end{equation}
where $\eta'= 5\alpha'(1+2\theta_{i,j})^2 + 2\Tilde{\eta} + 2(1+2\theta_{i,j})\sqrt{\alpha'(\eta+\Tilde{\eta}')}$ and $\alpha'=\gamma_m/\gamma_{m,i}$.
\end{lem}

\begin{proof}
To begin our proof, we first define a few quantities. Let $\alpha'=\gamma_{m}/\gamma_{m,i}$, and let $\alpha\geq 8\alpha'(1+2\theta_{i,j})$ be a positive constant (will be fixed later in the proof). Let $f$ be any model in $\{\hat{f}_{m+1},\hat{f}_{m+1,i}\}$. Let $\allPi_{0,m+1}=\{\pi_{\hatf_{m+1}},p_{m+1,1},\dots,p_{m+1,M'} \}$. Our proof is broken into two parts.

\textbf{Part 1:} The first part of the proof works towards the first inequality in \eqref{eq:final-inductive-step-result}. We start with bounding the difference between $\Reg_j(\pi)$ and $\Reg_f(\pi)$ for all policies $\pi\in \allPi_j \cup \allPi_{0,m+1}$. 
\begin{equation}
\label{eq:inductive-bound-for-pi-mhat}
    \begin{aligned}
        \pi\in \allPi_j \cup \allPi_{0,m+1},\;\;& \Reg_{j}(\pi) - \Reg_{f}(\pi)\\ 
        =& \Big(R(\pi_j^*) - R(\pi)\Big) - \Big(R_{f}(\pi_{f}) - R_{f}(\pi)\Big) \\
        \stackrel{(i)}{\leq} & \Big(R(\pi_j^*) - R(\pi)\Big) -  \Big(R_{f}(\pi^*_j) - R_{f}(\pi)\Big) \\
        \stackrel{(ii)}{\leq} & |R(\pi^*_j) - R_{f}(\pi^*_j)| + |R(\pi) - R_{f}(\pi))|\\
        \stackrel{(iii)}{\leq} & 2\Bar{C} + \frac{(1+2\theta_{i,j})}{\alpha}\frac{\gamma_m}{\gamma_{m+1,i}}\bigg(\Reg_{\hat{f}_m}(\pi^*_j) + \Reg_{\hat{f}_m}(\pi) \bigg),\\
        \stackrel{(iv)}{\leq} & 2\Bar{C} + \frac{(1+2\theta_{i,j})\alpha'}{\alpha}\frac{\eta K}{\gamma_{m+1,i}} + \frac{(1+2\theta_{i,j})}{\alpha}\frac{\gamma_m}{\gamma_{m+1,i}} \Reg_{\hat{f}_m}(\pi),
    \end{aligned}
\end{equation}
where (i) follows from the definition of $\pi_{f}$, (ii) follows from the triangle inequality, (iii) follows from \Cref{lem:combinedprevious}, and (iv) follows from \eqref{eq:final-inductive-step-hypothesis} and $\Reg_j(\pi^*_j)=0$. For brevity, in (iii), we have defined the quantity:
$$\Bar{C}=\bigg( \frac{1+2\theta_{i,j}}{\alpha} + \frac{(1+2\theta_{i,j})\alpha}{16} + \frac{(2\theta_{i,j}^2+2(1+\theta_{i,j})^2/\alpha + 2\theta_{i,j})\gamma_m}{\gamma_{m+1,i}} + 2\theta_{i,j} \bigg)\frac{K}{\gamma_{m+1,i}},$$
which is the first term from the result of \Cref{lem:combinedprevious}.

\textbf{Part 1 (Case 1: $\pi\in\allPi_j$):} This case only considers policies $\pi\in\allPi_j$, we refine \eqref{eq:inductive-bound-for-pi-mhat} for such policies using \eqref{eq:final-inductive-step-hypothesis}. Combining \eqref{eq:inductive-bound-for-pi-mhat} and \eqref{eq:final-inductive-step-hypothesis}, we get:
\begin{equation}
\label{eq:IH-implication-mhat}
    \begin{aligned}
    &\Reg_{j}(\pi) - \Reg_{f}(\pi) \leq 2\Bar{C}  + \frac{(1+2\theta_{i,j})\alpha'}{\alpha}\frac{\eta K}{\gamma_{m+1,i}} + \frac{(1+2\theta_{i,j})}{\alpha}\frac{\gamma_m}{\gamma_{m+1,i}}\bigg(2\Reg_{j}(\pi) + \frac{\eta K}{\gamma_{m,i}} \bigg)\\
    \implies &\frac{\alpha-2\alpha'(1+2\theta_{i,j})}{\alpha}  \Reg_{j}(\pi) \leq \Reg_{f}(\pi)+ 2\Bar{C} + \frac{2\alpha'\eta K(1+2\theta_{i,j})}{\alpha\gamma_{m+1,i}} \\
    \implies &\Reg_{j}(\pi) \leq \frac{\alpha}{\alpha-2\alpha'(1+2\theta_{i,j})}\bigg(\Reg_{f}(\pi) + 2\Bar{C} + \frac{2\alpha'\eta K(1+2\theta_{i,j})}{\alpha\gamma_{m+1,i}}\bigg)\\
    \implies &\Reg_{j}(\pi) \leq 2\Reg_{f}(\pi) + 4\Bar{C} + \frac{4\alpha'\eta K(1+2\theta_{i,j})}{\alpha\gamma_{m+1,i}},
    \end{aligned}
\end{equation}
where the last implication follows from the fact that $\alpha\geq 4\alpha'(1+2\theta_{i,j})$ and hence $\frac{\alpha}{\alpha-2\alpha'(1+2\theta_{i,j})}\leq 2$. 

\textbf{Part 1 (Case 2: $\pi\in\Pi_{0,m+1}$):} This case only considers policies $\pi\in\Pi_{0,m+1}$, we refine \eqref{eq:inductive-bound-for-pi-mhat} for such policies using \eqref{eq:final-inductive-step-hypothesis}. Combining \eqref{eq:inductive-bound-for-pi-mhat} and \eqref{eq:final-inductive-step-hypothesis}, we get:
\begin{equation}
\label{eq:IH-implication-mhat-case2}
    \begin{aligned}
    &\Reg_{j}(\pi) - \Reg_{f}(\pi) \leq 2\Bar{C}  + \frac{(1+2\theta_{i,j})\alpha'}{\alpha}\frac{\eta K}{\gamma_{m+1,i}} + \frac{(1+2\theta_{i,j})}{\alpha}\frac{\gamma_m}{\gamma_{m+1,i}}\bigg(4\Reg_{f}(\pi) + \frac{\Tilde{\eta}' K}{\gamma_{m,i}} \bigg)\\
    \implies &\Reg_{j}(\pi) \leq \bigg(1 + \frac{4(1+2\theta_{i,j})}{\alpha}\frac{\gamma_m}{\gamma_{m+1,i}} \bigg)\Reg_{f}(\pi)+ 2\Bar{C} + \frac{\alpha'(\eta + \Tilde{\eta}' )K(1+2\theta_{i,j})}{\alpha\gamma_{m+1,i}}\\
    \implies &\Reg_{j}(\pi) \leq 2\Reg_{f}(\pi)+ 2\Bar{C} + \frac{\alpha'(\eta + \Tilde{\eta}' )K(1+2\theta_{i,j})}{\alpha\gamma_{m+1,i}},
    \end{aligned}
\end{equation}
where the last implication follows from the fact that $\alpha\geq 4\alpha'(1+2\theta_{i,j})$.

\textbf{Part 2:} The second part of the proof works towards the second inequality in \eqref{eq:final-inductive-step-result}. We start with bounding the difference between $\Reg_f(\pi)$ and $\Reg_j(\pi)$ for all policies $\pi\in \allPi_j \cup \allPi_{0,m+1}$. 

\begin{equation}
\label{eq:inductive-bound-for-pi-est-mhat}
\begin{aligned}
    \pi\in \allPi_j \cup \allPi_{0,m+1},\;\;&\Reg_{f}(\pi)- \Reg_{j}(\pi) \\
    &= \Big(R_{f}(\pi_{f}) - R_{f}(\pi)\Big) - \Big(R(\pi^*_j) - R(\pi)\Big) \\
    &\stackrel{(i)}{\leq}  \Big(R_{f}(\pi_{f}) - R_{f}(\pi)\Big) - \Big(R(\pi_{\hatf_{m+1,i}}) - R(\pi)\Big) \\
    &\stackrel{(ii)}{\leq} \Reg_f(\pi_{\hatf_{m+1,i}}) +  \Big|R_{f}(\pi_{\hatf_{m+1,i}}) - R(\pi_{\hatf_{m+1,i}})\Big| + \Big|R(\pi) - R_{f}(\pi)\Big|,\\
    &\stackrel{(iii)}{\leq} \frac{\Tilde{\eta}K}{\gamma_{m+1,i}} +  \Big|R_{f}(\pi_{\hatf_{m+1,i}}) - R(\pi_{\hatf_{m+1,i}})\Big| + \Big|R(\pi) - R_{f}(\pi)\Big|\\
    &\stackrel{(iv)}{\leq} \frac{\Tilde{\eta}K}{\gamma_{m+1,i}} +  2\Bar{C} + \frac{(1+2\theta_{i,j})}{\alpha}\frac{\gamma_m}{\gamma_{m+1,i}}\bigg(\Reg_{\hat{f}_m}(\pi_{\hatf_{m+1,i}}) + \Reg_{\hat{f}_m}(\pi) \bigg)\\
    &\stackrel{(v)}{\leq} \frac{\Tilde{\eta}K}{\gamma_{m+1,i}} +  2\Bar{C} + \frac{(1+2\theta_{i,j})\alpha'}{\alpha}\frac{\Tilde{\eta}'K}{\gamma_{m+1,i}} + \frac{(1+2\theta_{i,j})}{\alpha}\frac{\gamma_m}{\gamma_{m+1,i}} \Reg_{\hat{f}_m}(\pi),
\end{aligned}
\end{equation}
where (i) follows from the definition of $\pi^*_j$ and $\pi_{\hat{f}_{m+1,i}}\in\allPi_j$, (ii) follows from triangle inequality, (iii) follows from \eqref{eq:final-inductive-step-hypothesis}, (iv) follows from \Cref{lem:combinedprevious}, and (v) follows from \eqref{eq:final-inductive-step-hypothesis} and $\Reg_{\hatf_{m+1,i}}(\pi_{\hatf_{m+1,i}})=0$. 

\textbf{Part 2 (Case 1 $\pi\in\allPi_j$):} This case only considers policies $\pi\in\allPi_j$, we refine \eqref{eq:inductive-bound-for-pi-est-mhat} for such policies using \eqref{eq:final-inductive-step-hypothesis}. Combining \eqref{eq:inductive-bound-for-pi-est-mhat} and \eqref{eq:final-inductive-step-hypothesis}, we get:
\begin{equation}
\label{eq:inductive-bound-for-pi-est-mhat-case1}
    \begin{aligned}
    &\Reg_f(\pi)\\
    &\leq \frac{\Tilde{\eta}K}{\gamma_{m+1,i}} +  2\Bar{C} + \frac{(1+2\theta_{i,j})\alpha'}{\alpha}\frac{\Tilde{\eta}'K}{\gamma_{m+1,i}} + \frac{(1+2\theta_{i,j})}{\alpha}\frac{\gamma_m}{\gamma_{m+1,i}} \bigg(2\Reg_{j}(\pi)+\frac{\eta K}{\gamma_{m,i}} \bigg) + \Reg_j(\pi) \\
    &\leq \bigg(1+ \frac{2(1+2\theta_{i,j})}{\alpha}\frac{\gamma_m}{\gamma_{m+1,i}} \bigg)\Reg_j(\pi) + 2\Bar{C} + \frac{K}{\gamma_{m+1,i}}\bigg(\Tilde{\eta} + \frac{(1+2\theta_{i,j})\alpha'}{\alpha}(\eta + \Tilde{\eta}') \bigg)\\
    &\leq 2\Reg_j(\pi) + 2\Bar{C} + \frac{K}{\gamma_{m+1,i}}\bigg(\Tilde{\eta} + \frac{(1+2\theta_{i,j})\alpha'}{\alpha}(\eta + \Tilde{\eta}') \bigg),
    \end{aligned}
\end{equation}
where the last inequality follows from the fact that $\alpha\geq 4\alpha'(1+2\theta_{i,j})$.

\textbf{Part 2 (Case 2 $\pi\in\allPi_{0,m+1}$):} This case only considers policies $\pi\in\allPi_{0,m+1}$, we refine \eqref{eq:inductive-bound-for-pi-est-mhat} for such policies using \eqref{eq:final-inductive-step-hypothesis}. Combining \eqref{eq:inductive-bound-for-pi-est-mhat} and \eqref{eq:final-inductive-step-hypothesis}, we get:
\begin{equation}
\label{eq:inductive-bound-for-pi-est-mhat-case2}
    \begin{aligned}
    &\Reg_f(\pi)-\Reg_j(\pi)\leq \frac{\Tilde{\eta}K}{\gamma_{m+1,i}} +  2\Bar{C} + \frac{(1+2\theta_{i,j})\alpha'}{\alpha}\frac{\Tilde{\eta}'K}{\gamma_{m+1,i}} + \frac{(1+2\theta_{i,j})}{\alpha}\frac{\gamma_m}{\gamma_{m+1,i}} \bigg(4\Reg_{f}(\pi)+\frac{\Tilde{\eta}' K}{\gamma_{m,i}} \bigg)  \\
    &\implies \bigg(1- \frac{4(1+2\theta_{i,j})}{\alpha}\frac{\gamma_m}{\gamma_{m+1,i}} \bigg)\Reg_f(\pi) \leq \Reg_j(\pi) + 2\Bar{C} + \frac{K}{\gamma_{m+1,i}}\bigg(\Tilde{\eta} + \frac{2(1+2\theta_{i,j})\alpha'\Tilde{\eta}'}{\alpha} \bigg)\\
    &\implies \Reg_f(\pi) \leq 2\Reg_j(\pi) + 4\Bar{C} + \frac{K}{\gamma_{m+1,i}}\bigg(2\Tilde{\eta} + \frac{4(1+2\theta_{i,j})\alpha'\Tilde{\eta}'}{\alpha} \bigg),
    \end{aligned}
\end{equation}
where the last implication follows from the fact that $\alpha\geq 8\alpha'(1+2\theta_{i,j})$.

Note that, if \eqref{eq:condition-eta'} holds and if $\alpha\geq 8\alpha'(1+2\theta_{i,j})$, then \eqref{eq:IH-implication-mhat},\eqref{eq:IH-implication-mhat-case2},
\eqref{eq:inductive-bound-for-pi-est-mhat-case1} \eqref{eq:inductive-bound-for-pi-est-mhat-case2} imply \eqref{eq:final-inductive-step-result} holds for all policies in $\allPi_j\cup\Pi_{0,m+1}$:
\begin{equation}
\label{eq:condition-eta'}
    \begin{aligned}
        \eta' \geq & 4\bigg( \frac{1+2\theta_{i,j}}{\alpha} + \frac{(1+2\theta_{i,j})\alpha}{16} + \frac{(2\theta_{i,j}^2+2(1+\theta_{i,j})^2/\alpha + 2\theta_{i,j})\gamma_m}{\gamma_{m+1,i}} + 2\theta_{i,j} \bigg) \\
        &+ 2\Tilde{\eta} + \frac{4(1+2\theta_{i,j})\alpha'(\eta+\Tilde{\eta}')}{\alpha}. 
    \end{aligned}
\end{equation}
We now fix our choice of $\alpha$ in a way that ensures \eqref{eq:condition-eta'} and $\alpha\geq 8\alpha'(1+2\theta_{i,j})$ does infact hold. We choose $\alpha = 8\alpha'(1+2\theta_{i,j})+ 4\sqrt{\alpha'(\eta+\Tilde{\eta}')} $.%
Clearly, $\alpha\geq 8\alpha'(1+2\theta_{i,j})$ holds. We will now show that \eqref{eq:condition-eta'} holds.
\begin{equation}
    \begin{aligned}
        &4\bigg( \frac{1+2\theta_{i,j}}{\alpha} + \frac{(1+2\theta_{i,j})\alpha}{16} + \frac{(2\theta_{i,j}^2+2(1+\theta_{i,j})^2/\alpha + 2\theta_{i,j})\gamma_m}{\gamma_{m+1,i}} + 2\theta_{i,j} \bigg)\\ 
        &+ 2\Tilde{\eta} + \frac{4(1+2\theta_{i,j})\alpha'(\eta+\Tilde{\eta}')}{\alpha}\\
        \stackrel{(i)}{\leq} & 4\bigg( \frac{1}{8} + \frac{(1+2\theta_{i,j})\alpha}{16} + 2\theta_{i,j}^2\alpha' + \frac{1+\theta_{i,j}}{4} + 2\theta_{i,j}\alpha'  \bigg) + 2\Tilde{\eta} + (1+2\theta_{i,j})\sqrt{\alpha'(\eta+\Tilde{\eta}')}\\
        \stackrel{(ii)}{=} & 4\bigg( \frac{1}{8} + \frac{(1+2\theta_{i,j})^2}{2} + 2\theta_{i,j}^2\alpha' + \frac{1+\theta_{i,j}}{4} + 2\theta_{i,j}\alpha'  \bigg) + 2\Tilde{\eta} + 2(1+2\theta_{i,j})\sqrt{\alpha'(\eta+\Tilde{\eta}')}\\
        \leq & 5\alpha'(1+2\theta_{i,j})^2 + 2\Tilde{\eta} + 2(1+2\theta_{i,j})\sqrt{\alpha'(\eta+\Tilde{\eta}')} = \eta',
    \end{aligned}
\end{equation}
where (i) follows from $\alpha \geq \max\{8\alpha'(1+2\theta_{i,j}), 4\sqrt{\alpha'(\eta+\Tilde{\eta}')} \}$, (ii) follows from our choice of $\alpha$, and the last inequality follows from simple algebraic manipulations. This completes our proof.

\end{proof}

\subsection{Under-Exploration and Self Correction}
\label{app:self-correction}

\Cref{lem:policy-reg-bound-mhat,lem:policy-reg-bound-intermediate-mhat} apply the inductive step established in \Cref{sec:inductive-step-based-on-tested-guarantees} to relate the true regret (with respect to the best policies in different classes) to the regret according to estimated models. \Cref{lem:policy-reg-bound-intermediate-mhat} in particular demonstrates the self-correction property of $\CVBandit$.

\begin{restatable}[]{lem}{lemboundRegmHat}
\label{lem:policy-reg-bound-mhat}
Suppose the event $\eventReg$ holds. Consider any class indices $i,j\in[M']$ such that $j\geq i$. Let $\theta_{i,j}:=\frac{\gamma_{m+1,i}}{\gamma_{m+1,j}}$. For any policy $\pi\in\Tilde{\Pi}_{j}\cup \Pi_{0,m}$, model $f_m\in\{\hatf_m,\hatf_{m,i}\}$, and epoch $m\leq \msafealg_i+1$, we have:
\begin{align*}
    \Reg_{j}(\pi) \leq 2\Reg_{f_m}(\pi) + \frac{\eta_{i,m} K}{\gamma_{m,i}}\\
    \Reg_{f_m}(\pi) \leq 2\Reg_{j}(\pi) + \frac{\eta_{i,m} K}{\gamma_{m,i}},
\end{align*}
where $\eta_{i,m} = 100(1+2\theta_{i,j})^2(\gamma_{m-1,1}/\gamma_{m-1,i}) $.
\end{restatable}
\begin{proof}
We will prove this by induction. The base case follows from the fact that for any policy $\pi\in\Tilde{\Pi}_{j}\cup \Pi_{0,1}$ and model $f_1\in \{\hatf_1,\hatf_{1,i}\}$, we have:
\begin{equation}
    \begin{aligned}
    \Reg_{j}(\pi) \leq 1 \leq -2 + \eta_1 K/\gamma_{1,i}\\
    \Reg_{f_{1}}(\pi) \leq 1 \leq -2+ \eta_1 K/\gamma_{1,i}.
    \end{aligned}
\end{equation}
For the inductive step, fix some $m\leq \msafealg_i$. Assume for any policy $\pi\in\Tilde{\Pi}_{j}\cup \Pi_{0,m}$  and model $f_m\in \{\hatf_m,\hatf_{m,i}\}$, we have:
\begin{equation}
\label{eq:inductive-assumption-for-induction-with-tests}
    \begin{aligned}
    \Reg_{j}(\pi) \leq 2\Reg_{f_{m}}(\pi) + \frac{\eta_{i,m} K}{\gamma_{m,i}}\\
    \Reg_{f_{m}}(\pi) \leq 2\Reg_{j}(\pi) + \frac{\eta_{i,m} K}{\gamma_{m,i}}.
    \end{aligned}
\end{equation}
Let $f_{m+1}\in \{\hatf_{m+1},\hatf_{m+1,i}\}$. From \Cref{lem:new_mtest_implication} and \Cref{lem:policy-reg-bound-wrt-old-model-test-implication}, we have that \eqref{eq:test-implications-for-first-induction} holds:
\begin{equation}
\label{eq:test-implications-for-first-induction}
    \begin{aligned}
        &\Reg_{\hatf_{m+1}}(\pi_{\hat{f}_{m+1,i}}) \leq \frac{\Tilde{\eta} K}{\gamma_{m+1,i}},\\
        & \Reg_{\hatf_{m}}(\pi) \leq 4\Reg_{f_{m+1}}(\pi) + \frac{\Tilde{\eta}' K}{\gamma_{m,i}}, \quad \forall\pi\in \{\pi_{\hatf_{m+1}},\pi_{\hatf_{m+1},i},p_{m+1,1},\dots,p_{m+1,M'} \},\\
        & \Tilde{\eta} = 28\frac{\gamma_{m,1}}{\gamma_{m,i}},\;\; \Tilde{\eta}' = 44\frac{\gamma_{m,1}}{\gamma_{m,i}}.
    \end{aligned}
\end{equation}
Now from \eqref{eq:inductive-assumption-for-induction-with-tests}, \eqref{eq:test-implications-for-first-induction}, and \Cref{lem:inductive-step-0.1-mhat}, we have that \eqref{eq:inductive-result-for-induction-with-tests} holds:
\begin{equation}
\label{eq:inductive-result-for-induction-with-tests}
    \begin{aligned}
    \forall \pi\in\allPi_j\cup\Pi_{0,m+1},\;\; &\Reg_{j}(\pi) \leq 2\Reg_{f_{m+1}}(\pi) + \frac{\eta'_{i,m+1} K}{\gamma_{m+1,i}},\\
    &\Reg_{f_{m+1}}(\pi) \leq 2\Reg_{j}(\pi) + \frac{\eta'_{i,m+1} K}{\gamma_{m+1,i}},
    \end{aligned}
\end{equation}
where $\eta'_{i,m+1}= 5(\gamma_{m}/\gamma_{m,i})(1+2\theta_{i,j})^2 + 2\Tilde{\eta} + 2(1+2\theta_{i,j})\sqrt{(\gamma_{m}/\gamma_{m,i})(\eta_{i,m}+\Tilde{\eta}')}$. To complete our inductive argument, we only need to argue that $\eta'_{i,m+1}\leq \eta_{i,m+1}$; we will now show this.
\begin{equation}
    \begin{aligned}
    \eta'_{i,m+1}&\leq 5\frac{\gamma_{m,1}}{\gamma_{m,i}}(1+2\theta_{i,j})^2 + 56\frac{\gamma_{m,1}}{\gamma_{m,i}} + 2(1+2\theta_{i,j})\sqrt{\frac{\gamma_{m,1}}{\gamma_{m,i}}(\eta_{i,m}+44\frac{\gamma_{m,1}}{\gamma_{m,i}})}\\
    &\stackrel{(i)}{\leq} 75\frac{\gamma_{m,1}}{\gamma_{m,i}}(1+2\theta_{i,j})^2 + 2(1+2\theta_{i,j})\sqrt{\frac{\gamma_{m,1}}{\gamma_{m,i}}\eta_{i,m}}\\
    &\stackrel{(ii)}{=} 75\frac{\gamma_{m,1}}{\gamma_{m,i}}(1+2\theta_{i,j})^2 + 2(1+2\theta_{i,j})\sqrt{\frac{\gamma_{m,1}}{\gamma_{m,i}} 100(1+2\theta_{i,j})^2 \frac{\gamma_{m-1,1}}{\gamma_{m-1,i}} }\\
    &\stackrel{(iii)}{\leq} 95\frac{\gamma_{m,1}}{\gamma_{m,i}}(1+2\theta_{i,j})^2 \leq \eta_{i,m+1},
    \end{aligned}
\end{equation}
where (i) follows from $\sqrt{a+b}\leq \sqrt{a}+\sqrt{b}$ for any $a,b\geq 0$, (ii) follows from substituting $\eta_{i,m} = 100(1+2\theta_{i,j})^2(\gamma_{m-1,1}/\gamma_{m-1,i}) $, and (iii) follows from $\frac{\gamma_{m-1,1}}{\gamma_{m-1,i}}\leq \frac{\gamma_{m,1}}{\gamma_{m,i}}$. This completes the inductive argument.
\end{proof}

In the following lemma, we derive the self-correction property of our algorithm. That is, after a small number of epochs, we correct for effects of potential past under-exploration. This is evident in the factor of $\gamma_{m-1,1}/\gamma_{m-1,i}$ in the bound of \Cref{lem:policy-reg-bound-mhat}, which is improved to a factor of $\frac{\gamma_{m-1,h}}{\gamma_{m-1,i}}\bigg(\frac{\gamma_{\msafealg_{h-1},1}}{\gamma_{\msafealg_{h-1},i}}\bigg)^{1/2^{m-\msafealg_{h-1}-1}}$ in the bound of \Cref{lem:policy-reg-bound-intermediate-mhat}. Increasing epochs without increasing the candidate exploitation parameters helps reduce the term with the exponent, which converges to a constant within a small number of rounds.

\begin{lem}
\label{lem:policy-reg-bound-intermediate-mhat}
Suppose the event $\eventReg$ holds. Consider any two class indices $i,h\in[M']$ such that $h\leq i$. Let $\theta_{i,j}:=\frac{\gamma_{m+1,i}}{\gamma_{m+1,j}}$. For any policy $\pi\in\Tilde{\Pi}_{j}\cup \Pi_{0,m}$, model $f_m\in\{\hatf_m,\hatf_{m,i}\}$, and epoch $m\in [\msafealg_{h-1}+1, \msafealg_i+1]$, we have:
\begin{align*}
    \Reg_{j}(\pi) \leq 2\Reg_{f_{m}}(\pi) + \frac{\eta_{i,h,m} K}{\gamma_{m,i}},\\
    \Reg_{f_{m}}(\pi) \leq 2\Reg_{j}(\pi) + \frac{\eta_{i,h,m} K}{\gamma_{m,i}},
\end{align*}
where $\eta_{i,h,m}=100(1+2\theta_{i,j})^2\frac{\gamma_{m-1,h}}{\gamma_{m-1,i}}\bigg(\frac{\gamma_{\msafealg_{h-1},1}}{\gamma_{\msafealg_{h-1},i}}\bigg)^{1/2^{m-\msafealg_{h-1}-1}}$. Further for $m\in [\msafealg_{h-1}+\lceil\log_2(\log_2(\gamma_{\msafealg_{h-1},1}/\gamma_{\msafealg_{h-1},i}))\rceil, \msafealg_i+1]$, we have $\eta_{i,h,m}\leq 400(1+2\theta_{i,j})^2\frac{\gamma_{m-1,h}}{\gamma_{m-1,i}}$. 
\end{lem}
\begin{proof}
Consider any two class indices $i,h\in[M']$ such that $h\leq i$. We will prove the required bound by induction. The bound for the base case $m=\msafealg_{h-1}+1$ follows from \Cref{lem:policy-reg-bound-mhat}. Suppose the bound in \Cref{lem:policy-reg-bound-intermediate-mhat} holds for class indices $i,h$ and for some epoch $m\in [\msafealg_{h-1}+1, \msafealg_i]$. Let $f_{m+1}\in\{\hatf_{m+1},\hatf_{m+1,i}\}$. To complete our inductive argument, we will show the bound in \Cref{lem:policy-reg-bound-intermediate-mhat} holds for class indices $i,h$ and epoch $m+1$. Now, from \Cref{lem:new_mtest_implication} and \Cref{lem:policy-reg-bound-wrt-old-model-test-implication} we have \eqref{eq:test-implications-for-second-induction} holds.
\begin{equation}
\label{eq:test-implications-for-second-induction}
    \begin{aligned}
        &\Reg_{\hatf_{m+1}}(\pi_{\hat{f}_{m+1,i}}) \leq \frac{\Tilde{\eta} K}{\gamma_{m+1,i}},\\
        & \Reg_{\hatf_{m}}(\pi) \leq 4\Reg_{f_{m+1}}(\pi) + \frac{\Tilde{\eta}' K}{\gamma_{m,i}}, \quad \forall\pi\in \{\pi_{\hatf_{m+1}},\pi_{\hatf_{m+1},i},p_{m+1,1},\dots,p_{m+1,M'} \},\\
        & \Tilde{\eta} = 28\frac{\gamma_{m,h}}{\gamma_{m,i}}\bigg(\frac{\gamma_{\msafealg_{h-1},1}}{\gamma_{\msafealg_{h-1},i}}\bigg)^{1/2^{m-\msafealg_{h-1}}},\;\; \Tilde{\eta}' = 44\frac{\gamma_{m,h}}{\gamma_{m,i}}\bigg(\frac{\gamma_{\msafealg_{h-1},1}}{\gamma_{\msafealg_{h-1},i}}\bigg)^{1/2^{m-\msafealg_{h-1}-1}}.
    \end{aligned}
\end{equation}
Now from \eqref{eq:test-implications-for-second-induction}, inductive hypothesis (bounds in \Cref{lem:policy-reg-bound-intermediate-mhat} hold for epoch $m$) and \Cref{lem:inductive-step-0.1-mhat}, we have \eqref{eq:inductive-result-for-second-induction-with-tests} holds. \begin{equation}
\label{eq:inductive-result-for-second-induction-with-tests}
    \begin{aligned}
    \forall \pi\in\allPi_j\cup\Pi_{0,m+1},\;\; &\Reg_{j}(\pi) \leq 2\Reg_{f_{m+1}}(\pi) + \frac{\eta'_{i,h,m+1} K}{\gamma_{m+1,i}},\\
    &\Reg_{f_{m+1}}(\pi) \leq 2\Reg_{j}(\pi) + \frac{\eta'_{i,h,m+1} K}{\gamma_{m+1,i}},
    \end{aligned}
\end{equation}
where $\eta'_{i,h,m+1}= 5(\gamma_{m}/\gamma_{m,i})(1+2\theta_{i,j})^2 + 2\Tilde{\eta} + 2(1+2\theta_{i,j})\sqrt{(\gamma_{m}/\gamma_{m,i})(\eta_{i,h,m}+\Tilde{\eta}')}$. Note that since $m\geq\msafealg_{h-1}+1$, we have $\gamma_m\geq \gamma_{m,h}$. To complete our inductive argument, we only need to argue that $\eta'_{i,h,m+1}\leq \eta_{i,h,m+1}$; we will now show this.
\begin{equation}
    \begin{aligned}
    \eta'_{i,h,m+1} &\leq 5\frac{\gamma_{m,h}}{\gamma_{m,i}}(1+2\theta_{i,j})^2 + 56\frac{\gamma_{m,h}}{\gamma_{m,i}}\bigg(\frac{\gamma_{\msafealg_{h-1},1}}{\gamma_{\msafealg_{h-1},i}}\bigg)^{1/2^{m-\msafealg_{h-1}}} \\
    &+ 2(1+2\theta_{i,j})\sqrt{\frac{\gamma_{m,h}}{\gamma_{m,i}}\bigg(\eta_{i,h,m}+44\frac{\gamma_{m,h}}{\gamma_{m,i}}\bigg(\frac{\gamma_{\msafealg_{h-1},1}}{\gamma_{\msafealg_{h-1},i}}\bigg)^{1/2^{m-\msafealg_{h-1}-1}}\bigg)}\\
    &\stackrel{(i)}{\leq} 75(1+2\theta_{i,j})^2\frac{\gamma_{m,h}}{\gamma_{m,i}}\bigg(\frac{\gamma_{\msafealg_{h-1},1}}{\gamma_{\msafealg_{h-1},i}}\bigg)^{1/2^{m-\msafealg_{h-1}}} + 2(1+2\theta_{i,j})\sqrt{\frac{\gamma_{m,h}}{\gamma_{m,i}}\eta_{i,h,m}}\\
    &\stackrel{(ii)}{=} 75(1+2\theta_{i,j})^2\frac{\gamma_{m,h}}{\gamma_{m,i}}\bigg(\frac{\gamma_{\msafealg_{h-1},1}}{\gamma_{\msafealg_{h-1},i}}\bigg)^{1/2^{m-\msafealg_{h-1}}}\\ 
    &+ 2(1+2\theta_{i,j})\sqrt{\frac{\gamma_{m,h}}{\gamma_{m,i}} 100(1+2\theta_{i,j})^2\frac{\gamma_{m-1,h}}{\gamma_{m-1,i}}\bigg(\frac{\gamma_{\msafealg_{h-1},1}}{\gamma_{\msafealg_{h-1},i}}\bigg)^{1/2^{m-\msafealg_{h-1}-1}} }\\
    &\stackrel{(iii)}{\leq} 95(1+2\theta_{i,j})^2\frac{\gamma_{m,h}}{\gamma_{m,i}}\bigg(\frac{\gamma_{\msafealg_{h-1},1}}{\gamma_{\msafealg_{h-1},i}}\bigg)^{1/2^{m-\msafealg_{h-1}}} \leq \eta_{i,h,m+1},
    \end{aligned}
\end{equation}
where (i) follows from $\sqrt{a+b}\leq \sqrt{a}+\sqrt{b}$ for any $a,b\geq 0$, (ii) follows from substituting $\eta_{i,h,m}=100(1+2\theta_{i,j})^2\frac{\gamma_{m-1,h}}{\gamma_{m-1,i}}\bigg(\frac{\gamma_{\msafealg_{h-1},1}}{\gamma_{\msafealg_{h-1},i}}\bigg)^{1/2^{m-\msafealg_{h-1}-1}}$, and (iii) follows from $\frac{\gamma_{m-1,h}}{\gamma_{m-1,i}}\leq \frac{\gamma_{m,h}}{\gamma_{m,i}}$. This completes the inductive argument. Finally for $m\in [\msafealg_{h-1}+\lceil\log_2(\log_2(\gamma_{\msafealg_{h-1},1}/\gamma_{\msafealg_{h-1},i}))\rceil, \msafealg_i+1]$, we have \eqref{eq:self-correction-time}:
\begin{equation}
\label{eq:self-correction-time}
\begin{aligned}
    \eta_{i,h,m}&\leq 100(1+2\theta_{i,j})^2\frac{\gamma_{m-1,h}}{\gamma_{m-1,i}}\bigg(\frac{\gamma_{\msafealg_{h-1},1}}{\gamma_{\msafealg_{h-1},i}}\bigg)^{2/2^{\log_2(\log_2(\gamma_{\msafealg_{h-1},1}/\gamma_{\msafealg_{h-1},i}))}}\\ 
    &\leq 400(1+2\theta_{i,j})^2\frac{\gamma_{m-1,h}}{\gamma_{m-1,i}}.
\end{aligned}
\end{equation}

\end{proof}

\section{BOUNDING TIME TO DETECTION OF MISSPECIFICATION}
\label{app:bounding-time-to-misspecification}

In this section, we bound the number of rounds to determine whether class $\allF_i$ is misspecified. In particular, under \Cref{ass:gradual,ass:opt-value-jump}, we show that that misspecification for class $\allF_i$ is detected before the corresponding policy class bias dominates the corresponding variance. Unlike previous sections, the analysis in this section relies on \Cref{ass:gradual,ass:opt-value-jump}.

\textbf{Proof outline:} \Cref{lem:ass6implies5} first derives a minimum direct method evaluation error for models in $\allF_i$ in terms of $\Delta_i$. \Cref{lem:relate-policy-bias-with-opt-jump} allows us to re-write this error in terms of policy class bias $\beta_i$. The rest of our analysis bounds the number of rounds required to detect this error. \Cref{lem:IH-for-bounding-miss-time-based-on-opt-gap} bounds the length of the epoch where misspecification is detected for class $\allF_i$ in terms of the length of the epoch where misspecification is detected for class $\allF_{i-1}$. \Cref{lem:bound-epoch-terminal-in-terms-of-epoch-length} bounds the last round in an epoch in terms of its epoch length. Hence, \Cref{cor:IH-for-bounding-miss-time-based-on-opt-gap} uses these results to bound the time to detect misspecification for class $\allF_i$.

\begin{lem}
\label{lem:ass6implies5}
Suppose \Cref{ass:opt-value-jump} holds. Consider some $i< j\in[M']$ such that $B_{i}>0$ and $d_j\leq\omega d_{i}$. Further, consider any reward model $f\in \Tilde{\F}_i$. We then have:
\begin{equation}
\label{eq:ass6implies5}
    \exists \pi\in\{\pi_{f},\pi_j^*\},\quad  \Delta_i/2\leq|R(\pi)-R_{f}(\pi)|.
\end{equation}

\end{lem}
\begin{proof}
From \Cref{ass:opt-value-jump}, we have \eqref{eq:assump6} holds.
\begin{align}\label{eq:assump6}
\Delta_{i} := R(\pi_j^*) - R(\pi^*_{i}).
\end{align}
Suppose for contradiction, assume that \eqref{eq:contra} holds.
\begin{equation}
\label{eq:contra}
    \Delta_i/2>|R(\pi)-R_{f}(\pi)|\quad\forall\quad\pi\in\{\pi_{f},\pi_j^*\}.
\end{equation}
We can decompose \eqref{eq:contra} to obtain the following two relations:
\begin{equation}
\label{eq:contra-imp}
    \begin{aligned}
    &R(\pi_{f})+\Delta_i/2> R_{f}(\pi_{f})\\
    &R(\pi^*_j)-\Tilde{\Delta}_i/2< R_{f}(\pi^*_j).
    \end{aligned}
\end{equation}
By the definitions of $\pi_f$, we have $R_{f}(\pi_f)\geq R_{f}(\pi_j^*)$. 
Together with \eqref{eq:contra-imp}, this gives us:
\begin{equation}
\begin{aligned}
    &R(\pi_i^*)+\Delta_i/2 \geq R(\pi_{f})+\Delta_i/2> R_f(\pi_f)\geq R_f(\pi_j^*)> R(\pi_j^*)-\Delta_i/2 \\
    & \implies R(\pi_j^*) - R(\pi_i^*) < \Delta_i.
\end{aligned}
\end{equation}
This contradicts \eqref{eq:assump6}, hence \eqref{eq:contra} must be false. Therefore \eqref{eq:ass6implies5} holds.
\end{proof}

\begin{lem}
\label{lem:relate-policy-bias-with-opt-jump}
    Suppose \Cref{ass:realizability} and \Cref{ass:opt-value-jump} hold. Then for any $i<i^*$, we have $\Delta_i\geq \beta_i/(i^*-i) \geq \beta_i/\log_2(\alld_{i^*})$.
\end{lem}
\begin{proof}
The proof is fairly straightforward:
    \begin{equation}
        \begin{aligned}
            \beta_i=&R(\pi^*_{i^*}) - R(\pi_i) \\ 
            =&\sum_{j=i}^{i^*-1} \Big(R(\pi^*_{j+1}) - R(\pi_j^*) \Big) \leq (i^*-i) \Delta_i.
        \end{aligned}
    \end{equation}
The first equality follows from definition of $\beta_i$ and the last inequality follows from \Cref{ass:opt-value-jump}. Hence, we have $\Delta_i\geq \beta_i/(i^*-i)$. Now to complete the proof, note that $(i^*-i)\leq i^* \leq \log_2(\alld_{i^*})$
\end{proof}

\begin{lem}
\label{lem:IH-for-bounding-miss-time-based-on-opt-gap}
Suppose the events $\eventReg$ and $\eventExpEval$ hold. Suppose also that \Cref{ass:opt-value-jump} holds. Consider any $i\in[M']$ such that $B_i>0$. There exists a constant $C_2$ such that the following holds:
    \begin{equation}
    \label{eq:IH-for-bounding-miss-time-based-on-opt-gap}
    \tau_{\msafealg_i}-\tau_{\msafealg_{i}-1}\leq ( \tau_{\msafealg_{i-1}}-\tau_{\msafealg_{i-1}-1})+C_2\bigg(\frac{K}{\Delta_i^2}\bigg)^{1/\rho}\omega^2\alld_i\ln(6M^3T^2/\delta).%
\end{equation}
\end{lem}
\begin{proof}
We will prove \Cref{lem:IH-for-bounding-miss-time-based-on-opt-gap} via induction. Note that the base case is trivially satisfied by defining $\tau_{\msafealg_l}=0$ for any $l\leq 0$. For our inductive hypothesis, suppose the statement in \Cref{lem:IH-for-bounding-miss-time-based-on-opt-gap} holds for class index $i-1\in[M'-1]$. To complete our inductive argument, we will show that the statement in \Cref{lem:IH-for-bounding-miss-time-based-on-opt-gap} holds for class index $i\in[M']$. 

We split our analysis into two cases, a trivial case and a more involved case. The first case is $\msafealg_i\leq\msafealg_{i-1}+l$, where $l=\log_2\log_2(\gamma_{m,1}/\gamma_{m,i_{m+1}})$. In this case, the algorithm is still undergoing self-correction of the learning rates following detection of misspecification in model class $\msafealg_{i-1}$, and so the epoch lengths are not yet doubling. Therefore, we have that $\tau_{\msafealg_i}-\tau_{\msafealg_{i}-1}=  \tau_{\msafealg_{i-1}}-\tau_{\msafealg_{i-1}-1}$. Hence, the inequality we want to show \eqref{eq:IH-for-bounding-miss-time-based-on-opt-gap} is trivially satisfied. The second case is $\msafealg_i>\msafealg_{i-1}+l$. Let $m=\msafealg_i-1$, and let $j$ be the largest index such that $\Tilde{d}_j\leq \omega\Tilde{d}_i$. By \Cref{ass:opt-value-jump} and \Cref{lem:ass6implies5}, we know that the following \eqref{eq:ass6-implication-on-evaluation-error} holds.
\begin{equation}
\label{eq:ass6-implication-on-evaluation-error}
    \exists\quad\pi\in\{\pi_{\hat{f}_{m+1,i}},\pi_j^*\},\quad \Delta_i/2\leq|R(\pi)-R_{\hat{f}_{m+1,i}}(\pi)|.
\end{equation}
Let $\theta_{i,j}:=\frac{\gamma_{m+1,i}}{\gamma_{m+1,j}}$, and let $\alpha>0$ be a positive constant that we will fix later. Since $m\leq \msafealg_i$, from \Cref{lem:combinedprevious} and \eqref{eq:ass6-implication-on-evaluation-error}, we have \eqref{eq:combine-ass6-evaluation-error-with-evaluation-bound} holds.
\begin{equation}
\label{eq:combine-ass6-evaluation-error-with-evaluation-bound}
    \begin{aligned}
        \frac{\Delta_i}{2} &\leq \bigg( \frac{1+2\theta_{i,j}}{\alpha} + \frac{(1+2\theta_{i,j})\alpha}{16} + \frac{(2\theta_{i,j}^2+2(1+\theta_{i,j})^2/\alpha+2\theta_{i,j})\gamma_m}{\gamma_{m+1,i}} + 2\theta_{i,j} \bigg)\frac{K}{\gamma_{m+1,i}} \\
     &\quad + \frac{(1+2\theta_{i,j})\gamma_m}{\alpha \gamma_{m+1,i}}\max_{\pi\in\{\pi_{\hat{f}_{m+1,i}},\pi_j^*\}}\Reg_{\hatf_m}(\pi).
    \end{aligned}
\end{equation}
We will now bound $\max_{\pi\in\{\pi_{\hat{f}_{m+1,i}},\pi_j^*\}}\Reg_{\hatf_m}(\pi)$. From \Cref{lem:policy-reg-bound-intermediate-mhat}, the fact that $\Reg_j(\pi^*_j)=0=\Reg_{\hatf_{m+1,i}}(\pi_{\hatf_{m+1,i}})$, $m> \msafealg_{i-1}+l$, and $\gamma_{m+1,i}\geq \gamma_{m,i}$ -- we have \eqref{eq:policy-reg-implications-on-good-policies} holds.
\begin{equation}
\label{eq:policy-reg-implications-on-good-policies}
    \begin{aligned}
        &\Reg_{\hatf_m}(\pi^*_j) \leq %
        400(1+2\theta_{i,j})^2\frac{K}{\gamma_{m,i}},\\
        &\Reg_{\hatf_m}(\pi_{\hatf_{m+1,i}})\leq 2\Reg_j(\pi_{\hatf_{m+1,i}}) + 400(1+2\theta_{i,j})^2\frac{K}{\gamma_{m,i}} \leq 1200(1+2\theta_{i,j})^2\frac{K}{\gamma_{m,i}}.
    \end{aligned}
\end{equation}
Hence, combining \eqref{eq:combine-ass6-evaluation-error-with-evaluation-bound} and \eqref{eq:policy-reg-implications-on-good-policies}, we have \eqref{eq:opt-gap-bound} holds.
\begin{equation}
\label{eq:opt-gap-bound}
    \begin{aligned}
        \frac{\Delta_i}{2} &\leq \bigg( \frac{1+2\theta_{i,j}}{\alpha} + \frac{(1+2\theta_{i,j})\alpha}{16} + \frac{(2\theta_{i,j}^2+2(1+\theta_{i,j})^2/\alpha+2\theta_{i,j})\gamma_m}{\gamma_{m+1,i}} + 2\theta_{i,j} \bigg)\frac{K}{\gamma_{m+1,i}} \\
        &\quad + \frac{(1+2\theta_{i,j})\gamma_m}{\alpha \gamma_{m+1,i}}1200(1+2\theta_{i,j})^2\frac{K}{\gamma_{m,i}}.
    \end{aligned}
\end{equation}
We will now simplify \eqref{eq:opt-gap-bound}. Since $m\in(\msafealg_{i-1},\msafealg_i)$, we have $\gamma_m=\gamma_{m,i}$. Also note that $\gamma_{m,i}\leq \gamma_{m+1,i}$. Now by choosing $\alpha=128(1+2\theta_{i,j})$, we get the following simplification of \eqref{eq:opt-gap-bound}:
\begin{equation}
    \begin{aligned}
        &\frac{\Delta_i}{2} \leq \bigg( \frac{1}{128} + 8(1+2\theta_{i,j})^2 + 2\theta_{i,j}^2+(1+\theta_{i,j})/64+4\theta_{i,j} + 10(1+2\theta_{i,j})^2  \bigg)\frac{K}{\gamma_{m,i}} \\
        \implies & \Delta_i\leq 40(1+2\theta_{i,j})^2\frac{K}{\gamma_{m,i}} %
        \leq 360\;\theta_{i,j}^2\sqrt{8C_1 K\bigg(\frac{\alld_i\ln(6M^3T^2/\delta)}{(\tau_{m-1} - \tau_{m-2})/2}\bigg)^{\rho}}\\
        \implies & \tau_{\msafealg_i} - \tau_{\msafealg_{i}-1} \leq 4(\tau_{m-1}-\tau_{m-2}) \leq 8\cdot (8C_1 \cdot 360^2)^{1/\rho} \frac{K^{1/\rho}\theta_{i,j}^{4/\rho}\alld_i\ln(6M^3T^2/\delta)}{\Delta_i^{2/\rho}}.
    \end{aligned}
\end{equation}
By substituting $\theta_{i,j}=\frac{\gamma_{m+1,i}}{\gamma_{m+1,j}}=\sqrt{\frac{d_j^{\rho}}{d_i^{\rho}}}\leq \omega^{\rho/2}$, we completes the proof of the inductive step for the second case. Hence this completes the proof of the inductive argument.
\end{proof}

\begin{lem}
\label{lem:bound-epoch-terminal-in-terms-of-epoch-length}
    For any epoch $m$, we have $\tau_m\leq 2l(\tau_m-\tau_{m-1})$, where $l=\lceil \log_2\log_2(\gamma_{m,1}/\gamma_{m,i_{m}}) \rceil$.
\end{lem}
\begin{proof}
In $\CVBandit$, at any epoch, the length of the next epoch is either equal to the length of the current epoch (if misspecification was recently detected) or two times the length of the current epoch (if it has been a while since misspecification was detected). By definition, within the first $m$ epochs, misspecification is not detected for any class index in $[i_m,M']$. Hence for any epoch upto epoch $m$, at most $l$ consecutive epochs have the same length. Since doubling more frequently would enable larger epoch length, given a bound on the length of epoch $m$, $\tau_m$ is the largest when epoch lengths only double once every $l$ epochs. Hence $\tau_m\leq l\cdot 2(\tau_m-\tau_{m-1})$.

\end{proof}

\begin{cor}
\label{cor:IH-for-bounding-miss-time-based-on-opt-gap}
    There exists a constant $C_2$ such that the following holds with probability at least $1-\delta$. Suppose \Cref{ass:opt-value-jump} holds. Consider any $i\in[M']$ such that $B_i>0$. We then have that:
    \begin{equation}
    \label{eq:cor-IH-for-bounding-miss-time-based-on-opt-gap}
    \begin{aligned}
        \tau_{\msafealg_i} \leq 4lC_2\bigg(\frac{K}{\Delta_i^2}\bigg)^{1/\rho}\omega^2\alld_i\ln(6M^3T^2/\delta),
    \end{aligned}
\end{equation}
where $l=\lceil \log_2\log_2(\gamma_{\msafealg_i,1}/\gamma_{\msafealg_i,i}) \rceil$.
\end{cor}
\begin{proof}
    For any $i\in[M']$, we have \eqref{eq:telescoping-IH-for-bounding-miss-time-based-on-opt-gap} holds.
    \begin{equation}
    \label{eq:telescoping-IH-for-bounding-miss-time-based-on-opt-gap}
    \begin{aligned}
    & \tau_{\msafealg_i} - \tau_{\msafealg_{i}-1}\\
    \stackrel{(i)}{=}&\sum_{h=1}^i((\tau_{\msafealg_h}-\tau_{\msafealg_{h}-1})- ( \tau_{\msafealg_{h-1}}-\tau_{\msafealg_{h-1}-1}))\\
    \stackrel{(ii)}{\leq} &\sum_{h=1}^i%
    C_2\bigg(\frac{K}{\Delta_h^2}\bigg)^{1/\rho}\omega^2\alld_h\ln(6M^3T^2/\delta) \\
    \stackrel{(ii)}{\leq} & C_2\bigg(\frac{K}{\Delta_i^2}\bigg)^{1/\rho}\omega^2\alld_i\ln(6M^3T^2/\delta)\sum_{h=1}^i\frac{1}{2^{(i-h)}} \leq 2C_2\bigg(\frac{K}{\Delta_i^2}\bigg)^{1/\rho}\omega^2\alld_i\ln(6M^3T^2/\delta),
    \end{aligned}
\end{equation}
where (i) follows from the fact that $\tau_{\msafealg_h}=0$ for all $h\leq 0$, (ii) follows from \Cref{lem:IH-for-bounding-miss-time-based-on-opt-gap}, and (iii) follows from the fact that $\alld_{h+1}\geq 2 \alld_h$ by construction and that $\Delta_{h+1}\leq \Delta_{h}$ (\Cref{ass:opt-value-jump}). The result now follows by combining \eqref{eq:telescoping-IH-for-bounding-miss-time-based-on-opt-gap} and \Cref{lem:bound-epoch-terminal-in-terms-of-epoch-length}.

\end{proof}

\section{FINAL REGRET GUARANTEES}\label{app:final-regret-guarantees}
In this section, we derive our final regret bounds by utilizing the analysis in \Cref{lem:policy-reg-bound-intermediate-mhat} and \Cref{cor:IH-for-bounding-miss-time-based-on-opt-gap}. \Cref{lem:policy-reg-bound-intermediate-mhat} allows us to bound exploration regret until $\msafealg_i$ (the epoch where misspecification is detected with respect to class $\allF_i$). \Cref{cor:IH-for-bounding-miss-time-based-on-opt-gap} bounds the time to detect misspecfication for various classes.

\thmmain*

\begin{proof}
From \Cref{lem:high-prob-eventReg} and \Cref{lem:high-prob-eventPolicyVal}, we have that both $\eventReg$ and $\eventMT$ hold with probability at least $1-\delta$.  We now bound the expected cumulative regret up to round $T$ while assuming that this high-probability event holds. 

Let $\theta_{i,j}=(\alld_j/\alld_i)^{\rho/2}$ and let $m'\geq 1$ be the first epoch after detecting misspecification with respect to class $\allF_i$ when we are guaranteed to self-correct (see \Cref{lem:policy-reg-bound-intermediate-mhat}) for possibly under-exploring with respect to the class $i$. That is, if $i=1$ let $m'=1$ and if $i>1$ let $m'=\msafealg_{i-1}+\lceil\log_2(\log_2(\gamma_{\msafealg_{i-1},1}/\gamma_{\msafealg_{i-1},i}))\rceil)$. Note that for $i>1$, $\gamma_{\msafealg_{i-1},1}/\gamma_{\msafealg_{i-1},i}=(\alld_i/\alld_1)^{\rho/2}\leq \sqrt{\alld_i}$. Hence, $\tau_{m'}\leq \max(\tau_1,\tau_{\msafealg_{i-1}}\log_2(\alld_i))$.

Now, from \Cref{lem:policy-reg-bound-intermediate-mhat}, we have that:
\begin{equation}
\label{eq:bound-expected-regret}
\begin{aligned}
    &\CReg_T := \sum_{t=1}^T\Reg_{f^*}(p_{m(t)}) \stackrel{(i)}{=} \beta_jT + \sum_{t=1}^T\Reg_{j}(p_{m(t)}) \leq \beta_jT + \tau_{m'} + \sum_{t=\tau_{m'}+1}^T\Reg_{j}(p_{m(t)})\\
    &\stackrel{(ii)}{\leq} \beta_jT + \tau_{m'} + \sum_{t=\tau_{m'}+1}^T\bigg(2\Reg_{\hatf_{m(t)}}(p_{m(t)}) +\frac{400(1+2\theta_{i,j})^2 K}{\gamma_{m(t),i}} \bigg)\\
    &\stackrel{(iii)}{\leq} \beta_jT + \tau_{m'} + \sum_{t=\tau_{m'}+1}^T\bigg(\frac{2K}{\gamma_{m(t),j}} +\frac{3600\theta_{i,j}^2 K}{\gamma_{m(t),i}} \bigg) \stackrel{(iv)}{\leq} \beta_jT + \tau_{m'} + \sum_{t=\tau_{m'}+1}^T\frac{3602\theta_{i,j} K}{\gamma_{m(t),j}}\\
    &\stackrel{(v)}{\leq} \beta_jT + \tau_{m'} + \sum_{t=\tau_{m'}+1}^T 3602\theta_{i,j}\sqrt{8KC_1} \bigg(\frac{\alld_j\ln(6M^3T^2/\delta)}{(\tau_{m(t)-1}-\tau_{m(t)-2})/2}\bigg)^{\rho/2} \\
    &\stackrel{(vi)}{\leq} \beta_jT + \tau_1 + \tau_{\msafealg_{i-1}}\log_2(\alld_i) + (3602\cdot 2^{\rho}\sqrt{8 C_1})\cdot \theta_{i,j}\sqrt{K}(\alld_j\ln(6M^3T^2/\delta))^{\rho/2} \sum_{m=m'+1}^{m(T)}  \frac{\tau_{m(t)}-\tau_{m(t)-1}}{(\tau_{m(t)}-\tau_{m(t)-1})^{\rho/2}} \\
    &\stackrel{(vii)}{\leq} \beta_jT + \tau_{1} + 4\log_2\log_2(\alld_{i-1})  C_2\bigg(\frac{K}{\Delta_{i-1}^2}\bigg)^{1/\rho}\omega^2\alld_{i-1}\ln(6M^3T^2/\delta)\log_2(\alld_i)\\ 
    &\;\;\;+ (3602\cdot 2^{\rho}\sqrt{8 C_1})\cdot \theta_{i,j}\sqrt{K}(\alld_j\ln(6M^3T^2/\delta))^{\rho/2} T^{1-\rho/2} \log_2\log_2(\alld_{j}) \log_2T,
\end{aligned}
\end{equation}
where (i) follows from $\beta_j:=R(\pi_{f^*})-R(\pi^*_j)$; (ii) follows from \Cref{lem:policy-reg-bound-intermediate-mhat}; (iii) follows from \Cref{lem:QmRegEst}, the fact that $\gamma_m\geq \gamma_{m,j}$ (misspecification is not detected for class $\allF_j$), and $(1+2\theta_{i,j})^2\leq 9\theta_{i,j}^2$; (iv) follows from $\theta_{i,j}=\sqrt{\gamma_{m(t),i}/\gamma_{m(t),j}}$; (v) follows from our choice of $\gamma_{m(t),j}$; (vi) follows from $\tau_{m'}\leq \max(\tau_1,\tau_{\msafealg_{i-1}}\log_2(\alld_i))$ and length of epoch $m(t)$ is at most double the size of length for epoch $m(t)-1$; (vii) follows from the bound on $\tau_{\msafealg_{i-1}}$ from \Cref{cor:IH-for-bounding-miss-time-based-on-opt-gap}, the fact that $(\tau_{m(t)}-\tau_{m(t)-1})^{1-\rho/2}\leq T^{1-\rho/2}$ for any $t\leq T$, and the fact that $m(T)\leq \log_2\log_2(\alld_{j}) \log_2T$ (since misspecification is not detected for $\allF_j$ and hence fraction of non-doubling rounds is at most $\log_2\log_2(\alld_{j})$. Finally, the regret guarantee follows from additionally noting that $\Delta_{i-1}\geq \beta_{i-1}/\log_2(\alld_{i^*})$ (\Cref{lem:relate-policy-bias-with-opt-jump}). 

Also, since misspecification with respect to class $\allF_j$ is not detected until epoch $\msafe_j$ (see \Cref{lem:explicit-validation}, \Cref{lem:new_mtest}, and \Cref{lem:policy-reg-bound-wrt-old-model-test}), we know misspecification is not detected for at least $\Omega(\alld_j/B_j^{1/\rho})$ rounds.

\end{proof}

\section{ADDITIONAL DETAILS}
\label{app:additional-details}

\subsection{Constructing An Estimation Oracle}
\label{app:construct-estimation-oracle}

For completeness, we outline one of many approaches to construct an oracle that achieves the ``fast rates'' of \Cref{ass:model-selection}. Consider a sequence of classes $\F_1,\F_2,\dots,\F_i$ with VC subgraph dimensions of $d_1,d_2,\dots,d_i$ respectively. Consider a probability kernel $p$ and a natural number $n$. Consider $n$ independently and identically drawn samples from the distribution $D(p)$. Let $\hatf_j$ be an estimator in $\F_j$ that minimizes empirical squared error loss over the first $\lceil n/2\rceil$ samples. For any $\zeta\in(0,1)$, from fairly standard arguments based on local Rademacher complexities \citep[see Theorem 5.2 and example 3 in chapter 5 of ][]{koltchinskii2011oracle}, with probability $1-\zeta/(2i)$ we have:
\begin{equation}
\label{eq:model-estimation-rate}
\begin{aligned}
    \E_{x\sim D_{\Xscript}}\E_{a\sim p(\cdot|x)}[ (\hatf_j(x, a) - f^*(x,a))^2 ] \leq (1+\epsilon)b_j(p) + \ordO\bigg(\frac{d_j\ln(n i/\zeta)}{n}\bigg),
\end{aligned}
\end{equation}
where $\epsilon > 0$ is any fixed constant.\footnote{Note that $\epsilon$ is zero when $\F_j$'s are convex or well-specified.} Now let $\hatf$ be an estimator in the set $\{\hatf_1,\hatf_2,\dots,\hatf_i\}$ that minimizes empirical squared error loss over the remaining $\lfloor n/2\rfloor$ samples. Again from using the same arguments based on localization \citep[e.g.][]{mitchell2009general,koltchinskii2011oracle}, with probability $1-\zeta/2$ we have:
\begin{equation}
\label{eq:aggregation-rate}
\begin{aligned}
    &\E_{x\sim D_{\Xscript}}\E_{a\sim p(\cdot|x)}[ (\hatf(x, a) - f^*(x,a))^2 ]\\
    &\leq (1+\epsilon')\min_{j\in[i]} \E_{x\sim D_{\Xscript}}\E_{a\sim p(\cdot|x)} [(f_j(x,a)-f^*(x,a))^2] + \ordO\bigg(\frac{\ln(i/\zeta)}{n}\bigg),
\end{aligned}
\end{equation}
where $\epsilon' > 0$ is any fixed constant. By combining \eqref{eq:model-estimation-rate} and \eqref{eq:aggregation-rate}, with probability $1-\zeta$, we have:
\begin{equation}
\label{eq:combined-estimation-rate}
\begin{aligned}
    \E_{x\sim D_{\Xscript}}\E_{a\sim p(\cdot|x)}[ (\hatf(x, a) - f^*(x,a))^2 ] \leq (1+\epsilon)(1+\epsilon')b_j(p) + \ordO\bigg(\frac{d_j\ln(n i/\zeta)}{n}\bigg).
\end{aligned}
\end{equation}
This completes our outline for the construction of an oracle that satisfies \Cref{ass:model-selection}. The approach described here is based on using empirical risk minimization on training and validation sets. Other approaches one could use include aggregation algorithms \citep[see][ and references therein]{lecue2014optimal}, penalized regression \citep[see relevant chapters in ][]{koltchinskii2011oracle,wainwright2019high}, cross validation, etc.

\subsection{Constructing an Implementation of a Misspecification Test Oracle}
\label{sec:constructing-misspecification-oracle}

\Cref{def:MTest} describes a computational oracle to test/verify several inequalities. The test relies on several parameters, we can search over $\alpha>0$ via single variable optimization methods and search over $i,j\in[M']$ and $f\in\{\hatf_{m+1},\hatf_{m+1,i}\}$ via enumeration. The number of policies in $\Pi_{0,m+1,i}$ are few and the corresponding inequalities can be easily verified. Hence, we primarily need to argue that the inequalities corresponding to $\allPi_j$ in the ``policy-based misspecification test" can be verified computationally.

For any choice of $\alpha>0,i,j\in[M'],f\in\{\hatf_{m+1},\hatf_{m+1,i} \}$, we restate the ``policy-based misspecification test" that is used at the end of epoch $m$ and argue how this test can be verified via two calls to a cost sensitive classification solver. First, let us restate the test as a maximization problem for a given set of parameters (here $\lambda_{i,j,\alpha}$ serves as a short-hand for the policy independent terms):
\begin{equation}
\label{eq:misspecification-test-restated}
    \begin{aligned}
    &\max_{\pi\in\Pi_j}{|\hatR_{m+1,f}(\pi)-\hatR_{m+1}(\pi)|}  - \frac{(1+\theta_{i,j})\gamma_m}{\alpha \gamma_{m+1,i}}\hatReg_{m+1,\hatf_m}(\pi)\\
    &\quad\leq \bigg( \frac{1+\theta_{i,j}}{\alpha} + \frac{(1+\theta_{i,j})\alpha}{16} + \frac{(2\theta_{i,j}^2+(1+\theta_{i,j})^2/\alpha)\gamma_m}{\gamma_{m+1,i}} + \theta_{i,j} \bigg)\frac{K}{\gamma_{m+1,i}} =: \lambda_{i,j,\alpha}
    \end{aligned}
\end{equation}
We are interested in calculating the value of the maximization problem in \eqref{eq:misspecification-test-restated}. To calculate this maximum, we need to fix our estimators. Let $\hatR_{m+1,f}(\pi):=\frac{1}{|S_{m,\ho}|}\sum_{t\in S_{m,\ho}}f(x_t,\pi(x_t))=\frac{1}{|S_{m,\ho}|}\sum_{t\in S_{m,\ho}}\E_{a\sim \pi(\cdot|x_t)} f(x_t,a)$ for any policy $\pi$ and reward model $f$, which is the only obvious estimator we could think off for $R_f(\pi)$. Also let us use IPS estimaton for policy evaluation (the same argument works for DR), $\hatR_{m+1}(\pi):=\frac{1}{|S_{m,\ho}|}\sum_{t\in S_{m,\ho}}\frac{\pi(a_t|x_t)r_t(a_t)}{p_m(a_t|x_t)}$. \footnote{Up to constant factors, IPS estimators give us the best rates in \Cref{ass:policy-val-evaluation} with finite classes. These estimators are also used in several contextual bandit papers \citep[e.g.,][]{agarwal2014taming}.} Note that the value of the maximization problem in \eqref{eq:misspecification-test-restated} is equal to $\max(L_1,L_2)$, where $\{L_i|i\in[2]\}$ are defined as follows.
\begin{equation}
\label{eq:misspecification-test-decomposed}
    \begin{aligned}
             &L_1:=\max_{\pi\in\Pi_j}{\hatR_{m+1,f}(\pi)-\hatR_{m+1}(\pi)}  - \frac{(1+\theta_{i,j})\gamma_m}{\alpha \gamma_{m+1,i}}\hatReg_{m+1,\hatf_m}(\pi)\\ 
             &L_2:=\max_{\pi\in\Pi_j}{\hatR_{m+1}(\pi)-\hatR_{m+1,f}(\pi)}  - \frac{(1+\theta_{i,j})\gamma_m}{\alpha \gamma_{m+1,i}}\hatReg_{m+1,\hatf_m}(\pi).
    \end{aligned}
\end{equation}
Substituting value of these estimators for $L_1$ and $L_2$, we get.
\begin{equation}
\label{eq:misspecification-test-decomposed-substituted}
    \begin{aligned}
             &L_1=\max_{\pi\in\Pi_j} \sum_{t\in S_{m,\ho}}\frac{1}{|S_{m,\ho}|}\bigg(f(x_t,\pi(x_t))-\frac{\pi(a_t|x_t)r_t(a_t)}{p_m(a_t|x_t)} - \frac{(1+\theta_{i,j})\gamma_m}{\alpha \gamma_{m+1,i}}(\hatf_{m}(x_t,\pi_{\hatf_{m}}(x_t))-\hatf_{m}(x_t,\pi(x_t)))\bigg)\\ 
             &L_2=\max_{\pi\in\Pi_j} \sum_{t\in S_{m,\ho}}\frac{1}{|S_{m,\ho}|}\bigg(\frac{\pi(a_t|x_t)r_t(a_t)}{p_m(a_t|x_t)}-f(x_t,\pi(x_t)) - \frac{(1+\theta_{i,j})\gamma_m}{\alpha \gamma_{m+1,i}}(\hatf_{m}(x_t,\pi_{\hatf_{m}}(x_t))-\hatf_{m}(x_t,\pi(x_t)))\bigg)\\ 
    \end{aligned}
\end{equation}
Clearly, both $L_1$ and $L_2$ are cost-sensitive classification problems \citep[see][for problem definition]{krishnamurthy2017active}.\footnote{In both, we need to find a policy (classifier) that maps contexts to arms (classes), incurring a score (cost) for each decision such that the total score (cost) is maximized (minimized).} Hence we propose an approach to implement \Cref{def:MTest}.

\subsection{General Estimation Rates}
\label{app:general-estimation-rates}

Recall that $\CVBandit$ uses estimation rates $\xi_i$ defined in \Cref{ass:model-selection}. Apart from \Cref{app:bounding-time-to-misspecification} (bounding time to detect misspecification), our analysis allows for more flexible rates and does not rely on \Cref{ass:gradual,ass:opt-value-jump}. Hence, $\CVBandit$ can be used with more general rates and settings. In particular, we weaken the need for $\xi_i$'s to share the same rate in $n$. We now describe the rates that allows for the rest of our analysis to go through (except \Cref{app:bounding-time-to-misspecification}).

These more general rates can be described by two fairly benign conditions. First, we require $\xi_i$ to be a non-increasing function of $n$. In particular, we require:\footnote{We require the first condition to ensure that $\gamma_{m,i}$ is non-decreasing in $m$.}
\begin{equation}
\label{eq:xi-first-condition}
    \begin{aligned}
        \text{For all $i\in[M']$ and $\zeta\in(0,1)$, } \text{$\xi_i(n, \zeta)$ is non-increasing in $n$.}
    \end{aligned}
\end{equation}
The second condition helps us simplify notation. At a high-level, it requires larger classes indices to correspond to more complex classes and have slower estimation rates.\footnote{We use the second condition to ensure that $\gamma_{m,j}/\gamma_{m,i}$ is greater than or equal to one and is non-decreasing in $m$ for $j\leq i$. We only require this condition to simplify notation and our results can easily be generalized.}
\begin{equation}
\label{eq:xi-second-condition}
    \begin{aligned}
        \text{For all $i\in[M']$ and $\zeta\in(0,1)$, } \frac{\xi_i(n,\zeta)}{\xi_{i-1}(n,\zeta)} \text{ is non-increasing in $n$ and is $\geq 1$,} %
    \end{aligned}
\end{equation}
where we define $\xi_0(n,\zeta):=\ln(1/\zeta)/n$, which is the estimation rate for estimating the mean of a one-dimensional bounded random variable.\footnote{In general, estimation rates are never faster than $\xi_0$. So, this is not a strong condition to have and helps simplify notation when stating guarantees for some misspecification tests.}

\end{document}